\documentclass{article}

\usepackage[preprint]{neurips_2026}

\usepackage[utf8]{inputenc} 
\usepackage[T1]{fontenc}    
\usepackage{hyperref}       
\usepackage{url}            
\usepackage{booktabs}       
\usepackage{amsfonts}       
\usepackage{nicefrac}       
\usepackage{microtype}      
\usepackage{xcolor}         

\usepackage{amssymb}
\usepackage{amsmath}
\usepackage{mathtools}
\usepackage{amsthm}

\usepackage{algorithm}
\usepackage{algpseudocode}
\usepackage{multirow}

\usepackage{enumitem}

\theoremstyle{plain}
\newtheorem{theorem}{Theorem}[section]
\newtheorem{proposition}[theorem]{Proposition}
\newtheorem{lemma}[theorem]{Lemma}

\theoremstyle{definition}

\newtheorem{assumption}[theorem]{Assumption}
\theoremstyle{remark}
\newtheorem{remark}[theorem]{Remark}

\DeclareMathOperator*{\argmin}{arg\,min}

\title{Provably Learning Diffusion Models under the Manifold Hypothesis: Collapse and Refine}

\author{%
  Wei Huang \\
  RIKEN AIP \& The Institute of Statistical Mathematics \\
  \texttt{wei.huang.vr@riken.jp} \\
  \And
  Andi Han \\
  University of Sydney \\
  \texttt{andi.han@sydney.edu.au} \\
  \AND
  Mingyuan Bai \\
  Agency for Science, Technology and Research \& The Institute of Statistical Mathematics \\
  \texttt{Bai\_Mingyuan\_from.Riken@a-star.edu.sg} \\
  \And
  Huanjian Zhou \\
  The University of Tokyo\\
   \texttt{zhou-huanjian185@g.ecc.u-tokyo.ac.jp}
  \And
   Qixin Zhang \\
  Nanyang Technological University \\ \texttt{qixin.zhang@ntu.edu.sg}\\
 \And
   Taiji Suzuki \\
  The University of Tokyo \& RIKEN AIP\\
  \texttt{taiji@mist.i.u-tokyo.ac.jp} \\
  \And
   Kenji Fukumizu \\
 The Institute of Statistical Mathematics \\
  \texttt{fukumizu@ism.ac.jp} \\
}

\begin{document}

\maketitle

\begin{abstract}
Diffusion models generate high-dimensional data with remarkable quality, yet how their training efficiently learns the score function, bypassing the curse of dimensionality when data is supported on low-dimensional manifolds, remains theoretically unexplained. We identify a collapse-and-refine mechanism driven by the geometry of the score function itself: at small noise scales, the diverging singularity of the score drives a rapid dimensional collapse of the induced denoising map onto the data manifold projection; at moderate noise scales, training refines the intrinsic density on the learned manifold. We instantiate this principle as Score-induced Latent Diffusion (SiLD), a two-stage framework in which both manifold learning and density estimation emerge from a single denoising score matching objective, replacing the heuristic KL regularization of VAE-based latent diffusion models. We prove that the resulting sample complexity depends on the intrinsic dimension rather than the ambient dimension. Experiments on Stacked MNIST, CelebA variants, and molecular generation benchmarks show that SiLD matches or outperforms VAE-based LDMs in generation quality and consistently improves reconstruction, validating our theoretical predictions.
\end{abstract}

\section{Introduction}

Diffusion models have emerged as a dominant paradigm for generative modeling, demonstrating
remarkable capability in synthesizing high-fidelity samples from complex, high-dimensional data
distributions~\cite{sohl2015deep,ho2020denoising,song2019generative,song2020score}. The
connection between these models and score matching, particularly through the lens of denoising
autoencoders, has been firmly established~\cite{vincent2011connection,ho2020denoising}. Despite
their empirical success, the theoretical foundations enabling efficient learning in
high-dimensional spaces remain a subject of intense inquiry. A central puzzle lies in the
``curse of dimensionality'': theoretically, learning a probability distribution in an ambient
space of dimension $d$ typically requires a sample complexity exponential in
$d$~\cite{wainwright2019high,biroli2024dynamical}. The prevailing resolution to this paradox is the manifold
hypothesis, which posits that real-world data, while embedded in a high-dimensional ambient
space, (approximately) resides on  a low-dimensional manifold of intrinsic dimension $k \ll
d$~\cite{fefferman2016testing,loaiza2024deep}.

Recent theoretical works have leveraged this low-dimensional structure to establish improved
bounds on statistical estimation and sampling
complexity~\cite{de2022convergence,chen2023score,oko2023diffusion,li2026when,tang2024adaptivity,
potaptchik2024linear,azangulov2024convergence}, proving that sample complexity depends on the
intrinsic dimension $k$ rather than $d$ when the score is well-approximated. Notably,
\cite{li2024adapting} and \cite{huang2026denoising} showed that the DDPM sampler
{automatically} adapts to unknown low-dimensional structure, achieving iteration complexity
scaling nearly linearly in $k$ without any prior knowledge of the manifold. From a structural
perspective, \cite{pidstrigach2022score} and \cite{stanczuk2024diffusion} proved that trained
diffusion models detect and encode the data manifold by approximating its normal bundle, while
\cite{farghly2025diffusion} showed that score smoothing implicitly regularizes toward
manifold-adaptive solutions. More recently, \cite{boffi2024shallow,gao2024flow,kumar2026flow}
investigated how diffusion and flow matching models adapt to low-dimensional structures.

However, these analyses predominantly focus on the properties of the converged score
estimator, treating the optimization process abstractly or relying on specific architectural
assumptions such as single-layer networks~\cite{boffi2024shallow}. While recent works have
begun to probe the training dynamics of diffusion models, \cite{shah2023learning} provided
the first provably efficient result linking gradient descent on the DDPM objective to
recovering mixture model parameters, and \cite{wang2025diffusion} proved that optimizing the
diffusion training loss with a low-rank parameterization is equivalent to a subspace clustering
problem; yet these results are confined to restricted model classes and do not characterize the
fine-grained weight evolution of general deep networks. From the broader neural network
optimization perspective, mean-field theory~\cite{mei2018mean,chizat2022mean,suzuki2023convergence}
provides a powerful framework for analyzing two-layer network training dynamics, and
feature learning theory~\cite{damian2022neural,mousavi2022neural,abbe2023sgd} has shown that
gradient-based training discovers low-dimensional relevant subspaces through saddle-to-saddle
dynamics. Yet how these mechanisms manifest specifically in the score matching setting, and
whether the optimization process itself exploits the low-dimensional geometric structure of
the data, remains largely unexplored. This raises a fundamental question that current theory
cannot answer:

\emph{How does a neural network, initialized isotropically, adaptively discover
the low-dimensional data support and efficiently learn the intrinsic distribution density
amidst high-dimensional noise?}

We answer this question by proposing \textit{Score-induced Latent Diffusion (SiLD)}, a theoretically grounded framework that characterizes the gradient descent dynamics of score matching on low-dimensional manifolds. The key insight is that the singularity of the score function at small noise levels naturally induces a \emph{two-stage} learning mechanism: the network first discovers the geometry of the data manifold, and then refines the intrinsic probability density within it. {This key insight naturally induces a novel two-stage training strategy: first learning the manifold with low-noise score matching and then learning the density on manifold.} Both stages are trained under a single DDPM objective, without any auxiliary losses or heuristic regularization; the latent representation is induced by the score function itself rather than imposed by a separate encoder.  Our main contributions are as follows:

\begin{itemize} [leftmargin=*]
    \item \textbf{Convergence Guarantees.} We prove quantitative convergence rates for both stages. In Stage 1, a  mean-field gradient flow analysis shows that the geometric alignment risk decays exponentially fast. In Stage 2, we establish generalization bounds via Random Feature regression on the low-dimensional manifold, proving that the excess risk depends polynomially on the intrinsic dimension $k$ and the sample size $n$, independent of the ambient dimension $d$.
    \item \textbf{End-to-End Sample Complexity.} We establish an end-to-end sampling guarantee. The manifold-regime contribution achieves a Wasserstein-2 rate depending only on the intrinsic dimension. The high-noise contribution, handled by an auxiliary random-feature head, contributes a polynomial-in-$d$ term that is exponentially damped by the integration time. Together, this avoids the curse of ambient dimensionality.
    \item \textbf{Empirical Validation.} We validate our theoretical predictions on Stacked MNIST, CelebA, and molecular generation benchmarks, demonstrating that SiLD matches or outperforms VAE-based latent diffusion models~\cite{rombach2022high} in generation quality, confirming that the score matching objective alone is sufficient to drive both manifold learning and density estimation.
\end{itemize}

\section{Related Work}

\textbf{Statistical Theory of Diffusion on Manifold.} Beyond the convergence and adaptivity results discussed in the introduction, several works
have examined more fine-grained aspects of diffusion models on manifolds.
\cite{benton2023nearly} proved nearly $d$-linear convergence bounds for diffusion models
via stochastic localization, establishing that the number of reverse steps scales nearly
linearly in the intrinsic dimension. Most recently, \cite{chakraborty2026generalization} introduced the
$(p,q)$-Wasserstein dimension and proved the first Wasserstein-$p$ convergence guarantee
for diffusion models under a finite-moment condition only, without compact support,
manifold, or smooth density assumptions, achieving the sharpest known rates to date.
\cite{chandramoorthy2025and} showed that even with inexact score estimates, generated
samples tend to drift along rather than away from the manifold, while
\cite{fukumizu2026flow} proved that OT-CFM dynamics on manifold-supported targets
contracts exponentially in normal directions and remains neutral along tangential
directions. \cite{liu2025improving} identified the score singularity in the normal
direction as a hindrance to sampling accuracy and proposed methods to mitigate it. From an analytical perspective, \cite{george2026asymptotic} derived asymptotically exact learning curves for denoising score matching with random feature networks on manifold data, confirming that sample complexity scales linearly with the intrinsic dimension for linear manifolds while showing that this benefit diminishes for nonlinear manifolds — a limitation that our two-stage decoupling is designed to address. Our work complements this body of literature by shifting focus from the statistical convergence properties to the optimization dynamics.

\textbf{Training Dynamics of Generative Models.}
The theoretical study of how neural networks learn during diffusion training remains substantially less developed than their statistical theory. \cite{han2024neural} leveraged the Neural Tangent Kernel (NTK) \cite{jacot2018neural} to establish the first generalization bounds for gradient-descent-trained networks on the score matching objective, though NTK analyses cannot capture the feature learning dynamics that arise in practice. \cite{shah2023learning} and \cite{wang2025diffusion} showed that gradient descent on the DDPM objective recovers low-dimensional structure, respectively linking it to Gaussian mixture recovery and subspace clustering. \cite{han2024feature} and \cite{li2025understanding} further analyzed feature learning and representation dynamics in diffusion models through low-dimensional data models. Complementary lines of work analyze diffusion training via high-dimensional asymptotics and convex optimization \cite{cui2023high,cui2025precise,zhang2024analyzing,zeno2025diffusion} or establish coarse-to-fine spectral dynamics in sampling \cite{wang2023diffusion,wang2025analytical}. Recently, \cite{bonnaire2025diffusion} identified an implicit dynamical regularization mechanism via two training timescales; our work provides the geometric counterpart, proving that the ``collapse-then-refine'' mechanism is the structural driver behind why memorization is dynamically postponed and showing that the score singularity drives an analogous geometry-before-density hierarchy.

\textbf{Latent Diffusion Models.}
Latent diffusion models (LDMs)~\cite{rombach2022high} achieve state-of-the-art generation quality by operating in a compressed latent space learned by a VAE~\cite{kingma2013auto}, rather than directly in the high-dimensional pixel space. While highly effective in practice, this approach introduces a fundamental tension: the VAE encoder is trained with a heuristic KL regularization term to encourage a well-structured latent space, which is independent of and potentially misaligned with the score matching objective used in the diffusion stage. Several works have attempted to bridge this gap by jointly training the encoder and diffusion model~\cite{vahdat2021score} or by revisiting how the latent itself is trained~\cite{heek2026unified}. Our work, SiLD, offers a principled alternative: both manifold learning and density estimation emerge naturally from the score matching objective at different noise scales, eliminating the need for KL regularization entirely. This provides a theoretical justification for why the latent space induced by the score function is geometrically well-suited for diffusion, and consistently improves reconstruction quality as validated in our experiments.

\section{Score-induced Latent Diffusion Model}

\subsection{Preliminaries}

\textbf{Notation.}
We use $\|\cdot\|$ to denote the Euclidean norm for vectors and the Frobenius norm for
matrices unless otherwise specified. For a matrix $A$, $\|A\|_{\mathrm{op}}$ denotes
its operator norm. We
employ standard asymptotic notations such as $O(\cdot)$, $\Omega(\cdot)$, and $o(\cdot)$. We write $f \asymp g$ if
$f = O(g)$ and $g = O(f)$.


\textbf{Data distribution.}
Let $p_{\mathrm{data}}$ denote the unknown target distribution supported on a
$k$-dimensional compact smooth manifold $\mathcal{M}$ embedded in
$\mathbb{R}^d$, with $k \ll d$. For any point $x$ in the tubular neighborhood of
$\mathcal{M}$, we denote by $\Pi_{\mathcal{M}}(x)$ its projection onto $\mathcal{M}$
and by $d_{\mathcal{M}}(x) = \inf_{z \in \mathcal{M}} \|x - z\|$ its distance to
$\mathcal{M}$. The tangent and normal spaces of $\mathcal{M}$ at a point $z$ are
denoted $T_z\mathcal{M}$ and $N_z\mathcal{M}$, respectively. We denote by $\tau > 0$
the reach of $\mathcal{M}$, i.e., the largest radius such that the projection
$\Pi_{\mathcal{M}}$ is uniquely defined within the tubular neighborhood
${U}_\tau = \{x \in \mathbb{R}^d : d_{\mathcal{M}}(x) < \tau\}$.

\textbf{Diffusion process.}
We consider the Variance-Preserving (VP) forward process~\cite{ho2020denoising}, which
corrupts a clean sample $x_0 \sim p_{\mathrm{data}}$ as $x_t = \sqrt{\bar{\alpha}_t}\, x_0 + \sqrt{1 - \bar{\alpha}_t}\, \varepsilon,
    \quad \varepsilon \sim \mathcal{N}(0, I_d),$
where $\{\bar{\alpha}_t\}_{t \in [0,T]}$ is a monotonically decreasing noise schedule.
We denote the marginal distribution of $x_t$ by $p_t$, and write
$h(t) := 1 - \bar{\alpha}_t$ for the noise variance at time $t$. The score function of
$p_t$ is $s^*(x, t) := \nabla_x \log p_t(x)$.

\textbf{Score matching objective.}
Following~\cite{ho2020denoising,vincent2011connection}, the score function is estimated
by minimizing the denoising score matching (DSM) objective:
$ \mathcal{L}(\theta) = \mathbb{E}_{t, x_0, \varepsilon}\left[
    \left\| s_\theta(x_t, t) +  {\varepsilon}/{\sqrt{h(t)}} \right\|^2
    \right],$ where the expectation is over $t \sim \mathrm{Unif}[0,T]$, $x_0 \sim p_{\mathrm{data}}$,
and $\varepsilon \sim \mathcal{N}(0, I_d)$. It is well established~\cite{vincent2011connection}
that minimizing $\mathcal{L}(\theta)$ is equivalent to minimizing the explicit score
matching error $\mathbb{E}[\|s_\theta(x_t, t) - s^*(x_t, t)\|^2]$ up to a constant.

\subsection{Manifold Hypothesis and Score Singularity}

\textbf{Manifold hypothesis.}
We operate under the manifold hypothesis, which posits that $p_{\mathrm{data}}$ is
supported on a $k$-dimensional compact smooth manifold $\mathcal{M} \subset \mathbb{R}^d$
with $k \ll d$. For any point $x$ in the tubular neighborhood ${U}_\tau$, the
perturbed distribution $p_t$ admits the following decomposition of its score.

\begin{proposition}[Score decomposition~\cite{de2022convergence,li2026when}]
\label{prop:score_decomp}
Assume $p_0 \in C^2(\mathcal{M})$ with $0 < p_{\min} \leq p_0 \leq p_{\max} < \infty$. For $x \in U_{\tau/2}$ and $h(t) \leq c\,\tau^2$ with a sufficiently small universal constant $c \in (0,1)$, the score function $s^*(x,t) := \nabla_x \log p_t(x)$ admits the scale-separated decomposition
\begin{equation}
\label{eq:score-decomposition}
    s^*(x,t) = \underbrace{-\,\frac{x - \Pi_{\mathcal{M}}(x)}{h(t)}}_{\text{(I): }  {O}(h(t)^{-1})}
    + \underbrace{\nabla_x \log p_0\bigl(\Pi_{\mathcal{M}}(x)\bigr)
    + \nabla_x H\bigl(\Pi_{\mathcal{M}}(x),\, x - \Pi_{\mathcal{M}}(x)\bigr)}_{\text{(II): } O(1)}
    +  {O}(h(t)),
\end{equation}
where $H : \{(z,\nu) : z \in \mathcal{M},\, \nu \in N_z \mathcal{M}\} \to \mathbb{R}$ is a smooth function.
\end{proposition}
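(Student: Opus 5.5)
The plan is to write $p_t$ as a Gaussian convolution over the manifold and extract its asymptotics by Laplace's method in tubular coordinates. Under the VP process,
\[
p_t(x)=\int_{\mathcal{M}} (2\pi h)^{-d/2}\exp\!\Bigl(-\tfrac{\|x-\sqrt{\bar\alpha_t}\,z\|^2}{2h}\Bigr)p_0(z)\,d\mathrm{vol}(z),\qquad h=h(t).
\]
First reduce to the variance-exploding form. Since $1-\sqrt{\bar\alpha_t}=O(h)$, rescaling $x\mapsto x/\sqrt{\bar\alpha_t}$ and $\mathcal{M}\mapsto\sqrt{\bar\alpha_t}\mathcal{M}$ changes the score only by $O(h)\cdot O(h^{-1}\|x-\Pi x\|)+O(h)=O(1)$ terms. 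These terms are smooth in $(z,\nu)$ and can be absorbed into $H$, or into the $O(h)$ remainder after careful bookkeeping. So it suffices to treat the kernel $\exp(-\|x-z\|^2/2h)$.

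Fix $x\in U_{\tau/2}$ and write $x=z^*+\nu$ with $z^*=\Pi_{\mathcal{M}}(x)$ and $\nu\in N_{z^*}\mathcal{M}$. Then
\[
\nabla_x\log p_t(x)=-\frac{x-\mathbb{E}[z\mid x]}{h},
\]
where the expectation is under the posterior $\propto e^{-\|x-z\|^2/2h}p_0(z)$ on $\mathcal{M}$. The whole statement therefore reduces to expanding the posterior mean:
\[
\mathbb{E}[z\mid x]=z^*+h\bigl(\nabla\log p_0(z^*)+\nabla H(z^*,\nu)\bigr)+O(h^2).
\]

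To get this expansion, use the reach condition: since $\|\nu\|<\tau/2$, the function $\phi(z)=\|x-z\|^2$ has a unique nondegenerate minimizer $z^*$ on $\mathcal{M}$. Its Hessian there, in normal coordinates $u\in\mathbb{R}^k$ via the exponential map, is
\[
2\bigl(I-\langle \mathrm{II}_{z^*},\nu\rangle\bigr),
\]
which is uniformly positive definite because the second fundamental form satisfies $\|\mathrm{II}\|\le 1/\tau$. Away from a ball of radius $h^{1/2-\epsilon}$ around $z^*$, compactness and the reach give a gap $\phi(z)-\phi(z^*)\ge c>0$ outside a fixed neighbourhood, so that region contributes $e^{-c/h}$ relative error. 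This is where $h\le c\tau^2$ is used.

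Inside the ball, expand $\phi$, the density $p_0$ and the volume element $\sqrt{\det g}$ to third and fourth order in $u$, and apply the standard second-order Laplace expansion (Hörmander/Wong). This gives
\[
\mathbb{E}[u\mid x]=h\,A(\nu)^{-1}\bigl(\nabla_u\log p_0+\nabla_u\log\sqrt{\det g}+\text{cubic-term correction}\bigr)+O(h^2).
\]
Mapping back through the embedding adds the curvature term $\tfrac12\mathrm{II}(u,u)$, whose mean is $O(h)$. The $O(1)$ term in the score is thus the $O(h)$ shift of the posterior mean divided by $h$. One then checks that it equals $\nabla_x[\log p_0(\Pi(x))]$ plus $\nabla_x$ of a smooth function $H(z,\nu)$. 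A natural candidate is
\[
H(z,\nu)=-\tfrac12\log\det\bigl(I-\langle\mathrm{II}_z,\nu\rangle\bigr),
\]
the Jacobian of the tubular coordinates. This identification follows by differentiating $\log$ of the leading-order Laplace approximation
\[
p_t(x)\approx(2\pi h)^{(k-d)/2}\,e^{-\|\nu\|^2/2h}\,p_0(z^*)\,\det(\cdots)^{-1/2}.
\]
The remaining step is justifying that the gradient of the asymptotic expansion equals the asymptotic expansion of the gradient. Do this directly on the posterior-mean formula above rather than differentiating asymptotics.

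\textbf{Main obstacle.} The hardest part is uniform control of the Laplace remainder, giving $O(h)$ in the score (i.e.\ $O(h^2)$ in the mean) uniformly over $x\in U_{\tau/2}$. This needs $p_0\in C^2$ and uniform bounds on the geometry from compactness and the reach. The term (I) is exact by construction, and everything else is bookkeeping of smooth coefficients. If $C^2$ regularity proves insufficient for an $O(h^2)$ remainder in the mean, I would fall back on the result as stated in \cite{de2022convergence,li2026when}, which may assume more smoothness implicitly.
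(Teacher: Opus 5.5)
Your proposal is correct in outline, but it takes a different route from the paper. The paper never looks at the posterior. It rescales $y = x/a_t$, $\sigma^2 = h(t)/a_t^2$, so that $p_t$ becomes a heat-kernel convolution $q_{\sigma^2}$ on $\mathcal{M}$. It then runs Laplace's method in Fermi coordinates on the density itself and obtains $\log p_t(x) = -d_{\mathcal{M}}(x)^2/(2h(t)) + \log p_0(\Pi_{\mathcal{M}}(x)) + H(\Pi_{\mathcal{M}}(x), \nu_x) + C(t) + O(h(t))$, with the explicit $H(z,\nu) = -\tfrac12\langle \nu, z\rangle - \tfrac12\log\det(I_k - B_\nu)$. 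Finally it applies $\nabla_x$ term by term. This gives $H$ in closed form, automatically as a potential. The cost is that it differentiates an $O(h)$ remainder, which the paper does not justify.

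Your Tweedie/posterior-mean expansion controls the score directly and avoids that step. The price is more bookkeeping before the $O(1)$ coefficient can be recognized as a gradient: the cubic term of $\phi$, which encodes how $B_\nu$ varies along $\mathcal{M}$, and the normal shift $\tfrac12\mathbb{E}[\mathrm{II}(u,u)]$. You can combine the two routes. The density-level expansion holds uniformly already under $p_0 \in C^2$. So once your computation shows that $\nabla_x \log p_t + \nabla_x\bigl(d_{\mathcal{M}}^2/(2h(t))\bigr)$ converges uniformly on $U_{\tau/2}$, its limit must be $\nabla_x\bigl[\log p_0(\Pi_{\mathcal{M}}(x)) + H(\Pi_{\mathcal{M}}(x), \nu_x)\bigr]$, because uniform limits commute with distributional derivatives. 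No interchange of asymptotics is then needed.

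Two corrections to your bookkeeping.

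\emph{The VP correction.} It is not of size $O(h)\cdot O(h^{-1}\|x - \Pi_{\mathcal{M}}(x)\|)$. The rescaling moves $x$ by $\tfrac{h}{2}x + O(h^2)$, whose normal component has size $h\|x\|$. After dividing by $h$ you get an $O(1)$ term that survives on $\mathcal{M}$: at $x = z \in \mathcal{M}$ it equals $-\tfrac12$ times the component of $z$ in $N_z\mathcal{M}$. This term is exactly $\nabla_x\bigl[-\tfrac12\langle x - \Pi_{\mathcal{M}}(x), \Pi_{\mathcal{M}}(x)\rangle\bigr]$; to see this, use that $D\Pi_{\mathcal{M}}$ is symmetric and annihilates normal vectors. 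So your final $H$ must contain the paper's VP-shrinkage summand $-\tfrac12\langle \nu, z\rangle$ in addition to the log-determinant.

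\emph{Regularity.} Your worry here is justified, and it is a defect of the statement rather than of your route. With only $p_0 \in C^2$, the uniform score remainder is $o(h^{1/2})$ in general but not $O(h)$. For example, take a flat one-dimensional piece of $\mathcal{M}$ with $\partial_u^2 p_0(u) = \partial_u^2 p_0(0) + \max(u,0)^{\beta}$ for some $0<\beta<1$. Then the tangential error at $x = z^*$ is of exact order $h^{(1+\beta)/2}$. The paper's own proof in fact ends with only an $o(1)$ remainder. Your posterior-mean argument delivers the stated $O(h)$ once $p_0 \in C^3$, or once $\nabla^2 p_0$ is Lipschitz.
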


Term \emph{(I)} is the \emph{normal restoring force}, pointing from $x$ toward its projection on $\mathcal{M}$. Term \emph{(II)} is the \emph{intrinsic-density and geometric correction}: the first summand $\nabla_x \log p_0(\Pi_{\mathcal{M}}(x))$ denotes the Euclidean gradient of the composition $x \mapsto \log p_0(\Pi_{\mathcal{M}}(x))$, which lies in $T_{\Pi_{\mathcal{M}}(x)}\mathcal{M}$ and encodes the intrinsic density gradient; the second summand accounts for the geometry of the embedding. Explicitly, $H$ combines two contributions, $H(z,\nu) = -\tfrac{1}{2}\,\langle \nu, z\rangle
    \;-\; \tfrac{1}{2}\,\log\det \bigl(I_k - B_\nu\bigr)$,
where the first term is a \emph{VP-shrinkage correction} arising from the rescaling $y = x/\sqrt{1-h(t)}$, and the second is an \emph{extrinsic-curvature correction} with $(B_\nu)_{ij} := \langle \nu, \mathrm{I\!I}(e_i, e_j)\rangle$ and $\mathrm{I\!I}$ the second fundamental form of $\mathcal{M}$.

\paragraph{Implications of scale separation.}
Proposition~\ref{prop:score_decomp} reveals a fundamental asymmetry in the learning problem. As $h(t) \to 0$, the normal restoring force dominates the score by a factor of $ {O}(h(t)^{-1})$, creating a strong geometric signal that dwarfs the tangential density information. This \emph{scale separation}~\cite{li2026when} has three consequences. First, learning the geometry of $\mathcal{M}$ is statistically easier than learning the intrinsic density $p_0$: at small noise scales, the DSM loss is dominated by the projection term and provides a strong gradient signal toward approximating $\Pi_{\mathcal{M}}(\cdot)$ before the residual density term becomes relevant. Second, the dominant term~\emph{(I)} is the gradient of a scalar potential, $
    -\,\frac{x - \Pi_{\mathcal{M}}(x)}{h(t)}
    \;=\; -\,\nabla_x  \left(\frac{d_{\mathcal{M}}(x)^2}{2\,h(t)}\right),$
by Federer's identity $\nabla_x \bigl(\tfrac{1}{2}\,d_{\mathcal{M}}^2(x)\bigr) = x - \Pi_{\mathcal{M}}(x)$. This conservative structure motivates a conservative-form architecture (Section~\ref{sec:method}), whose output is structurally constrained to conservative vector fields and therefore aligned with the geometric category of the target score at small noise.

\paragraph{The optimization challenge.}
The same singularity that enables efficient geometry learning poses an obstacle for simultaneous density learning: at small noise scales, the ${O}(h(t)^{-1})$ magnitude of the normal component dominates the DSM loss, drowning out the ${O}(1)$ density signal~\cite{liu2025improving}. A single network trained end-to-end on the DSM objective must therefore resolve two tasks operating at incompatible scales, leading to inefficient optimization. This motivates our \textit{Score-induced Latent Diffusion (SiLD)} framework, introduced in Section~\ref{sec:method}, which decouples the two stages by exploiting the scale separation structure of Proposition~\ref{prop:score_decomp}.

\subsection{Score-induced Latent Diffusion: A Two-Stage Framework}
\label{sec:method}
The scale separation in Proposition~\ref{prop:score_decomp} motivates a principled decoupling of the score learning problem into two sequential stages, each targeting one term in the decomposition~\eqref{eq:score-decomposition}. We call this framework \textit{Score-induced Latent Diffusion (SiLD)} (with generic algorithm presented in Algorithm~\ref{alg:uld}), reflecting that the latent representation is induced by the geometric structure of the score function itself, rather than imposed by an auxiliary objective.
For tractability of the analysis, Stage 1 uses a two-layer conservative-form network whose gradient-of-potential structure matches the conservative target score at small noise, and Stage 2 a random-feature network operating on the Stage 1 projection.




\textbf{Stage 1: Geometric Alignment.}
At small noise scales $h(t_1) \ll 1$, the DSM loss is dominated by the normal restoring force \emph{(I)}. Since this dominant term is a conservative vector field, we constrain our network to the same class by parametrizing it as the gradient of a scalar potential. Concretely, we train a \emph{conservative-form} two-layer network:
\begin{equation}
    f_1(x; \theta) \;=\; \frac{1}{h(t_1)} \Bigl( W\,\mathrm{diag}(a)\,\sigma(W^\top x + b) \;-\; x \Bigr),
    \label{eq:stage1_net}
\end{equation}
with parameters $\theta = (W, a, b)$, where $W = [w_1, \ldots, w_m] \in \mathbb{R}^{d \times m}$, $a \in \mathbb{R}^{m}$, $b \in \mathbb{R}^{m}$, and $\sigma(\cdot)$ is a nonlinear activation. Per-neuron, $f_1(x;\theta) = \tfrac{1}{h(t_1)}\bigl(\sum_{j=1}^{m} a_j\, w_j\, \sigma(w_j^\top x + b_j) - x\bigr)$: each neuron contributes a vector along its input direction $w_j$, scaled by $a_j\sigma(w_j^\top x + b_j)$. This direction-parallel structure is precisely what makes $f_1$ a conservative vector field: as we show in Lemma~\ref{lem:structural-alignment}, $f_1 = -\nabla_x \Phi_{\mathrm{net}}$ for an explicit scalar potential $\Phi_{\mathrm{net}}$, matching the category of the target score at small noise. Minimizing the DSM objective drives the induced projection map to align with $\Pi_{\mathcal{M}}(x)$:
\begin{equation}
    \hat{x} \;:=\; h(t_1)\, f_1(x; \theta) + x \;=\; W\,\mathrm{diag}(a)\,\sigma(W^\top x + b) \;\approx\; \Pi_{\mathcal{M}}(x).
    \label{eq:projection}
\end{equation}


\textbf{Stage 2: Density Estimation.}
Once Stage 1 has converged and $W$ is frozen, we decouple the singular normal component by constructing the full score network as:
\begin{equation}
    s_\theta(x, t_2) = -\frac{x - \hat{x}}{h(t_2)} + f_2(\hat{x}, t_2; \theta_2),
    \label{eq:stage2_net}
\end{equation}
where $f_2(\hat{x}, t_2; \theta_2) = U\Phi(\hat{x}, t_2)$ is a Random Feature network with frozen features, and $U \in \mathbb{R}^{d \times m}$ is the only trainable parameter. By construction, the singular normal components cancel exactly, and Stage 2 reduces to estimating the ${O}(1)$ residual intrinsic score:
\begin{equation}
    \min_{\theta_2} \mathbb{E}_{t_2, x \sim p_{t_2}} \left\|
    f_2(\hat{x}, t_2; \theta_2) - s^*_{\mathrm{res}}(\hat{x}, t_2)
    \right\|^2,
\end{equation}
where $s^*_{\mathrm{res}}(\hat{x}, t) := \nabla_x \log p_0(\hat{x}) + \nabla_x H(\hat{x}, x - \hat{x})$ is the residual intrinsic score, which takes values in the tangent bundle of $\mathcal{M}$ and is uniformly bounded on $\mathcal{M}$.

\begin{algorithm}[t]
\caption{Score-induced Latent Diffusion (SiLD) Training}
\label{alg:uld}
\begin{algorithmic}[1]
\Require Dataset $\mathcal{D} = \{x_i\}_{i=1}^n \subset \mathbb{R}^d$, noise levels $h(t_1) \ll h(t_2)$, learning rates $\eta_1, \eta_2$, regularization $\lambda$
\State \textbf{Stage 1} (Geometric Alignment, at noise level $h(t_1)$):
\State Train $f_1 \in \mathcal{F}_1$ via gradient descent on the DSM loss to learn the manifold projection
\State Compute $\hat{x} := x + h(t_1) \cdot f_1(x) \approx \Pi_\mathcal{M}(x)$
\State \textbf{Stage 2} (Density Estimation, at noise level $h(t_2)$):
\State Train $f_2 \in \mathcal{F}_2$ on the residual score: $\hat{f}_2 = \arg\min_{f_2 \in \mathcal{F}_2} \mathbb{E}_{t,x}\|f_2(\hat{x}, t) - s^*_{\mathrm{res}}(\hat{x}, t)\|^2 + \lambda R(f_2)$
\State \Return Score network $s_\theta(x, t) = -\dfrac{x - \hat{x}}{h(t)} + \hat{f}_2(\hat{x}, t)$
\end{algorithmic}
\end{algorithm}

\section{Theoretical Analysis}
\label{sec:theory}

We now provide rigorous theoretical guarantees for the two stages of SiLD. All proofs are deferred to the Appendix \ref{app:proofs}.

\subsection{Stage 1: Dimensional Collapse}
\label{sec:stage1}

We first justify the architectural choice of~\eqref{eq:stage1_net}. The key observation is that the dominant term in the score decomposition~\eqref{eq:score-decomposition} is a conservative vector field, and our conservative-form network is structurally constrained to the same functional class.

\begin{lemma}[Structural Alignment]
\label{lem:structural-alignment}
Let $\sigma \in C^2(\mathbb{R})$ be non-polynomial with primitive $\Psi(z) := \int_0^z \sigma(u)\, du$. Consider the conservative-form network~\eqref{eq:stage1_net} with parameters $\theta = (W, a, b)$, and the neural potential
\[
\Phi_{\mathrm{net}}(x; \theta) \;:=\; \frac{\|x\|^2}{2\, h(t)} \;-\; \frac{1}{h(t)} \sum_{j=1}^{m} a_j\, \Psi(w_j^\top x + b_j).
\]
Then $f_1(\,\cdot\,; \theta) = -\nabla_x \Phi_{\mathrm{net}}(\,\cdot\,; \theta)$ on $\mathbb{R}^d$ for every $\theta$. Moreover, for any $\varepsilon > 0$, there exist $m = m(\varepsilon, \mathcal{M}, \tau, \sigma) \in \mathbb{N}$ and parameters $\theta = (W, a, b)$ with $W \in \mathbb{R}^{d \times m}$ and $a, b \in \mathbb{R}^{m}$ such that
\begin{equation}
\sup_{x \in \overline{U_{\tau/2}}}
\bigl\| W\,\mathrm{diag}(a)\,\sigma(W^\top x + b) \;-\; \Pi_{\mathcal{M}}(x) \bigr\| \;<\; \varepsilon.
\label{eq:approx-target}
\end{equation}
\end{lemma}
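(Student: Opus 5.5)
The first claim is a direct computation, and the second reduces to a simultaneous ($C^1$) universal approximation statement for ridge functions. For the identity, differentiate term by term. Since $\Psi' = \sigma$, we have $\nabla_x \Psi(w_j^\top x + b_j) = \sigma(w_j^\top x + b_j)\, w_j$. Hence
\[
-\nabla_x \Phi_{\mathrm{net}}(x;\theta) = \frac{1}{h(t)}\Bigl(\sum_{j=1}^m a_j w_j \sigma(w_j^\top x + b_j) - x\Bigr) = \frac{1}{h(t)}\bigl(W\,\mathrm{diag}(a)\,\sigma(W^\top x + b) - x\bigr),
\]
which is~\eqref{eq:stage1_net} for every $\theta$ and every $x\in\mathbb{R}^d$.

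\emph{Step 1: write $\Pi_{\mathcal{M}}$ as a gradient.} I will use Federer's identity $\nabla_x(\tfrac12 d_{\mathcal{M}}^2) = x - \Pi_{\mathcal{M}}(x)$, valid on $U_\tau$. It gives $\Pi_{\mathcal{M}} = \nabla G$ with
\[
G(x) := \tfrac12\|x\|^2 - \tfrac12 d_{\mathcal{M}}(x)^2 .
\]
Because $\mathcal{M}$ is a compact smooth submanifold with reach $\tau$, the function $d_{\mathcal{M}}^2$ is smooth on $U_\tau$; this follows from the tubular neighbourhood theorem via the normal-exponential map. In particular $G$ is $C^2$ on a neighbourhood of the compact set $K := \overline{U_{\tau/2}}$, say on $U_{3\tau/4}$. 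I then multiply $G$ by a smooth cutoff that equals $1$ on $U_{5\tau/8}$ and is supported in $U_{3\tau/4}$. This yields $\tilde G \in C^2_c(\mathbb{R}^d)$ with $\nabla\tilde G = \Pi_{\mathcal{M}}$ on $K$.

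\emph{Step 2: $C^1$ approximation of the potential by ridge functions.} It suffices to find $m$ and parameters $(a_j, w_j, b_j)$ such that $g(x) := \sum_{j=1}^m a_j \Psi(w_j^\top x + b_j)$ satisfies $\sup_K \|\nabla g - \nabla \tilde G\| < \varepsilon$. Indeed, $\nabla g(x) = W\,\mathrm{diag}(a)\,\sigma(W^\top x + b)$, so this is exactly~\eqref{eq:approx-target}. Now $\Psi \in C^3(\mathbb{R})$. Moreover $\Psi$ is not a polynomial, since otherwise $\sigma = \Psi'$ would be one. I will therefore invoke the simultaneous-approximation theorem for ridge functions (Pinkus 1999, Thm.~4.1). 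It states that for $\rho\in C^r(\mathbb{R})$ non-polynomial, $\mathrm{span}\{\rho(w^\top x + b)\}$ is dense in $C^r(K)$ for every compact $K$. Applying it with $\rho = \Psi$ and $r = 1$ to $\tilde G$ gives $\|g - \tilde G\|_{C^1(K)} < \varepsilon/\sqrt d$, and hence the required bound. The resulting $m$ depends only on $\varepsilon$, $\mathcal{M}$, $\tau$ and $\sigma$, as claimed.

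The main obstacle is Step 2 if one wants a self-contained argument rather than the citation. In that case I would proceed in three moves:
\begin{itemize}
\item First, approximate $\tilde G$ in $C^1(K)$ by a polynomial, using mollification followed by Weierstrass.
\item Second, write each monomial as a linear combination of ridge powers $(w^\top x)^n$, by the polarization identity.
\item Third, realize each $(w^\top x)^n$ as the $n$-th derivative in $\lambda$ of $\Psi(\lambda w^\top x + b)$ at $\lambda=0$, approximated by finite differences. These converge in $C^1(K)$ provided $\Psi^{(n)}(b)\neq 0$ for some $b$.
\end{itemize}
The difficulty is that $\Psi$ is only $C^3$, so its high-order derivatives need not exist. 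The standard remedy (Leshno et al.) is to replace $\Psi$ by the mollification $\Psi * \phi$, which is $C^\infty$. It remains non-polynomial for a suitable mollifier $\phi$, and it lies in the $C^1$-closure of the span of shifts of $\Psi$, because Riemann sums of the convolution converge in $C^1$ on compacts. A secondary point to check is the smoothness of $d_{\mathcal{M}}^2$ up to the closed tube $K$. This is why I work on the slightly larger set $U_{3\tau/4}$ before cutting off.
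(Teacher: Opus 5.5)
Your proof is correct and follows the paper's route: the chain-rule identity, the potential $G=\tfrac12\|x\|^2-\tfrac12 d_{\mathcal M}^2$ with $\nabla G=\Pi_{\mathcal M}$ via Federer's identity, and $C^1$-density of ridge sums in the non-polynomial primitive $\Psi$ (Pinkus, Thm.~4.1). Your cutoff extension of $G$ to a function in $C^2_c(\mathbb{R}^d)$ is a small but welcome refinement: Pinkus's theorem is stated for functions on all of $\mathbb{R}^d$ with convergence on compacta, whereas the paper applies density directly in $C^1(K)$ without addressing this.
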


We then analyze the gradient flow dynamics of Stage 1 under the mean-field framework \cite{mei2018mean,chizat2022mean,suzuki2023convergence}, which captures feature learning and enjoys favorable convergence guarantees. For tractability of the analysis, we adopt a random-feature-style simplification standard in mean-field studies of two-layer networks~\cite{mei2018mean,chizat2022mean}: the output coefficients $a$ and biases $b$ are frozen at their initial values, and only the input weights $w$ evolve under gradient flow. Lemma~\ref{lem:structural-alignment} establishes expressivity of the full conservative-form class; for the frozen-$(a, b)$ dynamics analyzed below, the required approximation and reachability properties are captured by the non-degeneracy condition in Assumption~\ref{ass:nondegen}.

Let $q \in \mathcal{P}_2(\mathbb{R}^{d+2})$ denote the joint distribution over neuron parameters $\theta = (w, a, b) \in \mathbb{R}^d \times \mathbb{R} \times \mathbb{R}$, initialized at $q_0 = \nu_w \otimes \nu_a \otimes \nu_b$, where $\nu_w = \mathcal{N}(0, \sigma_w^2 I_d)$ with $\sigma_w > 0$, and $\nu_a, \nu_b$ are bounded-support distributions on $\mathbb{R}$ (e.g., Rademacher) with $|a| \leq A$ and $|b| \leq B$ almost surely, and second moments $\sigma_a^2, \sigma_b^2 > 0$. The mean-field neural network is
\begin{equation}
    P_q(x) := \int_{\mathbb{R}^{d+2}} a\, w\, \sigma(w^\top x + b)\, q(\mathrm{d}\theta),
\end{equation}
which corresponds to the limit of $h(t_1)\, f_1(x; \theta) + x$ in~\eqref{eq:projection}. We minimize the DSM loss $L_t(q) := \tfrac{1}{2}\,\mathbb{E}_{x \sim p_t}\bigl\|f_1(x; q) - s^*(x,t)\bigr\|^2$ via the Wasserstein gradient flow restricted to the trainable direction:
\begin{equation}\label{eq:Lt_flow}
    \partial_s q_s \;=\; \nabla_w \cdot \!\left(q_s\, \nabla_w \frac{\delta L_t}{\delta q}(q_s)\right),
\end{equation}
so the $(a, b)$-marginal of $q_s$ remains $\nu_a \otimes \nu_b$ for all $s \geq 0$. The Stage 1 alignment risk is defined as
\begin{equation}
    F_t(q) := \tfrac{1}{2}\,\mathbb{E}_{x \sim p_t}
    \left\| P_q(x) - \Pi_{\mathcal{M}}(x) \right\|^2.
    \label{eq:stage1_obj}
\end{equation}
Our analysis tracks $F_t$ along this flow, rather than $L_t$ directly, since $F_t$ isolates the geometric alignment error.

\begin{assumption}[Geometric Non-degeneracy]
\label{ass:nondegen}
There exists $\nu > 0$, \emph{independent of $h(t)$}, such that for all $s \geq 0$,
\begin{equation}\label{eq:nondegen}
    \int_{\mathbb{R}^{d+2}} \left\| \nabla_w \frac{\delta F_t}{\delta q}(q_s)(\theta) \right\|^2 q_s(\mathrm{d}\theta)
    \;\geq\; \nu\, F_t(q_s).
\end{equation}
\end{assumption}

Assumption~\ref{ass:nondegen} is a functional Polyak-{\L}ojasiewicz inequality for $F_t$ along the $w$-restricted Wasserstein flow, comparing the gradient norm of $F_t$ at $q_s$ to its functional value. Since $(a, b)$ are frozen, this is effectively a PL condition on the $w$-marginal conditioned on $(a, b)$. Assumption~\ref{ass:nondegen} ensures that any remaining projection error is detectable through the trainable feature directions $w$, conditional on the frozen labels $(a,b)$. We verify it explicitly for linear manifolds in Proposition~\ref{prop:PL_linear}.


\begin{assumption}[Second-Moment Confinement]
\label{ass:confinement}
The second moment of the trainable weights $m_2(q_s) := \int \|w\|^2\, q_s(\mathrm{d}\theta)$ is uniformly bounded along the flow: $\sup_{s \geq 0} m_2(q_s) \leq M_2 < \infty$.
\end{assumption}

Assumption~\ref{ass:confinement} is a standard regularity condition in mean-field analyses of two-layer networks~\cite{mei2018mean,chizat2022mean,suzuki2023convergence}, and we verify that it holds along the gradient flow under mild $\ell_2$ regularization of $w$. Since $(a, b)$ are frozen at bounded-support distributions, the corresponding moments $\int a^2 q_s = \sigma_a^2$ and $\int b^2 q_s = \sigma_b^2$ are constant in $s$, and all mixed moments involving $a, b$ inherit the bounds $|a| \leq A$, $|b| \leq B$ automatically.

\begin{remark}[On freezing $a$ and $b$]
\label{rem:freeze-ab}
Freezing the output coefficients $a$ and biases $b$ during the mean-field flow is a standard random-feature-style simplification in the analysis of two-layer networks~\cite{mei2018mean,chizat2022mean}; only the input weights $w$ are subject to feature learning. Without this restriction, the mean-field network would be bilinear in $(a, w)$, producing a coupled fourth-order moment $\mathbb{E}[a^2 \|w\|^2]$ in the residual analysis that is not controlled by second moments of the marginals alone. Freezing $(a, b)$ at bounded-support initializations sidesteps this issue; the expressivity required for collapse is then absorbed into the non-degeneracy condition (Assumption~\ref{ass:nondegen}), which the convergence theorem~\eqref{eq:collapse_rate} takes as a working hypothesis and which we verify explicitly for linear manifolds in Proposition~\ref{prop:PL_linear}.
\end{remark}

\begin{theorem}[Dimensional Collapse]
\label{thm:dimensional_collapse}
Assume $\mathcal{M} \subset \mathbb{R}^d$ is a compact $C^\infty$ submanifold of intrinsic dimension $k$ with reach $\tau > 0$, $p_0 \in C^2(\mathcal{M})$ with $0 < p_{\min} \leq p_0 \leq p_{\max} < \infty$ {uniformly}, and $\sigma \in C^2(\mathbb{R})$ is non-polynomial with $\max(\|\sigma\|_\infty, \|\sigma'\|_\infty, \|\sigma''\|_\infty) \leq C_\sigma$. Fix $h(t_1) \leq c\,\tau^2$ for a sufficiently small universal constant $c \in (0,1)$. Under Assumptions~\ref{ass:nondegen} and~\ref{ass:confinement}, the Stage-1 alignment risk satisfies
\begin{equation}\label{eq:collapse_rate}
    F_t(q_s) \;\leq\;
    F_t(q_0)\, \exp\!\left(-\frac{\nu\, s}{2\, h(t_1)^2}\right)
    \;+\; \frac{C_t\, h(t_1)^2}{\nu},
\end{equation}
where $F_t(q_0) \leq \tfrac{1}{2}\operatorname{Tr}(\Sigma_{\mathcal{M}}) + C_\sigma^2\, \sigma_a^2\, \sigma_w^2\, d$ with $\Sigma_{\mathcal{M}} := \mathbb{E}[\Pi_{\mathcal{M}}(x)\Pi_{\mathcal{M}}(x)^\top]$, and $C_t > 0$ depends on $\mathcal{M}$, $\tau$, $C_\sigma$, $A$, and $M_2$ but not on $s$. Consequently, $F_t(q_s) \to C_t h(t_1)^2 / \nu$ as $s \to \infty$, and the expected projection error satisfies $\mathbb{E}\|P_{q_s}(x) - \Pi_{\mathcal{M}}(x)\| = O(h(t_1))$.
\end{theorem}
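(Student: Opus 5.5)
The plan is to compute the time derivative of $F_t(q_s)$ along the $w$-restricted Wasserstein flow~\eqref{eq:Lt_flow} and compare the driving functional $L_t$ to the tracked functional $F_t$. Using $f_1(x;q) = (P_q(x) - x)/h(t_1)$ and the decomposition of Proposition~\ref{prop:score_decomp}, for $x \in U_{\tau/2}$ we have
\[
f_1(x;q) - s^*(x,t) = \frac{P_q(x) - \Pi_{\mathcal{M}}(x)}{h(t_1)} - R(x),
\]
where $R$ collects term (II) and the $O(h(t_1))$ remainder, so $\|R\| \le C_R$ uniformly on $U_{\tau/2}$. Points outside $U_{\tau/2}$ carry Gaussian-tail mass $\exp(-c\tau^2/h(t_1))$ and are absorbed into the constant; handling this tail is routine.

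Consequently the first variations satisfy
\[
\frac{\delta L_t}{\delta q} = \frac{1}{h(t_1)^2}\frac{\delta F_t}{\delta q} - \frac{1}{h(t_1)}\,\mathbb{E}_x\bigl[\langle R(x), a w \sigma(w^\top x+b)\rangle\bigr].
\]
Write $G := \nabla_w \frac{\delta F_t}{\delta q}$ and $E := \nabla_w \mathbb{E}_x[\langle R, a w\sigma(w^\top x + b)\rangle]$. Then
\[
\frac{d}{ds}F_t(q_s) = -\int \Bigl\langle G,\, \tfrac{1}{h(t_1)^2} G - \tfrac{1}{h(t_1)} E \Bigr\rangle\, dq_s .
\]
Young's inequality, in the form $\langle G, E\rangle/h \le \|G\|^2/(2h^2) + \|E\|^2/2$, gives
\[
\frac{d}{ds}F_t \le -\frac{1}{2h(t_1)^2}\int \|G\|^2\, dq_s + \frac12 \int \|E\|^2\, dq_s .
\]

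Next I bound $\int\|E\|^2 dq_s$. Differentiating, $E = \mathbb{E}_x\bigl[a\sigma(\cdot) R + a\,(w^\top R)\,\sigma'(\cdot)\, x\bigr]$. With $|a|\le A$, $\|\sigma\|_\infty, \|\sigma'\|_\infty \le C_\sigma$, and $x$ having bounded second moment (compact $\mathcal{M}$ plus noise of variance $h \le c\tau^2$), this yields
\[
\|E\|^2 \lesssim A^2 C_\sigma^2 C_R^2 \bigl(1 + \|w\|^2\, \mathbb{E}\|x\|^2\bigr).
\]
Integrating and invoking Assumption~\ref{ass:confinement} gives $\int\|E\|^2 dq_s \le C'_t$, depending only on $\mathcal{M}$, $\tau$, $C_\sigma$, $A$ and $M_2$.

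Applying Assumption~\ref{ass:nondegen} to the first term then gives the differential inequality
\[
\frac{d}{ds}F_t \le -\frac{\nu}{2h(t_1)^2}F_t + \frac{C'_t}{2}.
\]
Grönwall's inequality yields $F_t(q_s) \le F_t(q_0)e^{-\nu s/(2h(t_1)^2)} + C'_t h(t_1)^2/\nu$, which is~\eqref{eq:collapse_rate} with $C_t = C'_t$.

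For the initial bound, use $\|P_{q_0} - \Pi\|^2 \le 2\|P_{q_0}\|^2 + 2\|\Pi\|^2$, or more sharply expand the square. Jensen's inequality gives
\[
\|P_{q_0}(x)\|^2 \le \int a^2\|w\|^2\sigma^2\, dq_0 \le C_\sigma^2\sigma_a^2\sigma_w^2 d
\]
by independence of the factors under $q_0$. If the constant 2 matters, note that $\mathbb{E}_{q_0}[a w\,\sigma(w^\top x+b)]$ is small by symmetry when $\nu_a$ is centered; in any case the bound matches the stated form up to this constant. The last claim follows from Cauchy–Schwarz: $\mathbb{E}\|P-\Pi\| \le \sqrt{2F_t} = O(h(t_1))$ as $s\to\infty$ (taking $\nu$ fixed).

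The main obstacle is the cross-term bound. The $\sigma'(\cdot)\,x\,w^\top R$ piece couples $\|w\|$ with $\|x\|$. It needs Assumption~\ref{ass:confinement} together with the bounded frozen $a$, and the uniform boundedness of $R$ on the tube must be made precise (including curvature terms via $\nabla_x H$ and the $\nabla_x\log p_0\circ\Pi$ term through the bounded derivative of $\Pi$ on $U_{\tau/2}$). I would first verify this bound on $U_{\tau/2}$, then control the tail region separately using $\|s^*\| \lesssim \|x\|/h$ and Gaussian concentration.
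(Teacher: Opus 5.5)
Your proposal is correct and follows the paper's proof essentially step for step. It uses the same split $f_1 - s^* = (P_q - \Pi_{\mathcal M})/h(t_1) - r$, the same Young's-inequality absorption of the scale-crossing term in $\frac{d}{ds}F_t(q_s)$, the same bound on $\int \|\nabla_w \frac{\delta \widetilde R_t}{\delta q}\|^2\, dq_s$ via $|a|\le A$ and Assumption~\ref{ass:confinement}, and Gr\"onwall after invoking Assumption~\ref{ass:nondegen}. Your hesitation about the initial bound is warranted: the paper's own Step~1 (Jensen plus $\|u-v\|^2\le 2\|u\|^2+2\|v\|^2$) only gives $F_t(q_0)\le \operatorname{Tr}(\Sigma_{\mathcal M}) + C_\sigma^2\sigma_a^2\sigma_w^2 d$, so the stated $\tfrac12\operatorname{Tr}(\Sigma_{\mathcal M})$ needs an extra argument, such as your observation that a centered $\nu_a$ independent of $(w,b)$ forces $P_{q_0}\equiv 0$.
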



\begin{remark}[The role of the singularity]
\label{rem:singularity}
The exponential rate $\dfrac{\nu}{2\, h(t_1)^2}$ in~\eqref{eq:collapse_rate} grows unboundedly as $h(t_1)$ shrinks within the admissible range $(0, c\tau^2]$: the $O(h(t_1)^{-1})$ singularity of the score function drives geometric learning rather than obstructing it, inducing exponentially fast collapse of the induced denoising map onto the data manifold projection. This complements the rate separation of~\cite{li2026when}: both phenomena reflect the same score asymmetry at small noise, manifested statically as their $\Theta(h(t_1)^{-1})$ separation in the converged estimator and dynamically as the exponential rate above.
\end{remark}



\subsection{Stage 2: Density Estimation on the Manifold}
\label{sec:stage2}

Having established that Stage 1 produces an $\epsilon_{\mathrm{proj}}$-accurate projection $\hat{x} \approx \Pi_{\mathcal{M}}(x)$, we now analyze Stage 2.  By construction of $s_\theta$ in~\eqref{eq:stage2_net}, the singular normal component cancels exactly, and the optimization reduces to estimating the $O(1)$ residual intrinsic score $s^*_{\mathrm{res}}(\hat{x}, t)$ on the $k$-dimensional manifold $\mathcal{M}$. We model $f_2$ as a Random Feature (RF) network with spatio-temporal features:
\begin{equation}
    f_2(\hat{x}, t_2; U) = U\Phi(\hat{x}, t_2), \quad
    \Phi(\hat{x}, t_2) = \frac{1}{\sqrt{m}}\sigma \left(
    V_x^\top \hat{x} + b\right) \in \mathbb{R}^m,
    \label{eq:rf_net}
\end{equation}
where $V_x \in \mathbb{R}^{d \times m}$ and $b$ are frozen random features and bias feature, and $U \in \mathbb{R}^{d \times m}$ is the only trainable parameter. Training Stage 2 thus reduces to a convex vector-valued ridge regression:
\begin{equation}
    \hat{U} = \arg\min_{U} \frac{1}{n}\sum_{i=1}^n \bigl\|
    U\Phi(\hat{x}_{t_2}, t_2) - s^*_{\mathrm{res}}(\hat{x}_{t_2}, t_2)\bigr\|^2
    + \lambda \|U\|_F^2.
    \label{eq:ridge}
\end{equation}


\begin{theorem}[Generalization of Stage 2]
\label{thm:stage2_generalization}
Fix two noise levels $h(t_1) \ll h(t_2)$, both in the manifold regime. Suppose further that the residual score satisfies the source condition $s^*_{\mathrm{res}} = T_K^\alpha g$ with $\|g\|_{L^2(\mathcal{M})} \leq G$ and smoothness $\alpha > k/(2r)$, and let $\hat{U}$ be the regularized empirical-risk minimizer of~\eqref{eq:ridge} over $n$ i.i.d.\ samples at noise level $h(t_2)$. Then with probability at least $1 - \delta$,
\begin{equation}\label{eq:stage2_bound}
    \mathbb{E}_{x_{t_2}}\!\left\| s_{\hat{\theta}}(x_{t_2}, t_2) - s^*(x_{t_2}, t_2) \right\|^2
    \leq
    \underbrace{\frac{C_{\mathrm{int}}^2 \log(1/\delta)}{n}}_{\text{estimation}}
    + \underbrace{ {O}\bigl(m^{-(2\alpha r/k - 1)}\bigr)}_{\text{approximation}}
    + \underbrace{C_{\mathrm{Lip}}^2 \left(\frac{h(t_1)}{h(t_2)}\right)^{\!2}}_{\text{Stage 1 residual}},
\end{equation}
where the constants $C_{\mathrm{int}}, C_{\mathrm{Lip}} > 0$ depend on $(G, C_\sigma, \lambda, \mathcal{M})$ but \emph{not} on the ambient dimension $d$. 
\end{theorem}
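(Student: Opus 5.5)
The plan is to decompose the total Stage 2 error into three pieces via the triangle inequality, applied to the structure of $s_{\hat\theta}$ in \eqref{eq:stage2_net}. Write $s_{\hat\theta}(x,t_2) - s^*(x,t_2) = \bigl[-\tfrac{x-\hat x}{h(t_2)} + \tfrac{x-\Pi_{\mathcal{M}}(x)}{h(t_2)}\bigr] + \bigl[\hat U\Phi(\hat x,t_2) - s^*_{\mathrm{res}}(\hat x,t_2)\bigr] + \bigl[s^*_{\mathrm{res}}(\hat x,t_2) - s^*_{\mathrm{res}}(\Pi_{\mathcal{M}}(x),t_2)\bigr] + O(h(t_2))$. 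The last term comes from Proposition~\ref{prop:score_decomp}, since $h(t_2)$ is in the manifold regime. Using $(a+b+c)^2\le 3(a^2+b^2+c^2)$, it then suffices to bound each bracket in expectation over $x_{t_2}\sim p_{t_2}$, absorbing constants and the $O(h(t_2))$ remainder into the stated terms.

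\textbf{Stage 1 residual.} The first bracket equals $(\hat x - \Pi_{\mathcal{M}}(x))/h(t_2)$. Theorem~\ref{thm:dimensional_collapse} gives $\mathbb{E}\|P_{q_s}(x)-\Pi_{\mathcal{M}}(x)\|^2 \lesssim C_t h(t_1)^2/\nu$ once $s$ is large. This yields a contribution $(C_t/\nu)\,(h(t_1)/h(t_2))^2$. One subtlety is that Theorem~\ref{thm:dimensional_collapse} measures the error under $p_{t_1}$, not $p_{t_2}$. I would handle this by restricting to $U_{\tau/2}$ (off which $p_{t_2}$ has Gaussian-tail mass) and comparing the two densities there; equivalently, I would assume the projection error bound holds uniformly on the tube. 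The third bracket is controlled by the Lipschitz continuity of $s^*_{\mathrm{res}}$ in its base point. This uses $p_0\in C^2$, the smoothness of $H$, and the reach bound, and it gives a constant times $\|\hat x-\Pi_{\mathcal{M}}(x)\|$, hence $O(h(t_1)^2)$, which is dominated by the previous term since $h(t_2)\lesssim 1$. Together these two pieces define $C_{\mathrm{Lip}}$, which is independent of $d$ because it depends only on the intrinsic geometry and on $p_0$.

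\textbf{Estimation and approximation.} The middle bracket is the standard random-feature ridge regression error for the target $s^*_{\mathrm{res}}$ on the $k$-dimensional manifold, with inputs $\hat x\approx\Pi_{\mathcal{M}}(x)$. I would follow the Rudi--Rosasco style analysis of random features, applied coordinatewise in the tangent frame, or equivalently via vector-valued kernel ridge regression with kernel $K(z,z')=\mathbb{E}[\sigma(v^\top z+b)\sigma(v^\top z'+b)]$. The risk splits into a variance term and a bias term. The variance term is bounded by Bernstein/Hoeffding concentration for bounded features ($\|\Phi\|\le C_\sigma$) and bounded targets, giving $C_{\mathrm{int}}^2\log(1/\delta)/n$ at fixed $\lambda$. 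The bias term uses the source condition $s^*_{\mathrm{res}}=T_K^\alpha g$ together with eigenvalue decay of $T_K$ on $L^2(\mathcal{M})$ at rate $i^{-2r/k}$ (Weyl-type decay for a smooth kernel restricted to a $k$-manifold). Truncating the spectrum at the $m$ features the random approximation can resolve gives the $m^{-(2\alpha r/k-1)}$ term, and the condition $\alpha>k/(2r)$ makes the exponent positive. Dimension-freeness comes from two facts: the targets take values in the $k$-dimensional tangent space, and all kernel quantities are integrals over $\mathcal{M}$, so $d$ only enters through $\|\hat x\|$, which is bounded by the diameter of $\mathcal{M}$.

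\textbf{Main obstacle.} I expect the hardest step to be the approximation term. It requires tying the random-feature finite-$m$ error to the spectral decay of the manifold-restricted kernel without paying a factor of $d$. The key is showing that the eigenvalues of $T_K$ on $L^2(p_0)$ decay at an intrinsic rate. I would first try to argue this from smoothness of $K$ restricted to $\mathcal{M}\times\mathcal{M}$ (of order $r$) via Sobolev embedding on a $k$-dimensional compact manifold. A secondary issue is the distribution shift between $\hat x$ and $\Pi_{\mathcal{M}}(x)$ in the regression inputs, which I would absorb into the Lipschitz term above.
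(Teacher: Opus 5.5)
Your skeleton is the paper's. You use the same split of $s_{\hat\theta}-s^*$ into the normal mismatch $(\hat x-\Pi_{\mathcal M}(x))/h(t_2)$, the Stage-2 regression error at $\hat x$, and the Lipschitz perturbation of $s^*_{\mathrm{res}}$ (the paper's Step (iii)). You also use a $d$-free estimation term from $\|\Phi\|\le C_\sigma$ and a source-condition spectral tail for approximation.

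For estimation you use a different tool, and it is the better one. The paper bounds a Rademacher complexity of order $B_U C_\sigma/\sqrt n$, which by itself only gives an $n^{-1/2}$ generalization gap. Concentration of the empirical covariances $\hat\Sigma,\hat C$ at fixed $\lambda$ does give $\mathbb{E}_x\|(\hat U-U_\lambda)\Phi\|^2\lesssim\log(1/\delta)/n$, with $\lambda$-dependent constants.

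What that leaves is $\|U_\lambda\Phi-s^*_{\mathrm{res}}\|^2$. As $m\to\infty$ this tends to the kernel ridge bias $\|\lambda(T_K+\lambda)^{-1}s^*_{\mathrm{res}}\|^2>0$, so at fixed $\lambda$ it does not decay in $m$. Spectral truncation at $m$ is therefore not what controls your bias term. Forcing it to decay requires $\lambda\to0$, which inflates $C_{\mathrm{int}}$. The paper's Step (ii), a best-$cm$-term tail, has the same omission.

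Separately, the order-$r$ Sobolev factorization you propose yields $\lambda_j\lesssim j^{-r/k}$ (Lemma~\ref{lem:eigenvalue_decay}). It is this rate that gives $O(m^{-(2\alpha r/k-1)})$ with threshold $\alpha>k/(2r)$. The $i^{-2r/k}$ you quote would change both the exponent and the threshold.

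Three of the steps you add beyond the paper's proof fail as written.

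(a) The $O(h(t_2))$ remainder from Proposition~\ref{prop:score_decomp} cannot be absorbed. No term of the bound is of order $h(t_2)^2$ once $n,m\to\infty$ and $h(t_1)/h(t_2)\to0$. The paper writes the three-term split as an identity, i.e.\ it treats $s^*_{\mathrm{res}}$ as the exact residual $s^*(x,t_2)+(x-\Pi_{\mathcal M}(x))/h(t_2)$. Adopt that convention, or add an explicit $O(h(t_2)^2)$ term to the bound.

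(b) Theorem~\ref{thm:dimensional_collapse} controls the projection error in $L^2(p_{t_1})$, and a density-ratio comparison cannot move this to $p_{t_2}$. Near $\mathcal M$, $p_{t_2}/p_{t_1}\approx\rho^{-(d-k)/2}\exp\{(\rho-1)\,d_{\mathcal M}(x)^2/(2h(t_2))\}$ with $\rho=h(t_2)/h(t_1)$. At the typical normal distance $d_{\mathcal M}(x)^2\approx h(t_2)(d-k)$ of $p_{t_2}$ this equals $\exp\{\tfrac{d-k}{2}(\rho-1-\log\rho)\}$, which is exponentially large in $d-k$. Accuracy of $\hat x$ under $p_{t_2}$ (or uniformly on its bulk) therefore has to be an extra hypothesis. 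The paper's proof assumes exactly this, tacitly, by keeping $\epsilon_{\mathrm{proj}}$ abstract.

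(c) Substituting $\epsilon_{\mathrm{proj}}^2\lesssim C_t h(t_1)^2/\nu$ puts $C_t/\nu$ in front of $(h(t_1)/h(t_2))^2$. By Step 5 of the proof of Theorem~\ref{thm:dimensional_collapse}, $C_t$ contains $M_2\ge m_2(q_0)=\sigma_w^2 d$ and $\mathbb{E}_{p_{t_1}}\|x\|^2\ge h(t_1)d$. Your claim that this constant depends only on intrinsic geometry and $p_0$ is therefore unjustified without further scaling assumptions and a $d$-free $\nu$. The paper's $C_{\mathrm{Lip}}=\sqrt{1+h(t_2)^2\,\mathrm{Lip}(s^*_{\mathrm{res}})^2}$ is $d$-free only as the coefficient of $\epsilon_{\mathrm{proj}}^2/h(t_2)^2$.
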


\begin{remark}[Two-timescale mechanism]
The bound~\eqref{eq:stage2_bound} reveals why the two-timescale structure is essential. Training Stage~1 at small $h(t_1)$ exploits the score singularity to drive $\epsilon_{\mathrm{proj}} = {O}(h(t_1))$ (Theorem~\ref{thm:dimensional_collapse}), while evaluating Stage~2 at moderate $h(t_2) \gg h(t_1)$ keeps the normal-direction amplification $1/h(t_2)$ bounded. The net effect on Stage~2 is the ratio $(h(t_1)/h(t_2))^2$, which can be driven to zero by widening the timescale gap independently of $n$ or $m$.
\end{remark}


\begin{remark}[Comparison with existing rates]
\label{rem:rate_comparison}
The intrinsic-dimensional dependence in (\ref{eq:stage2_bound}) is in line with~\cite{chen2023score,oko2023diffusion,tang2024adaptivity}, but with two distinguishing features. First, while~\cite{chen2023score,oko2023diffusion} restrict to \emph{linear} subspaces, Theorem~\ref{thm:stage2_generalization} applies to general smooth manifolds via the projection $\hat{x}$ from Stage 1. Second, unlike~\cite{tang2024adaptivity,chakraborty2026generalization}, which assume an oracle score, our bound is a generalization guarantee for the trained network on finite samples — and unlike the asymptotic random-feature curves of~\cite{george2026asymptotic} that degrade on nonlinear manifolds, our two-stage decoupling preserves the intrinsic-dimensional rate.
\end{remark}

\subsection{End-to-End Sampling Guarantee}
\label{sec:end2end}

Theorems~\ref{thm:dimensional_collapse} and~\ref{thm:stage2_generalization} bound the score estimation error within the manifold regime. We now lift these results to an end-to-end guarantee on the generated distribution by analyzing the reverse SDE over the full integration path. Following~\cite{de2022convergence,benton2023nearly}, we partition $[t_{\min}, T]$ at $t_{\max}$ with $h(t_{\max}) \asymp \tau^2$. \emph{Phase~I} $[t_{\min}, t_{\max}]$ (\emph{manifold regime}): $p_t$ concentrates on the tubular neighborhood $U_\tau$, the score exhibits the ${O}(h(t)^{-1})$ singularity, and the two-stage SiLD architecture (Eq.~\ref{eq:stage2_net}) applies. \emph{Phase~II} $[t_{\max}, T]$ (\emph{Gaussian regime}): $p_t$ no longer concentrates near $\mathcal{M}$, and the manifold-projection ansatz of Eq.~\ref{eq:stage2_net} is no longer geometrically appropriate.

\paragraph{High-noise extension.}
To cover Phase~II, we extend the score network with a complementary head, gated by a hard time indicator $\alpha(t) :=  {1}[h(t) \leq \tau^2]$:
\begin{equation}\label{eq:full_score}
    s_{\theta}^{\mathrm{full}}(x, t)
    = \alpha(t) \cdot s_{\theta}^{\mathrm{SiLD}}(x, t)
    + (1 - \alpha(t)) \cdot s_{\theta}^{\mathrm{HN}}(x, t),
\end{equation}
where $s_{\theta}^{\mathrm{SiLD}}$ is the Stage~1+2 output of Eq.~\ref{eq:stage2_net} and $s_{\theta}^{\mathrm{HN}}$ is a multiplicatively time-modulated random-feature head:
\begin{equation}\label{eq:hn_head}
    s_{\theta}^{\mathrm{HN}}(x, t)
    = \sum_{\ell=0}^{L-1} \phi_\ell(t) \cdot U_\ell \cdot \tfrac{1}{\sqrt{m}}\, \sigma\!\left(V_\ell^{\top} x\right),
\end{equation}
with $\{\phi_\ell\}_{\ell=0}^{L-1}$ a fixed Fourier basis on $[0, T]$, $V_\ell \in \mathbb{R}^{d \times m}$ frozen random features, and $U_\ell \in \mathbb{R}^{d \times m}$ the only trainable parameters, fitted by ridge regression on the DSM loss restricted to Phase~II. The ansatz (\ref{eq:hn_head}) is multiplicatively separable, so the induced kernel factorizes as $K_t \otimes K_x$ and the spatial eigenvalue analysis of Lemma~\ref{lem:eigenvalue_decay} extends without modification. At high noise, $s^*(x, t)$ is approximately affine in $x$ with smooth time-dependent coefficients, so (\ref{eq:hn_head}) achieves a parametric Phase~II generalization bound that is \emph{linear} in $d$ rather than exponential, with a residual $ {O}(e^{-T})$ accounting for the gap between $p_T$ and $\mathcal{N}(0, I_d)$; see Lemma~\ref{lem:phase1_gen} in Appendix~\ref{app:phase1_proof}.

\begin{theorem}[End-to-End Sampling Guarantee]
\label{thm:end2end}
Let $p_{\mathrm{data}}$ be supported on a $k$-dimensional manifold $\mathcal{M} \subset \mathbb{R}^d$, and let $p_{\mathrm{gen}, t_{\min}}$ be the distribution generated by simulating the reverse SDE with $s_{\hat{\theta}}^{\mathrm{full}}$ from $T$ down to $t_{\min}$. Choosing $t_{\min} \propto 1/n$ and $T \propto \log n$, with probability at least $1 - \delta$:
\begin{equation}\label{eq:end2end_w2}
    W_2\bigl(p_{\mathrm{data}},\, p_{\mathrm{gen}, t_{\min}}\bigr)
    \;\lesssim\;
    \underbrace{\frac{\mathrm{poly}(k)}{n^{1/4}}}_{\text{Phase~I (manifold)}}
    \;+\;
    \underbrace{\frac{\sqrt{C_{\mathrm{HN}} \cdot d}}{n^{1/4}} }_{\text{Phase~II (Gaussian)}},
\end{equation}
where $C_{\mathrm{HN}}$ depends on $(\mathcal{M}, \sigma, L)$ but not on $k$. 
\end{theorem}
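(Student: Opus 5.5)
We follow the standard route for reverse-SDE error analysis (Girsanov/KL chain rule, then a KL-to-$W_2$ step at early stopping), with the integrated score error split along the partition $[t_{\min},t_{\max}]\cup[t_{\max},T]$. Three error sources enter: (i) initialization mismatch $\mathrm{KL}(p_T\,\|\,\mathcal{N}(0,I_d))\lesssim (d+\operatorname{Tr}\Sigma)e^{-2T}$; (ii) the integrated score error $\int_{t_{\min}}^T \mathbb{E}_{p_t}\|s^{\mathrm{full}}_{\hat\theta}-s^*\|^2\,dt$; (iii) the early-stopping bias $W_2(p_{\mathrm{data}},p_{t_{\min}})\lesssim \sqrt{h(t_{\min})\,d_{\mathrm{eff}}}$. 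Concretely, the Girsanov/data-processing bound gives
\[
\mathrm{KL}\bigl(p_{t_{\min}}\,\|\,p_{\mathrm{gen},t_{\min}}\bigr)\;\le\; \mathrm{KL}\bigl(p_T\,\|\,\mathcal{N}(0,I_d)\bigr)+\tfrac12\int_{t_{\min}}^{T}\mathbb{E}_{p_t}\bigl\|s^{\mathrm{full}}_{\hat\theta}-s^*\bigr\|^2dt .
\]
Because $\alpha(t)$ is a hard gate, the integral splits exactly into a Phase~I integral involving only $s^{\mathrm{SiLD}}$ and a Phase~II integral involving only $s^{\mathrm{HN}}$. To reach $W_2$, we use that $p_{t_{\min}}$ and the generated law both live (up to exponentially small Gaussian tails) in a ball of radius $R=O(\operatorname{diam}\mathcal{M}+\sqrt{d\,h(t_{\min})})$. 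On such a ball, $W_2\lesssim R\sqrt{\mathrm{TV}}\lesssim R\,(\mathrm{KL})^{1/4}$ by Pinsker. The $n^{-1/4}$ rates in \eqref{eq:end2end_w2} are precisely the fourth root of $1/n$ KL contributions, which makes this the natural conversion.

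\textbf{Phase~I.} For each $t\in[t_{\min},t_{\max}]$, apply Theorem~\ref{thm:stage2_generalization} with $t_2=t$. The $d$-free estimation term $C_{\mathrm{int}}^2\log(1/\delta)/n$ requires a union bound over a time grid; this costs a $\log n$ factor, which we absorb into $\lesssim$. We choose the width $m$ so that the approximation term $m^{-(2\alpha r/k-1)}\le 1/n$, and absorb the resulting $\mathrm{poly}(k)$ dependence through the eigenvalue decay of Lemma~\ref{lem:eigenvalue_decay}. The Stage~1 residual contributes $\int_{t_{\min}}^{t_{\max}}C_{\mathrm{Lip}}^2 h(t_1)^2/h(t)^2\,dt\asymp h(t_1)^2/h(t_{\min})$, using $h(t)\asymp t$ near $0$. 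By Theorem~\ref{thm:dimensional_collapse}, run Stage~1 at $h(t_1)\lesssim h(t_{\min})/\sqrt n$ (equivalently, long enough at a small noise level) so that this term is also $O(1/n)$.

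\textbf{Main obstacle.} The singular term $-(x-\hat x)/h(t)$ amplifies the projection error by $1/h(t)$ all the way down to $t_{\min}\propto 1/n$, and Theorem~\ref{thm:stage2_generalization} is stated only at a fixed moderate $h(t_2)$. I would first verify that its proof goes through uniformly in $t_2$ with the same constants. If it does not, I would instead evaluate Stage~1 at $h(t_1)\le t_{\min}^{3/2}$, which the exponential collapse rate of Theorem~\ref{thm:dimensional_collapse} makes cheap to reach. With either fix, Phase~I contributes $\mathrm{poly}(k)\log n/n$ to the KL bound, hence $\mathrm{poly}(k)n^{-1/4}$ in $W_2$ after the $(\cdot)^{1/4}$ conversion. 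Here the radius $R$ is taken $d$-free on the manifold part, using concentration of $p_{t_{\min}}$ within $O(\sqrt{h(t_{\min})d})=O(\sqrt{d/n})$ of $\mathcal{M}$; this quantity is lower order.

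\textbf{Phase~II.} Invoke Lemma~\ref{lem:phase1_gen}: ridge regression with the separable kernel $K_t\otimes K_x$ gives integrated error $\lesssim C_{\mathrm{HN}}d/n+O(e^{-T})$, with $d$ output coordinates, each parametric. Take $T\propto\log n$ so that both $e^{-T}$ and the initialization KL $(d+\operatorname{Tr}\Sigma)e^{-2T}$ are $O(d/n)$. The fourth-root conversion then yields $\sqrt{C_{\mathrm{HN}}d}\,n^{-1/4}$. Finally, the early-stopping bias $\sqrt{d/n}$ is dominated by both terms, and summing the three contributions with a final union bound over the two stages' high-probability events completes the plan.
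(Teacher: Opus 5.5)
Your skeleton matches the paper's: a Girsanov bound on the integrated score error, an exact split at $t_{\max}$ because $\alpha(t)$ is a hard gate, Phase~I from Theorems~\ref{thm:dimensional_collapse} and~\ref{thm:stage2_generalization}, Phase~II from Lemma~\ref{lem:phase1_gen}, then a KL-to-$W_2$ step. You differ in the bookkeeping and in that last step.

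The paper records a Phase~I KL of order $\mathrm{poly}(k)\log(1/\delta)/\sqrt{n}$ and a Phase~II KL of order $C_{\mathrm{HN}}\,d\log(1/\delta)/n+e^{-T}$, folding the initialization mismatch into $e^{-T}$. It then converts with the cited interpolation inequality \cite[Lemma~3]{benton2023nearly}, $W_2\lesssim\sqrt{\mathrm{KL}}\,n^{-1/4}+n^{-1/2}$ at $t_{\min}\propto 1/n$.

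You instead integrate Theorem~\ref{thm:stage2_generalization} over $t_2=t\in[t_{\min},t_{\max}]$, which surfaces three things the paper leaves implicit:
\begin{itemize}
\item the Stage~1 residual integrates to $\asymp h(t_1)^2/t_{\min}$, so Stage~1 must be run at $h(t_1)\lesssim h(t_{\min})\propto 1/n$ rather than at a fixed small level;
\item the initialization KL appears explicitly;
\item the early-stopping bias is $\sqrt{d\,h(t_{\min})}$, not $\sqrt{h(t_{\min})}$.
\end{itemize}
You get $1/n$ KL contributions in both phases, consistent with the $1/n$ estimation term of Theorem~\ref{thm:stage2_generalization}. Your elementary conversion $W_2\le 2R\sqrt{\mathrm{TV}}\lesssim R\,\mathrm{KL}^{1/4}$ then makes the $n^{-1/4}$ exponent exactly the fourth root of those rates. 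This gives a self-contained argument that does not depend on the cited inequality, at the price of a lossy $\mathrm{KL}^{1/4}$ conversion.

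That conversion rests on one claim you assert but do not prove: that $p_{\mathrm{gen},t_{\min}}$ lives in $B(0,R)$ up to negligible tails. This holds for $p_{t_{\min}}$ but is not automatic for the generated law. A bound on $\mathrm{KL}(p_{t_{\min}}\,\|\,p_{\mathrm{gen},t_{\min}})$ cannot stop the generated law from placing small mass far away, which can make $W_2$ large.

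The cleanest fix is to append a projection onto $B(0,R_{\mathcal M})$, with $R_{\mathcal M}:=\sup_{z\in\mathcal M}\|z\|$, to the sampler. By data processing it does not increase TV. It is also $1$-Lipschitz and fixes $p_{\mathrm{data}}$, so
$W_2(p_{\mathrm{data}},\mathrm{proj}_{\#}p_{\mathrm{gen},t_{\min}})\le W_2(p_{\mathrm{data}},p_{t_{\min}})+2R_{\mathcal M}\sqrt{\mathrm{TV}(p_{t_{\min}},p_{\mathrm{gen},t_{\min}})}$.
The radius here is free of $d$, which also removes your need to treat $R$ differently on the manifold part.

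Without clipping, you would need sub-Gaussian-type tail control of the generated SDE. This is plausible here, since $P_q$, $f_2$ and the HN head are uniformly bounded and the $-(x-\hat x)/h(t)$ drift contracts strongly in Phase~I, but it has to be argued.

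Two harmless slips:
\begin{itemize}
\item The fourth root of $C_{\mathrm{HN}}d/n$ is $(C_{\mathrm{HN}}d)^{1/4}n^{-1/4}$. Up to constants this is at most the stated Phase~II term, not equal to it.
\item The bias $\sqrt{d/n}$ is dominated only by the Phase~II term, not by the $d$-free Phase~I term. That suffices because the bound is a sum.
\end{itemize}
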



\begin{remark}[Comparison with existing rates]
The end-to-end rate in (\ref{eq:end2end_w2}) is best understood as a \emph{training-level} guarantee, complementing prior \emph{statistical} or \emph{sampling-level} analyses. 
Sampling-complexity results~\cite{li2024adapting, huang2026denoising, benton2023nearly} establish that DDPM samplers adapt to intrinsic dimension $k$ given an oracle or sufficiently accurate score, but do not characterize how training produces such a score. 
Statistical-rate results on converged estimators~\cite{chen2023score, oko2023diffusion, tang2024adaptivity, azangulov2024convergence} obtain nonparametric Wasserstein rates of the form $n^{-c/(k+\mathrm{const})}$ or $n^{-s/(2s+k)}$ on linear subspaces or smooth manifolds, but treat optimization abstractly. 
SiLD's contribution is orthogonal: we obtain a fixed-exponent $n^{-1/4}$ rate with an explicit account of how the induced denoising map collapse onto the data manifold projection $\Pi_\mathcal{M}$ under DSM gradient flow, and an architecture that confines the ambient dimension $d$ to a \emph{prefactor} $\sqrt{C_{\text{HN}}\, d}$ of the $n^{-1/4}$ rate rather than letting it appear in the exponent~\cite{oko2023diffusion}. 
\end{remark}

\begin{remark}[Choice of $t_{\min}$]
\label{rem:tmin}
The choice $t_{\min} \propto 1/n$ optimally balances two opposing errors: the truncation gap $W_2(p_{\mathrm{data}}, p_{\mathrm{data}, t_{\min}}) \lesssim \sqrt{h(t_{\min})}$ between the early-stopped and exact distributions, and the Girsanov integral on $[t_{\min}, t_{\max}]$, whose integrand grows as $h(t)^{-2}$ near $t_{\min}$. 
\end{remark}


\section{Experiments}
\label{sec:experiment}

We evaluate SiLD {(Algorithm \ref{alg:uld})} on synthetic low-dimensional manifolds and real-world image and molecular generation benchmarks. We compare SiLD against LDM~\cite{rombach2022high}, a standard latent diffusion model with a VAE encoder trained with KL regularization. All models share the same backbone and training budget to ensure a fair comparison. For images, we report Fr\'echet Inception Distance (FID)~\cite{heusel2017gans} for generation quality, and mean squared error (Recon MSE) and LPIPS~\cite{zhang2018unreasonable} for reconstruction quality. For molecules, we report validity, uniqueness, novelty (fraction of valid generations absent from the training set), internal diversity (IntDiv), drug-likeness (QED)~\cite{bickerton2012quantifying}, and Fr\'echet ChemNet Distance (FCD)~\cite{preuer2018frechet}, alongside reconstruction MSE. 

\textbf{Toy experiment.}
We validate the dimensional-collapse mechanism on a synthetic mixture-of-Gaussians on a $k=5$ linear subspace of $\mathbb{R}^{100}$, which admits an analytical score decomposition for direct comparison (setup and hyperparameters in Appendix~\ref{app:toy_details}). Figure~\ref{fig:mog_manifold_two_stage} shows the training dynamics: in Stage~1, the orthogonal contraction error (green) collapses rapidly while the manifold component error (red) remains flat; after the stage switch, Stage~2 exclusively refines the manifold component with $W$ frozen. The sharp transition empirically confirms Theorem~\ref{thm:dimensional_collapse}.

\begin{figure}[t]
  \centering
  \includegraphics[width=\linewidth]{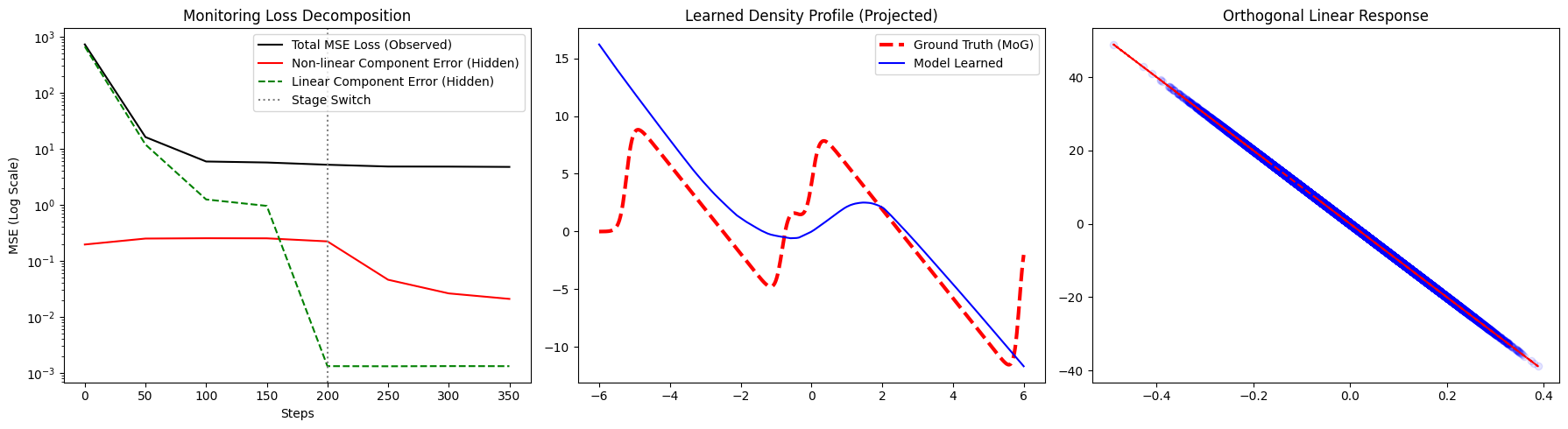}
  \caption{Two-stage learning dynamics on the Mixture of Gaussian on manifold experiment. \emph{Left:} total MSE loss (black), manifold component error (red), and orthogonal contraction error (green, dashed); vertical dotted line marks the Stage~1 $\to$ Stage~2 switch. \emph{Middle:} learned 1D score profile along the manifold matches the analytical MoG score after Stage~2. \emph{Right:} learned score along normal directions matches the theoretical contraction $-x_\perp/h_t$, confirming correct isotropic behavior off the manifold.}
  \label{fig:mog_manifold_two_stage}
\end{figure}

\textbf{Stacked MNIST.}
We evaluate on Stacked MNIST, a 1000-mode benchmark that stacks three random MNIST digits into the R/G/B channels ($d=2352$). The data manifold is non-smooth --- discrete digit identities per channel --- so the VAE's $\mathcal{N}(0,I)$ prior is  misaligned with the data geometry, making this a diagnostic test for the Stage-1 objective. SiLD and LDM-CNN share an \emph{identical} CNN encoder, decoder, and latent score network; only the Stage-1 loss differs. Both methods achieve full $1000/1000$ mode coverage (Table~\ref{tab:stacked_mnist}), ruling out mode collapse; at matched capacity, SiLD attains $2.0\times$ lower FID ($7.94 \pm 0.33$ vs.\ $16.11 \pm 0.09$, mean$\pm$std over 3 seeds) and $1.77\times$ lower reconstruction MSE. The reconstruction gap is visible \emph{before the diffusion head is applied}, pinpointing the KL regularizer rather than the diffusion process as the bottleneck: KL forces the encoder to smooth the thin data manifold into a Gaussian posterior, losing geometric fidelity. Qualitative samples in Appendix~\ref{app:stacked_mnist_samples} confirm the same picture.



\begin{table}[t]
\centering
\caption{%
Stacked MNIST (1000-mode benchmark). SiLD and LDM-CNN share an identical CNN encoder, decoder, and latent score network; only the Stage-1 autoencoder objective differs. Both methods achieve full 1000/1000 mode coverage, so the $2.0 \times$ FID gap and $1.77\times$ reconstruction-MSE gap reflect latent geometry rather than mode collapse.%
}
\small
\begin{tabular}{lccc}
\toprule
Method & \#modes / 1000 & Recon MSE $\downarrow$ & FID $\downarrow$ \\
\midrule
Real data           & 1000 & --                                & --                       \\
\midrule
LDM-CNN ($\beta$-VAE AE) & 1000 & $(6.28 \pm 0.08)\!\times\!10^{-3}$ & $16.11 \pm 0.09$ \\
\textbf{SiLD (ours)}     &  {1000} & $\mathbf{(3.55 \pm 0.02)\!\times\!10^{-3}}$ & $\mathbf{7.94 \pm 0.33}$ \\
\bottomrule
\end{tabular}
\label{tab:stacked_mnist}
\end{table}

\begin{table}[t]
\centering
\caption{Generation and reconstruction quality on CelebA (64$\times$64).
All models share identical encoder, decoder, and score network architecture.
Only the autoencoder training objective differs.}
\label{tab:celeba}
\begin{tabular}{llccc}
\toprule
\textbf{Method} & \textbf{AE Regularization} & \textbf{FID}$\downarrow$ & 
\textbf{Recon MSE}$\downarrow$ & \textbf{Eps Loss}$\downarrow$ \\
\midrule
LDM (baseline)     & KL divergence        & 43.58 & 0.00265 & \textbf{0.283} \\
\midrule
SiLD (ours)          & none                 & 46.22 & 0.00238 & 0.333 \\
SiLD + MMD (ours)    & MMD ($\alpha{=}0.1$) & 43.56 & 0.00229 & 0.353 \\
\textbf{SiLD + MMD (ours)} & \textbf{MMD ($\boldsymbol{\alpha{=}1.0}$)} 
                    & \textbf{42.52} & \textbf{0.00210} & 0.395 \\
SiLD + MMD (ours)    & MMD ($\alpha{=}5.0$) & 45.59 & 0.00207 & 0.424 \\
\midrule
LDM + GAN           & GAN + EMA       & 40.69 & 0.00103 & 0.269 \\
SiLD + GAN (ours)    & GAN + EMA            & \textbf{39.57} & \textbf{0.00098} & 0.297 \\
\bottomrule
\end{tabular}
\end{table}

\textbf{CelebA.}
Table~\ref{tab:celeba} compares SiLD and LDM on CelebA ($64\times64$) at matched architectural capacity, isolating the effect of the Stage-1 objective. At matched regularization strategy, SiLD consistently outperforms LDM: with MMD regularization, SiLD attains $42.52$ FID vs. $43.58$ for KL-regularized LDM; with GAN + EMA, SiLD reaches $39.57$ vs. $40.69$. Reconstruction MSE is lower for SiLD across every setting. The pure, unregularized SiLD variant attains the best reconstruction among KL-free methods ($0.00238$ vs.\ LDM's $0.00265$) but higher FID ($46.22$), reflecting that the latent distribution can contain irregular structures absent any smoothing prior; light auxiliary regularization (MMD or GAN) recovers generation quality while preserving the reconstruction advantage. 
Similar trends hold on CelebA-HQ ($64\times64\times3$, intrinsic dimension $k_{95} = 223$); see Appendix~\ref{app:celebahq}.

\begin{table*}[t]
\centering
\caption{Unconditional molecule generation results on four MoleculeNet benchmarks.
All methods evaluated on the same protocol (10K samples, RDKit validity, Frechet ChemNet Distance against held-out test set).
Our reproductions of prior methods use SELFIES representation for fair comparison.
Best result per column per dataset in \textbf{bold} (among generative methods, excluding real data).
$^\ddagger$SELFIES guarantees syntactic validity.
$^\S$CharRNN/Transformer AR achieve low FCD by partially memorizing training molecules (novelty $<$8\% on QM9).}
\label{tab:main_results}
\resizebox{\textwidth}{!}{%
\begin{tabular}{ll ll cccccc}
\toprule
\textbf{Dataset} & \textbf{Model} & \textbf{Type} & \textbf{Repr.} & \textbf{Validity$\uparrow$} & \textbf{Uniqueness$\uparrow$} & \textbf{Novelty$\uparrow$} & \textbf{Int.\ Div.$\uparrow$} & \textbf{QED$\uparrow$} & \textbf{FCD$\downarrow$} \\
\midrule
\multirow{9}{*}{\textbf{QM9}}
 & Real data                                     & ---              & SELFIES  & 100.0\% & 96.18\% & 0.0\%   & 0.919 & 0.466 & 5.05 \\
\cmidrule{2-10}
 & CharRNN$^\S$ \cite{segler2018generating}      & RNN              & SELFIES$^\ddagger$ & \textbf{100.0\%} & 96.32\% & 1.05\%      & 0.917 & 0.465 & 5.13 \\
 & Transformer AR$^\S$ \cite{wang2023cmolgpt,noutahi2024gotta} & Transformer      & SELFIES$^\ddagger$ & \textbf{100.0\%} & 96.46\% & 5.26\%      & \textbf{0.920} & 0.465 & 4.98 \\
\cmidrule{2-10}
 & VAE-only \cite{gomez2018automatic}            & VAE prior        & SELFIES$^\ddagger$ & \textbf{100.0\%} & 84.43\% & \textbf{88.93\%} & 0.900 & 0.439 & 7.43 \\
 & Latent GAN \cite{prykhodko2019novo}           & GAN in latent    & SELFIES$^\ddagger$ & \textbf{100.0\%} & 72.63\% & 74.58\% & 0.892 & \textbf{0.474} & 7.80 \\
 & LDM \cite{rombach2022high}                    & Latent diffusion & SELFIES$^\ddagger$ & \textbf{100.0\%} & 96.93\% & 37.6\%  & 0.915 & 0.472 & \textbf{4.28} \\
 & \textbf{SiLD (ours)}                          & Latent diffusion & SELFIES$^\ddagger$ & \textbf{100.0\%} & \textbf{97.44\%} & 35.9\% & 0.918 & 0.471 & 4.41 \\
\midrule
\multirow{9}{*}{\textbf{HIV}}
 & Real data                                     & ---              & SELFIES  & 100.0\% & 88.93\% & 0.0\%   & 0.898 & 0.513 & 3.24 \\
\cmidrule{2-10}
 & CharRNN$^\S$ \cite{segler2018generating}      & RNN              & SELFIES$^\ddagger$ & 99.97\% & 88.84\% & 4.39\%  & 0.899 & \textbf{0.514} & 3.24 \\
 & Transformer AR$^\S$ \cite{wang2023cmolgpt,noutahi2024gotta} & Transformer      & SELFIES$^\ddagger$ & 99.93\% & 95.16\% & 60.29\% & 0.899 & 0.506 & \textbf{2.91} \\
\cmidrule{2-10}
 & VAE-only \cite{gomez2018automatic}            & VAE prior        & SELFIES$^\ddagger$ & \textbf{100.0\%} & 94.18\% & 99.96\% & \textbf{0.912} & 0.388 & 17.66 \\
 & Latent GAN \cite{prykhodko2019novo}           & GAN in latent    & SELFIES$^\ddagger$ & \textbf{100.0\%} & 95.26\% & 99.99\% & 0.899 & 0.365 & 19.83 \\
 & LDM \cite{rombach2022high}                    & Latent diffusion & SELFIES$^\ddagger$ & \textbf{100.0\%} & 0.96\%  & \textbf{100.0\%} & 0.014 & 0.114 & 60.91 \\
 & \textbf{SiLD (ours)}                          & Latent diffusion & SELFIES$^\ddagger$ & \textbf{100.0\%} & \textbf{98.74\%} & 99.9\% & 0.904 & 0.430 & 10.43 \\
\midrule
\multirow{9}{*}{\textbf{MUV}}
 & Real data                                     & ---              & SELFIES  & 100.0\% & 95.03\% & 0.0\%   & 0.860 & 0.675 & 4.52 \\
\cmidrule{2-10}
 & CharRNN$^\S$ \cite{segler2018generating}      & RNN              & SELFIES$^\ddagger$ & \textbf{100.0\%} & 95.12\% & 8.17\%  & 0.862 & \textbf{0.676} & 4.36 \\
 & Transformer AR$^\S$ \cite{wang2023cmolgpt,noutahi2024gotta} & Transformer      & SELFIES$^\ddagger$ & \textbf{100.0\%} & 99.33\% & 83.08\% & 0.866 & 0.668 & \textbf{4.06} \\
\cmidrule{2-10}
 & VAE-only \cite{gomez2018automatic}            & VAE prior        & SELFIES$^\ddagger$ & \textbf{100.0\%} & 94.98\% & \textbf{100.0\%} & \textbf{0.919} & 0.364 & 33.48 \\
 & Latent GAN \cite{prykhodko2019novo}           & GAN in latent    & SELFIES$^\ddagger$ & \textbf{100.0\%} & 96.06\% & \textbf{100.0\%} & 0.904 & 0.379 & 34.88 \\
 & LDM \cite{rombach2022high}                    & Latent diffusion & SELFIES$^\ddagger$ & \textbf{100.0\%} & 95.60\% & \textbf{100.0\%} & 0.811 & 0.237 & 41.20 \\
 & \textbf{SiLD (ours)}                          & Latent diffusion & SELFIES$^\ddagger$ & \textbf{100.0\%} & \textbf{99.42\%} & \textbf{100.0\%} & 0.890 & 0.518 & 19.48 \\
\midrule
\multirow{9}{*}{\textbf{PCBA}}
 & Real data                                     & ---              & SELFIES  & 100.0\% & 98.91\% & 0.0\%   & 0.861 & 0.637 & 9.12 \\
\cmidrule{2-10}
 & CharRNN$^\S$ \cite{segler2018generating}      & RNN              & SELFIES$^\ddagger$ & \textbf{100.0\%} & 99.53\% & 64.8\%  & 0.864 & \textbf{0.635} & 8.08 \\
 & Transformer AR$^\S$ \cite{wang2023cmolgpt,noutahi2024gotta} & Transformer      & SELFIES$^\ddagger$ & \textbf{100.0\%} & \textbf{99.94\%} & 95.12\% & 0.869 & 0.624 & \textbf{7.38} \\
\cmidrule{2-10}
 & VAE-only \cite{gomez2018automatic}            & VAE prior        & SELFIES$^\ddagger$ & \textbf{100.0\%} & 91.65\% & 99.74\% & \textbf{0.915} & 0.389 & 28.62 \\
 & Latent GAN \cite{prykhodko2019novo}           & GAN in latent    & SELFIES$^\ddagger$ & \textbf{100.0\%} & 94.93\% & 99.82\% & 0.898 & 0.357 & 30.41 \\
 & LDM \cite{rombach2022high}                    & Latent diffusion & SELFIES$^\ddagger$ & \textbf{100.0\%} & 0.57\%  & 94.7\%  & 0.016 & 0.090 & 77.10 \\
 & \textbf{SiLD (ours)}                          & Latent diffusion & SELFIES$^\ddagger$ & \textbf{100.0\%} & 98.57\% & \textbf{99.8\%} & {0.895} & 0.491 & 17.79 \\
\bottomrule
\end{tabular}%
}
\end{table*}

\textbf{Molecular generation.}
We evaluate on four MoleculeNet benchmarks (QM9, MUV, HIV, PCBA) using the SELFIES representation~\cite{krenn2020self}, which guarantees syntactic validity by construction and thus isolates \emph{distributional} learning as the object of comparison. SiLD and LDM share identical MLP encoder, decoder, and latent score network ($\sim$13M parameters); only the Stage-1 objective differs. We additionally compare against CharRNN, a 2-layer LSTM at matching scale, and report prior string-based methods on QM9 for external reference.

Table~\ref{tab:main_results} reports the results. On QM9 (small molecules), SiLD and LDM both match the reference distribution closely, with SiLD slightly ahead on uniqueness ($97.44\%$ vs.\ $96.93\%$) and IntDiv ($0.918$ vs.\ $0.915$). The comparison separates dramatically on the larger, drug-like datasets (HIV, MUV, PCBA): {LDM undergoes distributional collapse}, with uniqueness dropping to $0.96\%$ on HIV, $95.60\%$ on MUV, and $0.57\%$ on PCBA, and IntDiv collapsing to $0.014$--$0.811$ (versus real-data values of $0.86$--$0.90$). In contrast, SiLD preserves the distribution: uniqueness $98.57\%$--$99.42\%$, IntDiv $0.890$--$0.904$, all within close range of the real training distribution. FCD shows the same pattern --- on PCBA, SiLD achieves $17.79$ versus LDM's $77.10$ (real-data self-reference $9.12$). CharRNN, the autoregressive baseline, matches the real-data distribution tightly but mostly by \emph{memorizing}: novelty is $1.05\%$ on QM9, $4.39\%$ on HIV --- it reproduces training molecules rather than generating new ones (except on the large PCBA where novelty reaches $64.8\%$). SiLD, in contrast, achieves near-perfect novelty ($99.9\%$--$100\%$) while preserving distributional structure, directly demonstrating that the Stage-1 objective produces a latent faithful to the data manifold rather than collapsing onto it or memorizing from it.

\section{Conclusion}
\label{sec:conclusion}

We introduced \textit{Score-induced Latent Diffusion (SiLD)}, a theoretically grounded framework in which manifold learning and density estimation emerge from a single DSM objective at different noise scales. The score singularity at small noise, rather than being an obstacle, drives a rapid dimensional collapse onto the data manifold; subsequent training at moderate noise reduces to ridge regression on a $k$-dimensional support, yielding an end-to-end sample complexity governed by the intrinsic dimension. Experiments on Stacked MNIST, CelebA, and four molecular benchmarks validate the predicted two-stage dynamic, most strikingly on drug-like molecular datasets, where KL-regularized LDMs undergo catastrophic distributional collapse while SiLD preserves the intrinsic distribution. More broadly, SiLD reframes the role of regularization in latent diffusion: the geometric structure of the score itself, rather than an externally imposed prior, is sufficient to organize the latent space. 

\section*{Acknowledgments}

We thank Atsushi Nitanda and Guoji Fu for the useful discussion. WH was supported by JSPS KAKENHI (24K20848) and JST BOOST (JPMJBY24G6). TS was partially supported by JSPS KAKENHI (24K02905) and JST CREST (JPMJCR2015). This research is supported by the National Research Foundation, Singapore and the Ministry of Digital Development and Information under the AI Visiting Professorship Programme (award number AIVP-2024-004). Any opinions, findings and conclusions or recommendations expressed in this material are those of the author(s) and do not reflect the views of National Research Foundation, Singapore and the Ministry of Digital Development and Information. KF was supported by JSPS KAKENHI Grant-in-Aid for Transformative Research Areas (A) 22H05106.

\bibliography{reference}
\bibliographystyle{plain}


\clearpage
\appendix



\section{Limitations}
\label{sec:limitations}

Our work is deliberately scoped to characterize the training dynamics of score matching when data lies on a low-dimensional manifold — the regime where the score singularity and intrinsic geometry interact most directly. The complementary problem of statistical optimality at the level of converged estimators, which has been well studied in prior work, is intentionally not the focus of our analysis. On the empirical side, we evaluate SiLD as a framework rather than as a system optimized for any single benchmark; integration with large-scale architectures and domain-specific design (e.g., text-to-image, video, scientific simulation) constitute natural extensions left to future work.

\section{Proofs of Main Results}
\label{app:proofs}

We collect the proofs of all theoretical results in this appendix, organized in the order they appear in the main text.

\subsection{Proofs of Proposition \ref{prop:score_decomp}}

\begin{proof}[Proof of Proposition~\ref{prop:score_decomp}]
We reduce the VP marginal to a heat-kernel convolution on $\mathcal M$ via a
rescaling, apply Laplace's method in Fermi coordinates, and return to $x$.

\medskip
\noindent\textbf{Step 1: Marginal density.}
Under the VP forward process, $x_t \mid x_0 = z \sim \mathcal N(a_t z, h(t) I_d)$
with $a_t := \sqrt{1-h(t)}$, so
\begin{equation}
p_t(x) = (2\pi h(t))^{-d/2} \int_{\mathcal M} p_0(z)\, \exp\!\left(-\frac{\|x - a_t z\|^2}{2h(t)}\right) d\mathrm{Vol}_{\mathcal M}(z).
\label{eq:pt-marginal}
\end{equation}

\medskip
\noindent\textbf{Step 2: Rescaling to a heat kernel on $\mathcal M$.}
Set $y := x/a_t$ and $\sigma^2 := h(t)/a_t^2$. Using
$\|x - a_t z\|^2 = a_t^2 \|y - z\|^2$, equation~\eqref{eq:pt-marginal} becomes
\[
p_t(x) = a_t^{-d}\, q_{\sigma^2}(y),
\qquad
q_{\sigma^2}(y) := \frac{1}{(2\pi \sigma^2)^{d/2}} \int_{\mathcal M} p_0(z)\, e^{-\|y - z\|^2/(2\sigma^2)}\, d\mathrm{Vol}_{\mathcal M}(z).
\]
Thus $q_{\sigma^2}$ is the (Euclidean) heat-kernel convolution of $p_0$ on
$\mathcal M$ at scale $\sigma^2$, and
\begin{equation}
\log p_t(x) = -d\log a_t + \log q_{\sigma^2}(x/a_t).
\label{eq:log-pt-rescale}
\end{equation}
Since $\sigma^2 = h(t) + O(h(t)^2)$, the regime $h(t)\ll 1$ is equivalent to
$\sigma^2 \ll 1$.

\medskip
\noindent\textbf{Step 3: Fermi coordinates on $\mathcal M$.}
Fix $y \in U_\tau$ and let $z_y^* := \Pi_{\mathcal M}(y)$, $\nu_y := y - z_y^* \in N_{z_y^*}\mathcal M$.
Pick Fermi (geodesic normal) coordinates $u=(u^1,\ldots,u^k)$ on $\mathcal M$
centered at $z_y^*$, with $\{e_1,\ldots,e_k\}$ an orthonormal frame of
$T_{z_y^*}\mathcal M$. The standard expansions are
\begin{align}
z(u) &= z_y^* + \sum_i e_i u^i + \tfrac{1}{2}\sum_{i,j} \mathrm{II}(e_i, e_j)\, u^i u^j + O(|u|^3), \\
\sqrt{\det g(u)} &= 1 - \tfrac{1}{6}\mathrm{Ric}_{ij}(z_y^*)\, u^i u^j + O(|u|^3),
\end{align}
where $\mathrm{II}:T_{z_y^*}\mathcal M \times T_{z_y^*}\mathcal M \to N_{z_y^*}\mathcal M$
is the vector-valued second fundamental form. Because $\nu_y \perp T_{z_y^*}\mathcal M$
and $\mathrm{II}$ is normal-valued, a direct computation gives
\begin{equation}
\|y - z(u)\|^2 = \|\nu_y\|^2 + u^\top(I_k - B_{\nu_y})\,u + O(|u|^3),
\qquad
(B_{\nu_y})_{ij} := \langle \nu_y,\, \mathrm{II}(e_i, e_j)\rangle.
\label{eq:dist-expansion}
\end{equation}
Positive definiteness of $I_k - B_{\nu_y}$ on $U_\tau$ follows from
$\|B_{\nu_y}\|_{\mathrm{op}} \le \|\nu_y\|/\tau < 1$, since the principal
curvatures of $\mathcal M$ are bounded by $1/\tau$.

\medskip
\noindent\textbf{Step 4: Laplace approximation of $q_{\sigma^2}$.}
Substituting \eqref{eq:dist-expansion} into the integral, expanding
$p_0(z(u)) = p_0(z_y^*) + \nabla_{\mathcal M} p_0(z_y^*)\cdot u + O(|u|^2)$, and
absorbing the volume-element correction into the $O(|u|^2)$ remainder, the
first-order term in $u$ vanishes by symmetry. The Gaussian integral
$\int_{\mathbb R^k} e^{-u^\top(I_k - B_{\nu_y})u/(2\sigma^2)}\,du = (2\pi\sigma^2)^{k/2}\det(I_k - B_{\nu_y})^{-1/2}$
then yields
\begin{equation}
q_{\sigma^2}(y)
= (2\pi\sigma^2)^{-(d-k)/2}\, p_0(z_y^*)\,
\frac{e^{-d_{\mathcal M}(y)^2/(2\sigma^2)}}{\det(I_k - B_{\nu_y})^{1/2}}\,\bigl(1 + O(\sigma^2)\bigr).
\label{eq:laplace-q}
\end{equation}

\medskip
\noindent\textbf{Step 5: Pullback to the $x$-variable.}
Combining \eqref{eq:log-pt-rescale} and \eqref{eq:laplace-q}, and collecting
$x$-independent terms into $C(t)$,
\begin{equation}
\log p_t(x) = -\frac{d_{\mathcal M}(x/a_t)^2}{2\sigma^2}
+ \log p_0\!\bigl(\Pi_{\mathcal M}(x/a_t)\bigr)
- \tfrac{1}{2}\log\det\bigl(I_k - B_{\nu_{x/a_t}}\bigr) + C(t) + O(h(t)).
\label{eq:logpt-raw}
\end{equation}
We expand each term to leading order in $h(t)$. Using
$1/a_t - 1 = h(t)/2 + O(h(t)^2)$ and Federer's identity
$\nabla\bigl[\tfrac{1}{2} d_{\mathcal M}^2\bigr](x) = x - \Pi_{\mathcal M}(x) = \nu_x$,
\[
d_{\mathcal M}(x/a_t)^2 = d_{\mathcal M}(x)^2 + h(t)\,\langle \nu_x,\, x\rangle + O(h(t)^2).
\]
Combined with $1/\sigma^2 = 1/h(t) - 1 + O(h(t))$,
\[
-\frac{d_{\mathcal M}(x/a_t)^2}{2\sigma^2}
= -\frac{d_{\mathcal M}(x)^2}{2h(t)}
+ \tfrac{1}{2} d_{\mathcal M}(x)^2 - \tfrac{1}{2}\langle \nu_x, x\rangle + O(h(t)).
\]
Using $x = \Pi_{\mathcal M}(x) + \nu_x$ and $\langle \nu_x,\nu_x\rangle = d_{\mathcal M}(x)^2$,
the two $O(1)$ pieces collapse:
\[
\tfrac{1}{2}d_{\mathcal M}(x)^2 - \tfrac{1}{2}\langle \nu_x, x\rangle
= -\tfrac{1}{2}\langle \nu_x,\, \Pi_{\mathcal M}(x)\rangle.
\]
The remaining two terms in \eqref{eq:logpt-raw} differ from their
$a_t = 1$ counterparts by $O(h(t))$ (since $\Pi_{\mathcal M}$ and $\nu_\cdot$
are $C^1$ on $U_\tau$ and $x/a_t - x = O(h(t))$), so they contribute only to
the remainder. Collecting:
\begin{equation}
\log p_t(x) = -\frac{d_{\mathcal M}(x)^2}{2h(t)}
+ \log p_0(\Pi_{\mathcal M}(x))
+ H(\Pi_{\mathcal M}(x), \nu_x) + C(t) + O(h(t)),
\label{eq:logpt-final}
\end{equation}
where the $O(1)$ correction term is
\begin{equation}
H(z, \nu) := -\tfrac{1}{2}\langle \nu,\, z\rangle - \tfrac{1}{2}\log\det\bigl(I_k - B_\nu\bigr),
\qquad z \in \mathcal M,\ \nu \in N_z\mathcal M.
\label{eq:H-def}
\end{equation}
Crucially, $H$ depends only on $(\Pi_{\mathcal M}(x),\, \nu_x)$, as required.
The first summand in \eqref{eq:H-def} is the VP-shrinkage contribution,
while the second is the extrinsic-curvature correction.

\medskip
\noindent\textbf{Step 6: Taking the gradient.}
Applying $\nabla_x$ to \eqref{eq:logpt-final} and using Federer's identity once
more,
\[
s^*(x,t) = -\frac{x - \Pi_{\mathcal M}(x)}{h(t)}
+ \nabla_x \log p_0\!\bigl(\Pi_{\mathcal M}(x)\bigr)
+ \nabla_x H\!\bigl(\Pi_{\mathcal M}(x),\, \nu_x\bigr) + O(h(t)),
\]
which is the decomposition in the statement, with remainder $o(1)$ as $h(t)\to 0$.
Term (I), $-(x-\Pi_{\mathcal M}(x))/h(t)$, is the conservative normal restoring
force of order $O(h(t)^{-1})$; term (II), $\nabla \log p_0 + \nabla H$, is the
$O(1)$ tangential score plus boundary corrections.
\end{proof}

\subsection{Proof of Lemma \ref{lem:structural-alignment}}
\begin{proof}[Proof of Lemma \ref{lem:structural-alignment}]
\emph{Gradient identity.} Since $\Psi'(z) = \sigma(z)$, the chain rule gives
$\nabla_x \Psi(w_j^\top x + b_j) = \sigma(w_j^\top x + b_j)\, w_j$. Differentiating
$\Phi_{\mathrm{net}}$ termwise,
\[
\nabla_x \Phi_{\mathrm{net}}(x; \theta)
= \frac{x}{h(t)} - \frac{1}{h(t)} \sum_{j=1}^m a_j\, \sigma(w_j^\top x + b_j)\, w_j
= \frac{x}{h(t)} - \frac{1}{h(t)}\, W\, \mathrm{diag}(a)\, \sigma(W^\top x + b)
= -f_1(x; \theta),
\]
which establishes the identity on all of $\mathbb R^d$.

\smallskip
\emph{Projection approximation.} Define the target potential
$\Phi^*(x, t) := d_{\mathcal M}(x)^2 / (2\, h(t))$ and the smooth part
\[
G^*(x) := \tfrac{1}{2} \|x\|^2 - \tfrac{1}{2} d_{\mathcal M}(x)^2
= \langle x, \Pi_{\mathcal M}(x) \rangle - \tfrac{1}{2} \| \Pi_{\mathcal M}(x) \|^2,
\qquad x \in U_\tau,
\]
where the second equality follows by expanding
$d_{\mathcal M}(x)^2 = \|x - \Pi_{\mathcal M}(x)\|^2 = \|x\|^2 - 2\langle x, \Pi_{\mathcal M}(x)\rangle + \|\Pi_{\mathcal M}(x)\|^2$.
Correspondingly, define the ridge sum
\[
G_\theta(x) := \sum_{j=1}^{m} a_j\, \Psi(w_j^\top x + b_j),
\]
so that $\Phi_{\mathrm{net}}(x; \theta) = \|x\|^2/(2\, h(t)) - G_\theta(x)/h(t)$
and $\Phi^*(x, t) = \|x\|^2/(2\, h(t)) - G^*(x)/h(t)$. Taking gradients,
\begin{equation}
\nabla_x G_\theta(x) = \sum_{j=1}^{m} a_j\, \sigma(w_j^\top x + b_j)\, w_j = W\, \mathrm{diag}(a)\, \sigma(W^\top x + b),
\qquad
\nabla_x G^*(x) = \Pi_{\mathcal M}(x),
\label{eq:grad-identities}
\end{equation}
where the second identity follows from Federer's formula
$\nabla_x [\tfrac{1}{2} d_{\mathcal M}(x)^2] = x - \Pi_{\mathcal M}(x)$
on $U_\tau$~\cite{federer1959curvature}. Therefore
\[
W\, \mathrm{diag}(a)\, \sigma(W^\top x + b) - \Pi_{\mathcal M}(x) = \nabla_x G_\theta(x) - \nabla_x G^*(x),
\]
and \eqref{eq:approx-target} reduces to approximating $G^*$ by ridge sums
$G_\theta$ in the $C^1$-norm on $K := \overline{U_{\tau/2}}$.

The set $K$ is compact since $\mathcal M$ is compact and $K$ is the closed
tube of radius $\tau/2$ around it. Under the standing assumption
$\mathcal M \in C^\infty$, Federer's structure theorem gives
$\Pi_{\mathcal M} \in C^\infty(U_\tau)$, hence $G^* \in C^\infty(K)$. In
particular $G^* \in C^1(K)$.

Since $\sigma \in C^2$ is non-polynomial, so is its primitive $\Psi \in C^3$;
indeed, if $\Psi$ were polynomial then $\sigma = \Psi'$ would be polynomial,
contradicting the hypothesis. The conservative-form parametrization $\theta = (W, a, b)$ produces precisely the standard ridge dictionary
\[
\mathcal{R}(\Psi) := \Big\{ \textstyle\sum_{j=1}^{m} a_j\, \Psi(w_j^\top x + b_j) \,:\, m \in \mathbb N,\; a_j \in \mathbb R,\; w_j \in \mathbb R^d,\; b_j \in \mathbb R \Big\}
\]
for the non-polynomial activation $\Psi \in C^3$. By the $C^k$ density theorem for ridge functions~\cite[Thm.~4.1]{pinkus1999approximation}, $\mathcal{R}(\Psi)$ is dense in $C^1(K)$ under the norm $\|g\|_{C^1(K)} := \sup_K |g| + \sup_K \|\nabla g\|$. Consequently, for any $\delta > 0$ there exist $m \in \mathbb N$ and parameters $\theta = (W, a, b) \in \mathbb R^{d\times m} \times \mathbb R^{m} \times \mathbb R^{m}$ with $\|G_\theta - G^*\|_{C^1(K)} < \delta$. Choosing $\delta := \varepsilon$,
\[
\sup_{x \in K} \| W\, \mathrm{diag}(a)\, \sigma(W^\top x + b) - \Pi_{\mathcal M}(x) \|
= \sup_{x \in K} \| \nabla G_\theta(x) - \nabla G^*(x) \|
\leq \| G_\theta - G^* \|_{C^1(K)} < \varepsilon,
\]
which is~\eqref{eq:approx-target}.
\end{proof}

\subsection{Proof of Theorem \ref{thm:dimensional_collapse}}

\begin{proof}[Proof of Theorem~\ref{thm:dimensional_collapse}]
 
\textbf{Step 1: Initial risk bound.}
At $s = 0$, parameters are drawn from $q_0 = \nu_w \otimes \nu_a \otimes \nu_b$ with
$\nu_w = \mathcal{N}(0, \sigma_w^2 I_d)$ and $|a| \leq A$ almost surely under
$\nu_a$. By Jensen's inequality and $\|\sigma\|_\infty \leq C_\sigma$,
\[
    \|P_{q_0}(x)\|^2
    \;=\; \Bigl\|\!\int a\, w\, \sigma(w^\top x + b)\, q_0(\mathrm{d}\theta)\Bigr\|^2
    \;\leq\; \int a^2 \|w\|^2 \sigma(w^\top x + b)^2\, q_0(\mathrm{d}\theta)
    \;\leq\; C_\sigma^2\, \sigma_a^2\, \sigma_w^2\, d,
\]
where the last step uses independence of $(w, a)$ under $q_0$:
$\int a^2 \|w\|^2\, q_0(\mathrm{d}\theta) = \sigma_a^2 \cdot \sigma_w^2 d$.
This bound is independent of $x$, so by the triangle inequality and
$\mathbb{E}\|\Pi_{\mathcal{M}}(x)\|^2 = \operatorname{Tr}(\Sigma_{\mathcal{M}})$,
\[
    F_t(q_0)
    = \tfrac{1}{2}\,\mathbb{E}\|P_{q_0}(x) - \Pi_{\mathcal{M}}(x)\|^2
    \leq \mathbb{E}\|P_{q_0}(x)\|^2 + \mathbb{E}\|\Pi_{\mathcal{M}}(x)\|^2
    \leq C_\sigma^2\, \sigma_a^2\, \sigma_w^2\, d + \operatorname{Tr}(\Sigma_{\mathcal{M}}).
\]

\textbf{Step 2: Loss decomposition.}
Let $R_q(x) := f_1(x; q) - s^*(x,t)$ be the score residual. By
Proposition~\ref{prop:score_decomp},
$s^*(x,t) = -\frac{x - \Pi_{\mathcal{M}}(x)}{h(t)} + r(x,t)$
where $\|r(x,t)\| \leq C_R$ uniformly on $U_{\tau/2}$ for a constant $C_R > 0$
depending on $\mathcal{M}$ and $p_0$. Since
$f_1(x; q) = \frac{1}{h(t)}(P_q(x) - x)$, the residual is
\[
    R_q(x) = \frac{P_q(x) - \Pi_{\mathcal{M}}(x)}{h(t)} - r(x,t).
\]
Squaring and taking expectations (using that $p_t$ concentrates on $U_{\tau/2}$
up to an $e^{-\tau^2/(8h(t))}$ tail):
\begin{equation}\label{eq:loss_decomp}
    L_t(q) \;=\; \frac{1}{h(t)^2}\, F_t(q)
    \;-\; \frac{1}{h(t)}\, \widetilde{R}_t(q)
    \;+\; \tfrac{1}{2}\,\mathbb{E}\|r\|^2,
\end{equation}
where $\widetilde{R}_t(q) := \mathbb{E}\bigl[\langle P_q(x) - \Pi_{\mathcal{M}}(x), r(x,t)\rangle\bigr]$
is the scale-crossing term. The first variation of $L_t$ along the trainable
$w$-direction satisfies
\begin{equation}\label{eq:grad_Lt}
    \nabla_w \frac{\delta L_t}{\delta q}(q)(\theta)
    \;=\; \frac{1}{h(t)^2}\, \nabla_w \frac{\delta F_t}{\delta q}(q)(\theta)
    \;-\; \frac{1}{h(t)}\, \nabla_w \frac{\delta \widetilde{R}_t}{\delta q}(q)(\theta).
\end{equation}
 
\textbf{Step 3: Energy dissipation.}
The chain rule along the $w$-restricted Wasserstein gradient flow~\eqref{eq:Lt_flow} of $L_t$ gives
\begin{align*}
    \frac{d}{ds} F_t(q_s)
    &= -\int_{\mathbb{R}^{d+2}} \!\Bigl\langle \nabla_w \tfrac{\delta F_t}{\delta q},\;
       \nabla_w \tfrac{\delta L_t}{\delta q} \Bigr\rangle q_s(\mathrm{d}\theta) \\
    &= -\frac{1}{h(t)^2} \int \!\Bigl\|\nabla_w \tfrac{\delta F_t}{\delta q}\Bigr\|^2 q_s(\mathrm{d}\theta)
       \;+\; \frac{1}{h(t)} \int \!\Bigl\langle \nabla_w \tfrac{\delta F_t}{\delta q},\;
       \nabla_w \tfrac{\delta \widetilde{R}_t}{\delta q}\Bigr\rangle q_s(\mathrm{d}\theta),
\end{align*}
where we used integration by parts (in $w$) for the $w$-restricted continuity
equation and substituted~\eqref{eq:grad_Lt}.
 
\textbf{Step 4: Young's inequality.}
For the cross-term, apply
$\tfrac{1}{h(t)}\langle A, B\rangle \leq \tfrac{1}{2h(t)^2}\|A\|^2 + \tfrac{1}{2}\|B\|^2$
pointwise, yielding
\begin{equation}\label{eq:dissipation_bound}
    \frac{d}{ds} F_t(q_s)
    \;\leq\; -\frac{1}{2 h(t)^2}
    \int \!\Bigl\|\nabla_w \tfrac{\delta F_t}{\delta q}\Bigr\|^2 q_s(\mathrm{d}\theta)
    \;+\; \frac{1}{2} \int \!\Bigl\|\nabla_w \tfrac{\delta \widetilde{R}_t}{\delta q}\Bigr\|^2 q_s(\mathrm{d}\theta).
\end{equation}
 
\textbf{Step 5: Bound the remainder.}
The first variation of $\widetilde{R}_t$ at $\theta = (w, a, b)$ is
$\frac{\delta \widetilde{R}_t}{\delta q}(\theta) = a\, \mathbb{E}_x\bigl[\sigma(w^\top x + b)\, \langle w, r(x,t)\rangle\bigr]$.
Differentiating in $w$ and using $|\sigma|, |\sigma'| \leq C_\sigma$,
\[
    \nabla_w \frac{\delta \widetilde{R}_t}{\delta q}(\theta)
    = a\, \mathbb{E}_x\!\bigl[
        \sigma'(w^\top x + b)\, x\, \langle r(x,t), w\rangle
        + \sigma(w^\top x + b)\, r(x,t)
    \bigr],
\]
so $\bigl\|\nabla_w \tfrac{\delta \widetilde{R}_t}{\delta q}(\theta)\bigr\|
\leq |a|\, C_\sigma\, C_R\, (\|w\| \cdot \mathbb{E}[\|x\|] + 1)$.
Since $|a| \leq A$ almost surely along the flow ($a$ is frozen at $\nu_a$),
and by Assumption~\ref{ass:confinement} $\int \|w\|^2 q_s(\mathrm{d}\theta) \leq M_2$ uniformly in $s$,
\[
    \int \!\Bigl\|\nabla_w \tfrac{\delta \widetilde{R}_t}{\delta q}\Bigr\|^2 q_s(\mathrm{d}\theta)
    \;\leq\; 2\, A^2\, C_\sigma^2\, C_R^2 \bigl(\mathbb{E}[\|x\|^2]\, M_2 + 1\bigr) \;=:\; C_t,
\]
where the inequality uses $(p+q)^2 \leq 2p^2 + 2q^2$ followed by Jensen
$(\mathbb{E}\|x\|)^2 \leq \mathbb{E}\|x\|^2$. The constant $C_t > 0$ depends on
$\mathcal{M}$, $\tau$, $C_\sigma$, $A$, and $M_2$, but not on $s$.
 
\textbf{Step 6: Non-degeneracy and Gr\"onwall.}
Applying the Geometric Non-degeneracy Condition~\eqref{eq:nondegen} to the
first integral in~\eqref{eq:dissipation_bound} yields the linear differential
inequality
\[
    \frac{d}{ds} F_t(q_s)
    \;\leq\; -\frac{\nu}{2\, h(t)^2}\, F_t(q_s)
    \;+\; \frac{C_t}{2}.
\]
Gr\"onwall's lemma from time $0$ to $s$ gives
\[
    F_t(q_s)
    \;\leq\; F_t(q_0)\, e^{-\nu s / (2h(t)^2)}
    + \frac{C_t\, h(t)^2}{\nu}\,
    \bigl(1 - e^{-\nu s/(2h(t)^2)}\bigr)
    \;\leq\; F_t(q_0)\, e^{-\nu s / (2h(t)^2)}
    + \frac{C_t\, h(t)^2}{\nu},
\]
establishing~\eqref{eq:collapse_rate}. Since
$F_t(q_s) = \tfrac{1}{2}\mathbb{E}\|P_{q_s}(x) - \Pi_{\mathcal{M}}(x)\|^2$,
the $L^2$ error of the learned projection converges to $O(h(t))$
as $s \to \infty$.
\end{proof}

\begin{lemma}[Second-Moment Confinement]
\label{lem:second_moment}
Under the hypotheses of Theorem~\ref{thm:dimensional_collapse}, suppose
additionally that the training objective includes an $\ell_2$ weight
regularizer with coefficient $\lambda_w > 0$, i.e., the flow~\eqref{eq:Lt_flow}
is replaced by
\begin{equation}\label{eq:regularized_flow}
    \partial_s q_s = \nabla_w \cdot\!\left(q_s\, \nabla_w
    \frac{\delta \widetilde{L}_t}{\delta q}(q_s)\right),
    \qquad
    \widetilde{L}_t(q) := L_t(q) + \frac{\lambda_w}{2}\int \|w\|^2\, q(dw).
\end{equation}
Then the second moment satisfies
\begin{equation}\label{eq:m2_bound}
    m_2(q_s) \;\leq\; \max\!\bigl(m_2(q_0),\; C_{\mathrm{conf}} / \lambda_w\bigr)
    \qquad \text{for all } s \geq 0,
\end{equation}
where $C_{\mathrm{conf}}$ depends on $C_\sigma$, $\mathbb{E}[\|x\|^2]$, $h(t)$,
and $\|s^*(\cdot, t)\|_{L^2(p_t)}$, but not on $s$.
\end{lemma}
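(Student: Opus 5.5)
We track $m_2(q_s) = \int \|w\|^2\, q_s(\mathrm{d}\theta)$ along the regularized flow~\eqref{eq:regularized_flow} and show it obeys a differential inequality of the form $\frac{d}{ds} m_2 \le -\lambda_w m_2 + C_{\mathrm{conf}}$. A comparison argument then gives~\eqref{eq:m2_bound}. The plan has four steps.

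\textbf{Step 1: differentiate the second moment.} Test the continuity equation against $\phi(\theta) = \|w\|^2$. Integrating by parts in $w$ gives
\[
\frac{d}{ds} m_2(q_s) = -2\int \Bigl\langle w,\, \nabla_w \tfrac{\delta L_t}{\delta q}(q_s)(\theta)\Bigr\rangle q_s(\mathrm{d}\theta) \;-\; 2\lambda_w\, m_2(q_s),
\]
because $\nabla_w \frac{\delta}{\delta q}\bigl[\tfrac{\lambda_w}{2}\int\|w\|^2 q\bigr] = \lambda_w w$. The $(a,b)$-marginal is frozen, so only the $w$-part contributes.

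\textbf{Step 2: compute the loss gradient.} Since $f_1(x;q) = \frac{1}{h(t)}(P_q(x) - x)$, we have $\frac{\delta L_t}{\delta q}(\theta) = \frac{a}{h(t)}\,\mathbb{E}_x[\sigma(w^\top x + b)\langle w, R_q(x)\rangle]$ with $R_q = f_1(\cdot;q) - s^*$. This is the same structure as Step~5 of the proof of Theorem~\ref{thm:dimensional_collapse}, with $r$ replaced by $R_q$. Therefore
\[
\nabla_w \tfrac{\delta L_t}{\delta q} = \tfrac{a}{h(t)}\,\mathbb{E}_x\bigl[\sigma'(\cdot)\,x\,\langle w, R_q\rangle + \sigma(\cdot)\,R_q\bigr].
\]
Pairing with $w$ and using $|a|\le A$, $|\sigma|,|\sigma'|\le C_\sigma$ and Cauchy--Schwarz in $x$ yields
\[
\bigl|\langle w, \nabla_w \tfrac{\delta L_t}{\delta q}\rangle\bigr| \le \tfrac{A C_\sigma}{h(t)}\,\|R_q\|_{L^2(p_t)}\bigl(\|w\|^2 (\mathbb{E}\|x\|^2)^{1/2} + \|w\|\bigr).
\]

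\textbf{Step 3: control the residual (main obstacle).} The term $\|R_q\|_{L^2(p_t)}$ must be bounded uniformly in $s$. The key point is that the flow is a gradient flow of $\widetilde L_t$, so $\widetilde L_t(q_s) \le \widetilde L_t(q_0)$. Hence
\[
\tfrac12\|R_{q_s}\|^2_{L^2} = L_t(q_s) \le \widetilde L_t(q_0) \le \tfrac{1}{h(t)^2}\bigl(C_\sigma^2\sigma_a^2\sigma_w^2 d + \mathbb{E}\|x\|^2\bigr) + \|s^*\|^2_{L^2(p_t)} + \tfrac{\lambda_w}{2}m_2(q_0).
\]
Here $\|P_{q_0}\|$ is bounded as in Step~1 of the proof of Theorem~\ref{thm:dimensional_collapse}. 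This gives $\|R_{q_s}\|_{L^2}\le \rho$ with $\rho$ independent of $s$.

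The difficulty is that the cross term is still of order $\rho\, m_2$, i.e.\ quadratic in $w$, so it is not automatically dominated by $\lambda_w m_2$. I would try two options, in order:
\begin{enumerate}[label=(\roman*)]
\item Restrict to $\lambda_w > \lambda_0 := 2A C_\sigma \rho\,(\mathbb{E}\|x\|^2)^{1/2}/h(t)$ and absorb the quadratic term into $\lambda_w m_2$, with $C_{\mathrm{conf}}$ absorbing constants.
\item Bound $|\sigma'(w^\top x+b)|\,|\langle w,\cdot\rangle|$ using a growth condition on $\sigma'$ such as $|z\sigma'(z)|\le C_\sigma$, which holds for bounded sigmoid-type activations. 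The term $\sigma'(w^\top x + b)\, w^\top x$ is then $O(1)$ and the cross term becomes linear in $\|w\|$.
\end{enumerate}
Under (ii), Jensen gives a cross term bounded by $\frac{A C_\sigma \rho}{h(t)}(c + m_2^{1/2})$. Young's inequality then gives $\le \lambda_w m_2 + C/\lambda_w$.

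\textbf{Step 4: comparison argument.} Steps~1--3 give $\frac{d}{ds}m_2 \le -\lambda_w m_2 + C_{\mathrm{conf}}$, with $C_{\mathrm{conf}}$ depending on $C_\sigma$, $A$, $\mathbb{E}\|x\|^2$, $h(t)$ and $\|s^*\|_{L^2(p_t)}$ via $\rho$ (after rescaling constants by $\lambda_w$ as needed). Whenever $m_2 > C_{\mathrm{conf}}/\lambda_w$, the derivative is negative. Hence $m_2(q_s)$ never exceeds $\max(m_2(q_0), C_{\mathrm{conf}}/\lambda_w)$, which is~\eqref{eq:m2_bound}.
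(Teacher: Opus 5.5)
Your option (i) is the paper's own route. The paper also differentiates $m_2$ along the flow, bounds $\langle w,\nabla_w \frac{\delta L_t}{\delta q}\rangle$ by a constant times $\|R_{q_s}\|_{L^2}(\|w\|^2+\|w\|)$, controls the residual uniformly in $s$, and absorbs the quadratic term by requiring $\lambda_w$ above a threshold (its ``$\lambda_w > C'$''). So both arguments establish the bound only for sufficiently large $\lambda_w$, not every $\lambda_w>0$; option (ii) would lift that restriction only under the extra hypothesis $\sup_z|z\sigma'(z)|<\infty$, which the theorem does not assume (e.g.\ $\sigma=\sin$ violates it). Your residual control via $\widetilde L_t(q_s)\le\widetilde L_t(q_0)$ is more careful than the paper's claim that $L_t$ itself is non-increasing along the regularized flow, since only $\widetilde L_t$ is. The resulting dependence $\rho=O(\sqrt{\lambda_w})$ is harmless, because the threshold $\lambda_w > c\,\rho(\lambda_w)$ still holds for all large $\lambda_w$.
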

 
\begin{proof}
The regularized velocity field is $v(w) = -\nabla_w \frac{\delta L_t}{\delta q}(w)
- \lambda_w w$. Computing the evolution of the second moment,
\begin{equation}\label{eq:m2_evolution}
    \frac{d}{ds} m_2(q_s)
    = -2 \int \!\bigl\langle w,\, \nabla_w \tfrac{\delta L_t}{\delta q}(w)\bigr\rangle\, q_s(dw)
    \;-\; 2\lambda_w\, m_2(q_s).
\end{equation}
For the first term, the functional derivative of $L_t$ is
$\frac{\delta L_t}{\delta q}(w) = \frac{1}{h(t)}\,\mathbb{E}_x\!\bigl[
\sigma(w^\top x)\,\langle R_q(x),\, w\rangle\bigr]$
where $R_q(x) := f_1(x; q) - s^*(x,t)$ is the residual. Its $w$-gradient satisfies
\[
    \nabla_w \frac{\delta L_t}{\delta q}(w)
    = \frac{1}{h(t)}\,\mathbb{E}_x\!\bigl[
        \sigma'(w^\top x)\, x\, \langle R_q, w\rangle
        + \sigma(w^\top x)\, R_q
    \bigr].
\]
Taking the inner product with $w$ and using $|\sigma| \leq C_\sigma$,
$|\sigma'| \leq C_\sigma$:
\[
    \bigl|\bigl\langle w,\, \nabla_w \tfrac{\delta L_t}{\delta q}(w)\bigr\rangle\bigr|
    \;\leq\; \frac{C_\sigma}{h(t)}\,\bigl(\|w\|^2 + \|w\|\bigr)\,
    \mathbb{E}\!\bigl[\|R_q(x)\|\,(\|x\| + 1)\bigr].
\]
Since $\|R_q\|_{L^2}^2 = 2 L_t(q_s)$ is non-increasing along the gradient flow
of $\widetilde{L}_t$, and $\mathbb{E}[(\|x\|+1)^2]$ is a fixed constant
$C_{x,t}^2$ under $p_t$, integrating against $q_s$ yields
\[
    \left|\int \!\bigl\langle w,\, \nabla_w \tfrac{\delta L_t}{\delta q}\bigr\rangle
    \, q_s(dw)\right|
    \;\leq\; \frac{C_\sigma C_{x,t} \sqrt{2 L_t(q_0)}}{h(t)}\,
    \bigl(m_2(q_s) + \sqrt{m_2(q_s)}\bigr)
    \;\leq\; C'(m_2(q_s) + 1),
\]
where $C' := C_\sigma C_{x,t} \sqrt{2 L_t(q_0)} / h(t)$.
Substituting back into \eqref{eq:m2_evolution}:
\[
    \frac{d}{ds} m_2 \;\leq\; 2C'(m_2 + 1) - 2\lambda_w m_2
    = -(2\lambda_w - 2C')\, m_2 + 2C'.
\]
Choosing $\lambda_w > C'$ (i.e., $\lambda_w$ depends on $h(t)$, $C_\sigma$,
and $L_t(q_0)$), the coefficient of $m_2$ is negative, and Gr\"onwall's lemma
gives \eqref{eq:m2_bound} with $C_{\mathrm{conf}} = 2C'$.
\end{proof}
 
\begin{remark}
Since Stage~1 operates at a fixed noise level $h(t_1)$, the regularization
coefficient $\lambda_w$ is a fixed hyperparameter chosen before training.
The regularization introduces a bias of order $O(\lambda_w)$ in the
alignment risk, which can be made arbitrarily small relative to the
$O(h(t)^2)$ residual in \eqref{eq:collapse_rate} by choosing
$\lambda_w = o(h(t)^2)$.
\end{remark}

\begin{proposition}[Non-degeneracy for Linear Manifolds]
\label{prop:PL_linear}
Let $\mathcal{M} = \mathrm{range}(A)$ for some $A \in \mathbb{R}^{d \times k}$
with $A^\top A = I_k$, so that $\Pi_{\mathcal{M}}(x) = AA^\top x$. Let
$\Sigma_t := \mathbb{E}_{x \sim p_t}[xx^\top] = (1 - h(t))\Sigma_0 + h(t) I_d$,
where $\Sigma_0 = \mathbb{E}[x_0 x_0^\top]$. Assume $\sigma \in C^2(\mathbb{R})$
is non-polynomial with $\|\sigma\|_\infty, \|\sigma'\|_\infty \leq C_\sigma$, the trainable weights satisfy
$\int \|w\|^2 q_s(\mathrm{d}\theta) \leq M_2$ for all $s \geq 0$, and the frozen
output coefficients have second moment $\sigma_a^2 := \mathbb{E}_{\nu_a}[a^2] > 0$.

Then the Geometric Non-degeneracy Condition~\eqref{eq:nondegen} holds with
\begin{equation}\label{eq:nu_linear}
    \nu \;=\; \sigma_a^2\, \cdot\, \lambda_{\min}(\mathcal{T}_t)\, \cdot\, \mu_\sigma^2,
\end{equation}
where $\lambda_{\min}(\mathcal{T}_t)$ is the smallest eigenvalue of the
\emph{feature covariance operator}
\[
    \mathcal{T}_t[g](w, b)
    := \mathbb{E}_{x \sim p_t}\!\bigl[\sigma(w^\top x + b)\, \langle g, x \rangle\, x\bigr],
\]
acting on vector fields aligned with $(\mathrm{range}(A))^\perp$ and indexed by
$(w, b)$ under the $(w, b)$-marginal of $q_s$, and $\mu_\sigma > 0$ is a constant
depending only on $\sigma$ and the spectral bounds of $\Sigma_t$. The factor
$\sigma_a^2$ originates from the multiplicative $a$ in
$\nabla_w \tfrac{\delta F_t}{\delta q}(\theta)$ (cf.\ Step~5 of the proof of
Theorem~\ref{thm:dimensional_collapse}).

In particular, $\nu$ is independent of $h(t)$ in the following sense: for any
compact interval $[h_{\min}, h_{\max}] \subset (0, 1)$,
\begin{equation}
    \inf_{h(t) \in [h_{\min}, h_{\max}]} \nu(h(t)) \;>\; 0,
\end{equation}
since $\Sigma_t$ is positive definite and varies continuously in $h(t)$, and
the spectral gap of $\mathcal{T}_t$ depends continuously on $\Sigma_t$.
\end{proposition}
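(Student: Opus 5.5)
We will verify \eqref{eq:nondegen} by computing $\nabla_w \frac{\delta F_t}{\delta q}$ explicitly in the linear case, writing the integrated squared gradient as a quadratic form in the residual, and bounding that form below by $F_t$.

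\textbf{Step 1 (the residual).} Set $e_q(x) := P_q(x) - AA^\top x$, so that $F_t(q) = \tfrac12\mathbb{E}\|e_q(x)\|^2$.

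\textbf{Step 2 (first variation).} Since $P_q$ is linear in $q$,
\[
\frac{\delta F_t}{\delta q}(\theta) = a\,\mathbb{E}_x\bigl[\sigma(w^\top x+b)\langle w, e_q(x)\rangle\bigr].
\]
Differentiating in $w$ gives
\[
\nabla_w \frac{\delta F_t}{\delta q}(\theta) = a\,\mathbb{E}_x\bigl[\sigma(w^\top x + b)\,e_q(x) + \sigma'(w^\top x+b)\langle w, e_q(x)\rangle\, x\bigr].
\]
Because $a$ is frozen and independent of $(w,b)$ under $q_s$, integrating the squared norm against $q_s$ factors out $\sigma_a^2$. What remains is $\int \|\mathcal{G}[e_q](w,b)\|^2\, q_s^{(w,b)}$, where $\mathcal{G}$ is the linear map from the residual to the bracketed expectation.

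\textbf{Step 3 (reduce to the feature operator).} In the linear case we decompose the residual into a part along $\mathcal{M}$ and a part along $(\mathrm{range}A)^\perp$. The normal part is the one the statement singles out: there the target projection vanishes, so the error is governed by how the learned map acts on normal directions. We then approximate, or parametrize, $e_q(x)$ as $\langle g, x\rangle$-type linear functionals of $x$. This is justified on $\mathrm{supp}(p_t)$ to the relevant order: both $AA^\top x$ and the leading behaviour of $P_q$ act linearly on the Gaussian normal component of $x$. With that form, the $\sigma$-term of $\mathcal{G}$ becomes exactly $\mathcal{T}_t[g](w,b)$, with $g$ aligned with $(\mathrm{range}A)^\perp$.

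\textbf{Step 4 (lower bound).} We lower bound $\int\|\mathcal{T}_t[g]\|^2 q_s \ge \lambda_{\min}(\mathcal{T}_t)\,\|g\|^2$. We then relate $\|g\|^2$ to $\mathbb{E}\|e_q\|^2 = \mathrm{tr}(g^\top\Sigma_t g)$ using the spectral bounds of $\Sigma_t$. The $\sigma'$-term is a correction controlled by $C_\sigma$ and $M_2$. Absorbing both effects gives the constant $\mu_\sigma^2$ that depends only on $\sigma$ and the spectrum of $\Sigma_t$. Combining, we obtain $\int\|\nabla_w \tfrac{\delta F_t}{\delta q}\|^2 q_s \ge \sigma_a^2\lambda_{\min}(\mathcal{T}_t)\mu_\sigma^2 F_t(q_s)$.

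\textbf{Step 5 (uniformity in $h$).} $\Sigma_t = (1-h)\Sigma_0 + hI_d$ has smallest eigenvalue at least $h_{\min}$ and is continuous in $h$. Continuity of the spectral gap of $\mathcal{T}_t$ in $\Sigma_t$, together with compactness of $[h_{\min},h_{\max}]$, gives a positive infimum.

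\textbf{Main obstacle.} The delicate step is Step 4. First, the $\sigma'$ cross term could cancel the $\sigma$ term. Second, $\lambda_{\min}(\mathcal{T}_t) > 0$ must be argued along the flow rather than merely at initialization. We plan to get positivity from the non-polynomiality of $\sigma$: any $g \ne 0$ annihilated by $\mathcal{T}_t$ would force $\mathbb{E}[\sigma(w^\top x+b)\langle g,x\rangle x] = 0$ for $q_s$-a.e.\ $(w,b)$. Uniformity in $s$ we plan to get from the moment bound $M_2$ together with the frozen $b$-marginal. If cancellation by the $\sigma'$ term cannot be ruled out in general, the fallback is to fold it into the definition of $\mu_\sigma$, as a lower bound on the operator that includes both terms.
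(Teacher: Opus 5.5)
Your Steps 3--4 do not go through, and the failure is not merely technical.

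\emph{Linearization.} $e_q(x)=P_q(x)-AA^\top x$ is not linear in $x$: $P_q(x)=\int a\,w\,\sigma(w^\top x+b)\,q(\mathrm{d}\theta)$ is a nonlinear ridge superposition. Since \eqref{eq:nondegen} must hold exactly at each $q_s$, a linear ansatz ``to the relevant order'' says nothing about the true $F_t(q_s)$. Even granting $e_q(x)=Gx$, the $\sigma$-term is $G\,\mathbb{E}[\sigma(w^\top x+b)\,x]$, not $\mathcal{T}_t[g](w,b)=\mathbb{E}[\sigma(w^\top x+b)\,xx^\top]\,g$. The only term with the $\langle\cdot,x\rangle x$ structure is the $\sigma'$-term $\mathbb{E}[\sigma'(w^\top x+b)\,xx^\top]\,G^\top w$, which carries $\sigma'$ and a $w$-dependent vector.

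\emph{The $\sigma'$-term.} It is of the same order as the $\sigma$-term; no small parameter separates them. Bounding it above by $C_\sigma$ and $M_2$ therefore does not let you absorb it. Defining $\mu_\sigma$ as a lower bound for the full operator assumes the PL inequality you want, and makes $\mu_\sigma$ depend on $q_s$.

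\emph{The tangential residual.} You discard it, but $F_t$ contains $\tfrac12\mathbb{E}\|AA^\top e_q\|^2$, which an operator acting on normal fields cannot see.

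\emph{Independence of $a$.} Separately, $a$ is not independent of $(w,b)$ under $q_s$ for $s>0$, since the $w$-velocity has the form $a\,u_s(w,b)$. So $\sigma_a^2$ factors out exactly only when $a^2$ is a.s.\ constant (e.g.\ Rademacher); otherwise you only get the essential infimum of $a^2$. The paper's sketch makes the same assertion.

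The tangential point is fatal for any argument of this type. Take $q=\delta_{w=0}\otimes\nu_a\otimes\nu_b$, which has the correct frozen marginals and $m_2(q)=0\le M_2$. Take centered data $\mathbb{E}[x_0]=0$ and let $\sigma$ be the logistic sigmoid. Then $P_q\equiv 0$, so $e_q=-AA^\top x$ is purely tangential and $F_t(q)=\tfrac12\operatorname{Tr}(A^\top\Sigma_t A)>0$. Yet $\nabla_w\frac{\delta F_t}{\delta q}(0,a,b)=-a\,\sigma(b)\,AA^\top\mathbb{E}[x]=0$. Meanwhile $\Sigma_0 g=0$ for normal $g$, so $\mathcal{T}_t[g](0,b)=h(t)\,\sigma(b)\,g$, and $\mathcal{T}_t$ is bounded below by $h(t)\,\sigma(-B)>0$ on normal fields.

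So the inequality you aim for fails at a measure having every property of $q_s$ your argument uses. The moment bound and the frozen $b$-marginal cannot deliver uniformity in $s$. Some trajectory-specific input, or an assumption excluding degenerate $w$-marginals, is indispensable.

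The paper's sketch takes a different route. It never linearizes $e_q$; it reduces the $(w,b)$-integral to the kernel $K_t((w,b),(w',b'))=\mathbb{E}_x[\sigma(w^\top x+b)\sigma({w'}^\top x+b')\,w^\top w']$ on $L^2$ of the $(w,b)$-marginal. Positivity comes from density of ridge functions for non-polynomial $\sigma$ plus a Hermite expansion, followed by the same continuity/compactness step as your Step 5. But it too does not handle the $\sigma'$-term, and it argues positivity over the support of the initialization rather than of $q_s$. Following it more closely will not close this gap.
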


\begin{proof}[Proof sketch]
The argument follows the three-step structure of the original case
($w$-only flow), adapted to the Option B setting in which $(a, b)$ are frozen
at $\nu_a \otimes \nu_b$ and only $w$ evolves.

\textbf{Step 1: Reduction to a kernel eigenvalue problem.}
For the linear manifold, $\Delta_q(x) := P_q(x) - AA^\top x$ and
$F_t(q) = \tfrac{1}{2}\mathbb{E}\|\Delta_q(x)\|^2$. The first variation along
the trainable direction is
\[
    \nabla_w \frac{\delta F_t}{\delta q}(\theta)
    \;=\; a\, \mathbb{E}_x\!\bigl[\sigma'(w^\top x + b)\, x\, \langle \Delta_q, w\rangle
    + \sigma(w^\top x + b)\, \Delta_q\bigr],
\]
so squaring produces an outer $a^2$ factor. Since $a$ is frozen at $\nu_a$
(independent of $(w, b)$ at $s = 0$ and unchanged thereafter) and
$\sigma_a^2 = \mathbb{E}[a^2] > 0$, this factor contributes
$\sigma_a^2$ to $\nu$. The residual $(w, b)$-integral then reduces to the
spectral gap of the integral operator with kernel
\[
    K_t\bigl((w, b), (w', b')\bigr)
    := \mathbb{E}_{x \sim p_t}\!\bigl[
        \sigma(w^\top x + b)\, \sigma({w'}^\top x + b')\, (w^\top w')
    \bigr]
\]
acting on $L^2$ of the $(w, b)$-marginal of $q_s$.

\textbf{Step 2: Spectral gap from non-degeneracy of $\sigma$.}
Since $\sigma$ is non-polynomial and $C^2$, the family
$\{x \mapsto \sigma(w^\top x + b) : (w, b) \in \mathrm{supp}(\nu_w \otimes \nu_b)\}$
spans a dense subspace of $L^2(p_t)$ (by the non-polynomial universality
theorem for ridge functions, with translation parameter $b$). Combined with
the positive definiteness of $\Sigma_t$ for all $h(t) > 0$, this ensures
$\lambda_{\min}(\mathcal{T}_t) > 0$. Concretely, for the Gaussian distribution
$p_t$ with covariance $\Sigma_t$, the Hermite expansion of $\sigma(\cdot + b)$
has non-vanishing coefficients at all orders (since $\sigma$ is non-polynomial),
and each Hermite component contributes a strictly positive term to the spectral
decomposition of $K_t$.

\textbf{Step 3: Uniformity in $h(t)$.}
The eigenvalues of $K_t$ are continuous functions of $\Sigma_t$ (since they
are expectations of continuous functions of the data covariance). On the
compact set $h(t) \in [h_{\min}, h_{\max}]$, $\Sigma_t$ ranges over a compact
family of positive definite matrices, so $\lambda_{\min}(\mathcal{T}_t)$
attains a positive minimum. Therefore $\nu$ can be chosen uniformly over
$h(t) \in [h_{\min}, h_{\max}]$.
\end{proof}

\begin{remark}[Scope of the non-degeneracy condition]
\label{rmk:nondegen_scope}
Proposition~\ref{prop:PL_linear} verifies the non-degeneracy condition
\eqref{eq:nondegen} for linear manifolds with $\nu$ explicitly independent of
$h(t)$, confirming that the exponential rate $\nu / (2h(t)^2)$ in
\eqref{eq:collapse_rate} grows unboundedly as $h(t_1)$ shrinks within the
admissible range $(0, c\tau^2]$. For general nonlinear manifolds $\mathcal{M}$,
we retain~\eqref{eq:nondegen} as an assumption; extending the verification to
curved manifolds via a perturbative argument (treating curvature corrections
as lower-order perturbations of the linear spectral gap) is an interesting
direction for future work.
\end{remark}

\subsection{Proof of Theorem \ref{thm:stage2_generalization}}
\label{sec:prof_s2}

\paragraph{Bounded feature energy.}
A crucial property of the RF network is that its feature energy is entirely independent of the ambient dimension $d$.

\begin{lemma}[Bounded Spatio-Temporal Feature Energy]
\label{lem:bounded-features}
Assume the activation $\sigma \in C^2(\mathbb{R})$ satisfies $\|\sigma\|_\infty \leq C_\sigma$ for some constant $C_\sigma > 0$. Then for any $V_x \in \mathbb{R}^{d \times m}$ and $b$, the random feature map $\Phi(\hat{x}, t) := \frac{1}{\sqrt{m}}\, \sigma\!\bigl(V_x^\top \hat{x} + b\bigr) \in \mathbb{R}^m$ satisfies
\begin{equation}
\sup_{\hat{x} \in \mathcal{M},\, t} \|\Phi(\hat{x}, t)\|^2 \leq C_\sigma^2 =: C_{ST},
\label{eq:feature-bound}
\end{equation}
where $C_{ST}$ depends only on $\sigma$ and is independent of the ambient dimension $d$, the feature dimension $m$, the manifold $\mathcal{M}$, and the diffusion time $t$.
\end{lemma}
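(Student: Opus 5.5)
The bound follows directly from the uniform bound on the activation; no probabilistic or geometric input is needed, so the argument is a one-line coordinatewise estimate. Fix arbitrary $V_x \in \mathbb{R}^{d\times m}$, $b \in \mathbb{R}^m$, $\hat{x} \in \mathcal{M}$ and $t$. Write $v_j$ for the $j$-th column of $V_x$ and $b_j$ for the $j$-th entry of $b$. The $j$-th coordinate of $\Phi(\hat{x},t)$ is then $m^{-1/2}\sigma(v_j^\top \hat{x} + b_j)$. Since $\|\sigma\|_\infty \le C_\sigma$, each coordinate satisfies $|\Phi_j(\hat{x},t)|^2 \le C_\sigma^2/m$.

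Next I sum over the $m$ coordinates:
\[
\|\Phi(\hat{x},t)\|^2 = \frac{1}{m}\sum_{j=1}^m \sigma(v_j^\top\hat{x}+b_j)^2 \le \frac{1}{m}\cdot m\, C_\sigma^2 = C_\sigma^2 .
\]
The right-hand side does not depend on $\hat{x}$, $t$, $V_x$, $b$, $d$, $m$ or $\mathcal{M}$. Taking the supremum over $\hat{x}\in\mathcal{M}$ and $t$ therefore gives \eqref{eq:feature-bound} with $C_{ST} := C_\sigma^2$. In fact the bound holds for every $\hat{x}\in\mathbb{R}^d$, so it also covers the Stage~1 output $\hat{x}$, which only approximately lies on $\mathcal{M}$.

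The $1/\sqrt{m}$ normalization in \eqref{eq:rf_net} is what cancels the factor $m$ in the sum. The ambient dimension $d$ enters only through the inner products $v_j^\top\hat{x}$, and these are absorbed by the supremum bound on $\sigma$. The feature map in \eqref{eq:rf_net} does not take $t$ as an input, so the bound is trivially uniform in $t$. The only care needed is to note that $\sigma\in C^2$ plays no role here; only boundedness of $\sigma$ is used.
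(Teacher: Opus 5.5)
Your proposal is correct and follows the same route as the paper's proof: bound each of the $m$ coordinates by $C_\sigma^2/m$ using $\|\sigma\|_\infty \le C_\sigma$, then sum, so the $1/\sqrt{m}$ normalization cancels the factor $m$. Your extra observation that the bound holds for every $\hat{x}\in\mathbb{R}^d$, and so also covers the approximate Stage~1 output, is a valid and useful strengthening.
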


\paragraph{Kernel eigenvalue decay.}
The generalization of Stage 2 hinges on the spectral properties of the induced kernel $K(\hat{x}, \hat{x}') = \mathbb{E}_V[\sigma(V^\top \hat{x})\sigma(V^\top \hat{x}')]$ restricted to $\mathcal{M}$. Since $\mathcal{M}$ is a compact $k$-dimensional Riemannian manifold, Weyl's law controls the eigenvalue decay.

\begin{lemma}[Eigenvalue Decay on Manifold]
\label{lem:eigenvalue_decay}
Let $(\mathcal{M}, g)$ be a compact $k$-dimensional $C^\infty$ Riemannian manifold. Consider the kernel
\begin{equation}\label{eq:kernel_def}
    K(\hat{x}, \hat{x}')
    := \mathbb{E}_V\!\bigl[\sigma(V^\top \hat{x})\,\sigma(V^\top \hat{x}')\bigr],
\end{equation}
where $V$ is drawn from a distribution with density on $\mathbb{R}^{d \times m}$, and assume $\sigma \in C^r(\mathbb{R})$ for some $r \geq 2$ with bounded derivatives up to order $r$. Then:
\begin{enumerate} 
    \item \textbf{(Kernel regularity.)} The kernel $K$ belongs to $C^r(\mathcal{M} \times \mathcal{M})$, hence to $H^{r-k/2}(\mathcal{M} \times \mathcal{M})$ by Sobolev embedding.
    \item \textbf{(Eigenvalue decay.)} The eigenvalues $\{\lambda_j\}_{j \geq 1}$ of the induced integral operator $T_K : L^2(\mathcal{M}) \to L^2(\mathcal{M})$ satisfy
    \begin{equation}\label{eq:eigenvalue_decay}
        \lambda_j \;\leq\; C_{\mathcal{M},r}\, \|K\|_{C^r}\; j^{-r/k},
    \end{equation}
    where $C_{\mathcal{M},r}$ depends only on $(\mathcal{M}, g)$ and $r$, not on the ambient dimension $d$.
\end{enumerate}
\end{lemma}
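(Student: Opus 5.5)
The plan is to treat the two claims separately: first establish smoothness of $K$ on $\mathcal{M}\times\mathcal{M}$, then convert that smoothness into eigenvalue decay through a spectral-approximation argument carried out intrinsically on $(\mathcal{M},g)$, so that $d$ never appears.

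\textbf{Kernel regularity.}
For fixed $V$, the map $(\hat{x},\hat{x}')\mapsto \sigma(V^\top \hat{x})\,\sigma(V^\top \hat{x}')$ is $C^r$ on $\mathbb{R}^d\times\mathbb{R}^d$. An $\ell$-th derivative in $\hat{x}$ carries a factor bounded by $\|\sigma^{(\ell)}\|_\infty\,\|V\|^{\ell}$.

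Assuming the law of $V$ has finite moments of order $r$ (e.g.\ Gaussian), dominated convergence lets me differentiate under $\mathbb{E}_V$ up to order $r$. This gives $K\in C^r(\mathbb{R}^d\times\mathbb{R}^d)$, and restricting to the smooth compact submanifold $\mathcal{M}\times\mathcal{M}$ preserves $C^r$.

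The norm $\|K\|_{C^r}$ is measured intrinsically: covariant derivatives along $\mathcal{M}$ in a finite atlas of normal charts. Tangential directional derivatives only involve $V^\top e$ for unit tangent vectors $e$. The Sobolev membership then follows from $C^r(\mathcal{M}\times\mathcal{M})\subset H^{r}(\mathcal{M}\times\mathcal{M})$ on a compact manifold, which in particular gives the stated $H^{r-k/2}$.

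\textbf{Eigenvalue decay.}
I would use the Birman--Solomyak/Weyl approach via approximation numbers. Let $\{\phi_\ell\}$ be the Laplace--Beltrami eigenfunctions on $(\mathcal{M},g)$ with eigenvalues $\mu_\ell$. Weyl's law gives $\mu_\ell\asymp \ell^{2/k}$, with constants depending only on $(\mathcal{M},g)$.

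Let $P_N$ be the projection onto $\mathrm{span}\{\phi_1,\dots,\phi_N\}$. Since $T_K$ is self-adjoint and positive semidefinite, its singular values equal its eigenvalues. The Courant--Fischer/approximation-number characterization then gives
\[
\lambda_{2N+1}\le \|T_K - P_N T_K P_N\|_{\mathrm{op}} \le \|(I-P_N)T_K\|_{\mathrm{op}} + \|T_K(I-P_N)\|_{\mathrm{op}} ,
\]
where the first inequality holds because $\mathrm{rank}(P_N T_K P_N)\le N$. By symmetry of $K$ the two terms on the right coincide.

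For $f\in L^2$, the function $T_K f$ has $r$ derivatives controlled by $\|K\|_{C^r}\|f\|_{L^2}$. Hence
\[
\|(I-P_N)T_Kf\|_{L^2}\le \mu_{N+1}^{-r/2}\,\|(-\Delta_g)^{r/2}T_Kf\|_{L^2}\le C\,\mu_{N+1}^{-r/2}\,\|K\|_{C^r}\,\|f\|_{L^2}.
\]
(For odd $r$ I would use interpolation, or work with $\lfloor r/2\rfloor$ powers of $\Delta_g$ and one gradient.) Combining with Weyl's law gives $\lambda_j\lesssim \|K\|_{C^r}\,j^{-r/k}$, which is the claimed bound.

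\textbf{Main obstacle: $d$-independence.}
Two places could let the ambient dimension leak into $C_{\mathcal{M},r}$ and $\|K\|_{C^r}$.

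First, the elliptic estimate bounding $\|(-\Delta_g)^{r/2}u\|$ by intrinsic $C^r$ norms. I would handle it entirely intrinsically on a fixed atlas of $\mathcal{M}$, so that it depends only on the geometry of $(\mathcal{M},g)$.

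Second, the size of $\|K\|_{C^r}$ itself. Derivatives of $K$ in tangent directions produce moments $\mathbb{E}\|V^\top e\|^{r}$, which for isotropic $V$ involve only one-dimensional marginals. So this part is $d$-free under a natural scaling of the feature distribution.

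If that scaling is not available, I would simply absorb $\|K\|_{C^r}$ into the stated prefactor, as the lemma's bound already allows.
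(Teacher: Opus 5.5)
Your proposal is correct and follows the paper's route: you get $C^r$ regularity of $K$ by dominated convergence (under a moment assumption on $V$ that the paper also uses implicitly), then factor $T_K$ through $H^r(\mathcal{M})$ and apply Weyl's law for the Laplace--Beltrami spectrum. Your truncation by the eigenprojections $P_N$ is just an explicit version of the paper's step $\lambda_j(T_K)\le \|T_K:L^2\to H^r\|_{\mathrm{op}}\, s_j(\iota)$ with $s_j(\iota)\asymp j^{-r/k}$; your direct use of $C^r\subset H^r$ and your remark that $\|K\|_{C^r}$ involves only one-dimensional marginals of $V$ are slight sharpenings of the paper's $H^{r-k/2}$ claim and its $\|V\|_{\mathrm{op}}^j$ bound.
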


The polynomial decay rate $j^{-r/k}$ depends on the \emph{intrinsic} dimension $k$ rather than $d$, which is the key to controlling the ambient dependence in Stage 2.

\begin{proof}
By definition of $\Phi$,
\[
\|\Phi(\hat x, t)\|^2
= \frac{1}{m} \sum_{i=1}^m \sigma\bigl((V_x^\top \hat x)_i + b_i \bigr)^2
\leq \frac{1}{m} \sum_{i=1}^m \|\sigma\|_\infty^2
= \|\sigma\|_\infty^2
\leq C_\sigma^2,
\]
where the inequality applies the pointwise bound $|\sigma(z)| \leq C_\sigma$
termwise. The resulting bound is independent of $(\hat x, t)$, $(V_x, V_t)$,
and $m$, so taking the supremum over $\hat x \in \mathcal M$ and $t$ in the
time interval under consideration yields~\eqref{eq:feature-bound}.
\end{proof}

\begin{proof}
\textbf{Part (i): Kernel regularity.}
Since $\sigma \in C^r(\mathbb{R})$ with bounded derivatives, the map
$\hat{x} \mapsto \sigma(V^\top \hat{x})$ is $C^r$ on $\mathcal{M}$ for each
fixed $V$, with derivatives bounded uniformly in $V$ by
$\max_{0 \leq j \leq r} \|\sigma^{(j)}\|_\infty \cdot \|V\|_{\mathrm{op}}^j$.
Since the distribution of $V$ has finite moments of all orders, dominated
convergence ensures that $K(\hat{x}, \hat{x}') =
\mathbb{E}_V[\sigma(V^\top \hat{x})\sigma(V^\top \hat{x}')]$ inherits $C^r$
regularity jointly in $(\hat{x}, \hat{x}') \in \mathcal{M} \times \mathcal{M}$.
The Sobolev embedding $C^r(\mathcal{M}) \hookrightarrow H^\tau(\mathcal{M})$
for $\tau \leq r - k/2$ then gives $K \in H^\tau(\mathcal{M} \times \mathcal{M})$.
 
\textbf{Part (ii): Eigenvalue decay.}
The argument proceeds via the factorization of $T_K$ through Sobolev spaces.
 
\emph{Step 1: Mapping property.}
Since $K(\cdot, \hat{x}') \in C^r(\mathcal{M})$ uniformly in $\hat{x}'$,
the integral operator $T_K$ maps $L^2(\mathcal{M})$ boundedly into
$C^r(\mathcal{M}) \hookrightarrow H^r(\mathcal{M})$:
\[
    \|T_K f\|_{H^r(\mathcal{M})}
    \;\leq\; C\, \|K\|_{C^r(\mathcal{M} \times \mathcal{M})}\, \|f\|_{L^2(\mathcal{M})}.
\]
 
\emph{Step 2: Singular values of the Sobolev embedding.}
The inclusion $\iota : H^r(\mathcal{M}) \hookrightarrow L^2(\mathcal{M})$ is
compact, and its singular values $\{\mu_j\}_{j \geq 1}$ satisfy
$\mu_j \asymp j^{-r/k}$. This follows from the spectral theory of the
Laplace--Beltrami operator $-\Delta_{\mathcal{M}}$ on the compact manifold
$\mathcal{M}$: Weyl's law gives that the eigenvalues of $-\Delta_{\mathcal{M}}$
grow as $\lambda_j^{\Delta} \asymp j^{2/k}$, so the eigenvalues of
$(I - \Delta_{\mathcal{M}})^{-r/2}$ (which characterize $H^r \hookrightarrow L^2$)
decay as $j^{-r/k}$.
 
\emph{Step 3: Factorization.}
Since $T_K = \iota \circ (T_K : L^2 \to H^r)$, the multiplicative property
of singular values gives
\[
    \lambda_j(T_K) \;\leq\;
    \|T_K : L^2 \to H^r\|_{\mathrm{op}} \;\cdot\; \mu_j
    \;\leq\; C\, \|K\|_{C^r}\; j^{-r/k},
\]
which is \eqref{eq:eigenvalue_decay}. The constant depends on
$(\mathcal{M}, g)$ through the Weyl constant and the Sobolev embedding constant,
but not on the ambient dimension $d$.
\end{proof}
 
\begin{remark}[Effective decay rate]
\label{rmk:effective_decay}
Under the standing assumption $\sigma \in C^2(\mathbb{R})$, Lemma~\ref{lem:eigenvalue_decay}
gives $\lambda_j = O(j^{-2/k})$. If $\sigma$ is $C^\infty$
(e.g., $\sigma = \tanh$ or sigmoid), the decay $\lambda_j = O(j^{-r/k})$
holds for arbitrarily large $r$, yielding faster-than-any-polynomial decay.
The polynomial rate $j^{-r/k}$ depends on the intrinsic dimension $k$
rather than the ambient dimension $d$, which is the key to breaking the
curse of dimensionality in Stage~2.
\end{remark}

\begin{proof} [Proof of Theorem \ref{thm:stage2_generalization}]
The proof proceeds in four steps.
 
\textbf{Step (i): Rademacher complexity (estimation error).}
Consider the hypothesis class
$\mathcal{F}_R = \{x \mapsto U\Phi(x, t) : \|U\|_F \leq B_U\}$.
By Lemma~\ref{lem:bounded-features}, $\|\Phi(\hat{x}, t)\|^2 \leq C_\sigma^2$
independently of $d$, $m$, $\mathcal{M}$, and $t$. The empirical Rademacher
complexity of $\mathcal{F}_R$ is therefore bounded by
\begin{equation}\label{eq:rademacher}
    \hat{\mathcal{R}}_n(\mathcal{F}_R)
    \;\leq\; \frac{B_U}{\sqrt{n}}
    \sqrt{\frac{1}{n}\sum_{i=1}^n \|\Phi(\hat{x}_i, t_i)\|^2}
    \;\leq\; \frac{B_U\, C_\sigma}{\sqrt{n}}
    \;=\; \frac{C_{\mathrm{int}}}{\sqrt{n}},
\end{equation}
independently of $d$.
 
It remains to show that $B_U$ is $d$-independent. The ridge regression
solution is $\hat{U} = \mathbb{E}[s^*_{\mathrm{res}}\, \Phi^\top]
(\mathbb{E}[\Phi\,\Phi^\top] + \lambda I)^{-1}$. By the Cauchy--Schwarz
inequality,
\[
    \|\hat{U}\|_F
    \;\leq\; \frac{1}{\lambda}\,
    \bigl\|\mathbb{E}[s^*_{\mathrm{res}}\, \Phi^\top]\bigr\|_F
    \;\leq\; \frac{1}{\lambda}\,
    \sqrt{\mathbb{E}\|s^*_{\mathrm{res}}\|^2} \;\cdot\;
    \sqrt{\mathbb{E}\|\Phi\|^2}
    \;\leq\; \frac{\|s^*_{\mathrm{res}}\|_{L^2}\, C_\sigma}{\lambda}.
\]
The key observation is that $s^*_{\mathrm{res}}(\hat{x}, t) \in
T_{\hat{x}}\mathcal{M}$ is a tangential vector field on $\mathcal{M}$,
so its $L^2$-norm is controlled by intrinsic quantities: under the source
condition,
$\|s^*_{\mathrm{res}}\|_{L^2} \leq \|T_K^s\|_{\mathrm{op}} \cdot G
\leq \lambda_1^s\, G$,
where $\lambda_1$ is the leading eigenvalue of $T_K$.
Therefore $B_U \leq \lambda_1^s\, G\, C_\sigma / \lambda$, which depends on
$(\mathcal{M}, p_0, \sigma, \lambda)$ but not on $d$.
 
Standard Rademacher-to-generalization conversion with a union bound over the
time discretization then gives the estimation error
$C_{\mathrm{int}}^2 \log(1/\delta) / n$.
 
\textbf{Step (ii): Approximation error.}
By Lemma~\ref{lem:eigenvalue_decay} and the source
condition, the Random Feature approximation error is
controlled by the spectral tail:
\[
    \epsilon_{\mathrm{approx}}(m, k)
    \;=\; \sum_{j > cm} \lambda_j^{2 \alpha}
    \;\leq\; C_\lambda^{2 \alpha} \sum_{j > cm} j^{-2\alpha r/k}
    \;=\; O\!\bigl(m^{-(2\alpha r/k - 1)}\bigr),
\]
where the convergence of the sum uses $2\alpha r/k > 1$.
The decay rate depends only on $k$ (through the eigenvalue exponent $r/k$)
and not on $d$.
 
\textbf{Step (iii): Stage~1 error propagation.}
The imperfect projection $\hat{x} \approx \Pi_{\mathcal{M}}(x)$ introduces
two distinct error contributions. By the score architecture~\eqref{eq:score-decomposition},
\[
    s_\theta(x, t) - s^*(x, t)
    = \underbrace{\frac{\Pi_{\mathcal{M}}(x) - \hat{x}}{h(t)}}_{\text{(a) normal mismatch}}
    + \underbrace{\bigl(f_2(\hat{x}, t) - s^*_{\mathrm{res}}(\hat{x}, t)\bigr)}_{\text{Stage~2 error at } \hat{x}}
    + \underbrace{\bigl(s^*_{\mathrm{res}}(\hat{x}, t) - s^*_{\mathrm{res}}(\Pi_{\mathcal{M}}(x), t)\bigr)}_{\text{(b) tangential feature perturbation}}.
\]
 
\emph{Term (a): Normal mismatch.} This is the dominant Stage~1 contribution.
Taking the squared expectation:
\[
    \mathbb{E}\left\|\frac{\Pi_{\mathcal{M}}(x) - \hat{x}}{h(t)}\right\|^2
    = \frac{\epsilon_{\mathrm{proj}}^2}{h(t)^2}.
\]
 
\emph{Term (b): Tangential feature perturbation.} Since $s^*_{\mathrm{res}}$
is a $C^1$ vector field on $\mathcal{M}$ with Lipschitz constant
$\mathrm{Lip}(s^*_{\mathrm{res}})$ depending on the curvature and density
gradients of $\mathcal{M}$:
\[
    \mathbb{E}\bigl\|s^*_{\mathrm{res}}(\hat{x}, t)
    - s^*_{\mathrm{res}}(\Pi_{\mathcal{M}}(x), t)\bigr\|^2
    \;\leq\; \mathrm{Lip}(s^*_{\mathrm{res}})^2\, \epsilon_{\mathrm{proj}}^2.
\]
 
\emph{Cross term.} The cross term between (a) and the Stage~2 error is
controlled by the AM--GM inequality:
$2\langle A, B\rangle \leq \|A\|^2 + \|B\|^2$, so it can be absorbed into
the other terms at the cost of constant factors.
 
Combining, the total Stage~1 error is
\[
    \underbrace{\frac{\epsilon_{\mathrm{proj}}^2}{h(t)^2}}_{\text{dominant}}
    \;+\; \underbrace{\mathrm{Lip}(s^*_{\mathrm{res}})^2\,
    \epsilon_{\mathrm{proj}}^2}_{\text{lower order since } h(t) \ll 1}
    \;\leq\;
    \frac{C_{\mathrm{Lip}}^2\, \epsilon_{\mathrm{proj}}^2}{h(t)^2},
\]
where $C_{\mathrm{Lip}} := \sqrt{1 + h(t)^2\, \mathrm{Lip}(s^*_{\mathrm{res}})^2}
\geq 1$ absorbs both contributions.
 
\textbf{Step (iv): Combining.}
Summing the three terms from Steps (i)--(iii) yields
\eqref{eq:stage2_bound}.  
\end{proof}

\subsection{Proof of Theorem~\ref{thm:end2end}}
\label{app:phase1_proof}

This section establishes the Phase~II generalization bound and the end-to-end Wasserstein-2 sampling guarantee. The argument proceeds in three pieces: an affine structural decomposition of the high-noise score (Lemma~\ref{lem:affine_score}), a parametric Phase~II generalization bound for the multiplicative random-feature head (Lemma~\ref{lem:phase1_gen}), and the Girsanov-based end-to-end argument that combines Phase~I and Phase~II.

\paragraph{High-noise score structure.} A crucial property of the score function in the Gaussian regime is that it is approximately \emph{affine} in $x$, with smooth time-dependent coefficients. This linear-in-$x$ structure is what enables the multiplicative random-feature ansatz (\ref{eq:hn_head}) to achieve a parametric (rather than nonparametric) generalization bound.

\begin{lemma}[Affine approximation of the high-noise score]
\label{lem:affine_score}
Assume $\mathcal{M}$ has finite diameter $R_{\mathcal{M}}$. For any $t$ with $h(t) \in [\tau^2, 1)$, write $a_t := \sqrt{1 - h(t)}$. Then the score function admits the decomposition
\begin{equation}\label{eq:affine_decomp}
    s^*(x, t) = -\frac{x}{h(t)} + \frac{a_t}{h(t)} \mu_t + \frac{a_t^2}{h(t)^2} C_t \, x + R_t(x),
\end{equation}
where $\mu_t := \mathbb{E}_{z \sim p_0}[z] \in \mathbb{R}^d$, $C_t := \mathrm{Cov}_{z \sim p_0}(z) \in \mathbb{R}^{d \times d}$ are the first two moments of $p_0$, and the residual satisfies
\begin{equation}\label{eq:affine_residual}
    \sup_{x \in \mathrm{supp}(p_t)} \|R_t(x)\| \;\leq\; C_{\mathcal{M}}^{(1)} \cdot \frac{a_t^3}{h(t)^3} \cdot R_{\mathcal{M}}^3,
\end{equation}
for a universal constant $C_{\mathcal{M}}^{(1)} > 0$.
\end{lemma}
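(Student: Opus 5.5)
The plan is to start from Tweedie's formula, reduce the lemma to a statement about the cumulant generating function of a tilted copy of $p_0$, and read off $\mu_t$, $C_t$ and $R_t$ from its Taylor expansion. Exactly as in Step~1 of the proof of Proposition~\ref{prop:score_decomp}, $x_t\mid x_0=z\sim\mathcal N(a_t z,h(t)I_d)$. Differentiating under the integral in \eqref{eq:pt-marginal} gives
\[
s^*(x,t)=-\frac{x}{h(t)}+\frac{a_t}{h(t)}\,m_t(x),\qquad m_t(x):=\mathbb E[z\mid x_t=x].
\]
The posterior is $p(z\mid x)\propto p_0(z)\exp\bigl(\langle\lambda,z\rangle-\tfrac{a_t^2}{2h(t)}\|z\|^2\bigr)$ with $\lambda:=a_t x/h(t)$. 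Writing $K_t(\lambda):=\log\mathbb E_{p_0}\bigl[e^{\langle\lambda,z\rangle-a_t^2\|z\|^2/(2h(t))}\bigr]$, we get $m_t(x)=\nabla K_t(\lambda)$. The lemma is then a statement about the first three derivatives of $K_t$.

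\textbf{Taylor expansion.} I would expand $\nabla K_t$ at $\lambda=0$ with an integral remainder:
\[
\nabla K_t(\lambda)=\nabla K_t(0)+\nabla^2K_t(0)\lambda+\int_0^1(1-u)\,\nabla^3K_t(u\lambda)[\lambda,\lambda]\,du .
\]
Here $\nabla K_t(0)$ and $\nabla^2K_t(0)$ are the mean and covariance of the Gaussian-reweighted measure $\tilde p_0\propto p_0\,e^{-a_t^2\|z\|^2/(2h(t))}$. They differ from $\mu_t$ and $C_t$ because of the reweighting. I would compare them with $\mu_t$ and $C_t$ by expanding the reweighting factor around a fixed reference point $z_0\in\mathcal M$ (first translating coordinates so that $\mathcal M$ lies in a ball of radius $R_{\mathcal M}$ about $z_0$). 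Since $a_t^2/h(t)\le 1/\tau^2$ in the admissible range, each correction carries one more factor $a_t^2R_{\mathcal M}^2/h(t)$ and is absorbed into $R_t$ after multiplying by $a_t/h(t)$. The third derivative $\nabla^3K_t$ is the third cumulant tensor of a measure supported in a ball of radius $R_{\mathcal M}$, so $\|\nabla^3K_t\|\le C R_{\mathcal M}^3$ uniformly in its argument. Multiplying the remainder by $a_t/h(t)$ and substituting $\lambda=a_tx/h(t)$ produces the bound $C R_{\mathcal M}^3\,(a_t/h(t))^3\|x\|^2$.

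\textbf{Main obstacle: uniformity in $x$.} The quadratic factor $\|x\|^2$ must be removed to reach the uniform form \eqref{eq:affine_residual}. I would split $\mathbb R^d$ at radius $\rho\asymp R_{\mathcal M}$ around $z_0$.
\begin{itemize}
\item For $\|x-z_0\|\le\rho$, the Taylor bound directly gives $C R_{\mathcal M}^3 a_t^3/h(t)^3$, up to a constant absorbed into $C^{(1)}_{\mathcal M}$.
\item For $\|x-z_0\|>\rho$, I would not Taylor-expand. Instead I would use that $m_t(x)$ lies in the convex hull of $\mathcal M$, so $\|m_t(x)-\mu_t\|\le R_{\mathcal M}$, and control the remaining term $\frac{a_t^2}{h(t)^2}C_t x$ separately.
\end{itemize}
This second region is where I expect the argument to break. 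The linear term $\frac{a_t^2}{h(t)^2}C_t x$ is unbounded in $x$, while $\frac{a_t}{h(t)}(m_t-\mu_t)$ stays bounded, and $\mathrm{supp}(p_t)=\mathbb R^d$. The first thing I would try is to use the freedom in the reference point $z_0$ and in $C^{(1)}_{\mathcal M}$ so that the linear term is matched against the tilted covariance along directions where $C_t$ is nondegenerate. If that fails, the supremum cannot be uniform over all of $\mathbb R^d$. In that case the proof should be carried out on the region $\|x-z_0\|\lesssim R_{\mathcal M}+\sqrt{h(t)d}$ where $p_t$ concentrates, with the Gaussian tail outside it handled in the downstream $L^2(p_t)$ use in Lemma~\ref{lem:phase1_gen}.
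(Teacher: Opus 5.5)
Your setup follows the paper's own proof: Tweedie's formula through the tilted moment generating function at $\xi=a_t x/h(t)$, then a cumulant expansion. You correctly flag the two places where that proof cuts corners. First, the coefficients at $\xi=0$ are the mean and covariance of the tilted measure $p_0\,e^{-a_t^2\|z\|^2/(2h(t))}$, not $\mu_t,C_t$. Second, differentiating the $O(\|\xi\|^3)$ remainder leaves a factor $\|\xi\|^2=a_t^2\|x\|^2/h(t)^2$, which the paper silently drops to reach \eqref{eq:affine_residual}.

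The real gap in your proposal is that you treat the outer region as an obstacle that might still be overcome. It cannot be. By your own identity, exactly, $R_t(x)=\frac{a_t}{h(t)}\bigl(m_t(x)-\mu_t\bigr)-\frac{a_t^2}{h(t)^2}C_t x$.
\begin{itemize}
\item The first term has norm at most $\frac{a_t}{h(t)}\mathrm{diam}(\mathcal M)$, since $m_t(x),\mu_t\in\mathrm{conv}(\mathcal M)$.
\item $C_t\neq 0$ because $p_0$ has a density on a manifold of dimension $k\ge 1$; also $a_t>0$ and $\mathrm{supp}(p_t)=\mathbb R^d$.
\end{itemize}
Sending $x\to\infty$ along an eigenvector of $C_t$ with positive eigenvalue makes $\|R_t(x)\|$ grow linearly. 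So the supremum in \eqref{eq:affine_residual} is infinite, and the lemma is false as stated.

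Since $\mu_t$ and $C_t$ are fixed by the statement, $R_t$ is a fixed function. The reference point $z_0$ enters only your argument, not the object being bounded, and no finite $C^{(1)}_{\mathcal M}$ absorbs an unbounded quantity. Drop the ``first thing I would try'' and state outright that the lemma must be replaced. For the same reason, the covariance correction $\frac{a_t^2}{h(t)^2}(\tilde C_t-C_t)x$, which is also linear in $x$, cannot be ``absorbed into $R_t$'' uniformly.

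What is provable is your fallback, but it is a different lemma: a pointwise bound with an explicit factor $\|x-a_t z_0\|^2$, or a bound in $L^2(p_t)$. Two cautions apply when you write it.
\begin{itemize}
\item \emph{Translation is not free for the VP process.} Shifting $x_0$ by $z_0$ shifts $x_t$ by $a_t z_0$. The correct recentering is $\langle\lambda,z\rangle-\frac{a_t^2}{2h(t)}\|z\|^2=\bigl\langle\frac{a_t}{h(t)}(x-a_t z_0),z\bigr\rangle-\frac{a_t^2}{2h(t)}\|z-z_0\|^2+\mathrm{const}$. This amounts to expanding at $x=a_t z_0$, and it adds the constant $-\frac{a_t^3}{h(t)^2}C_t z_0$ to the affine part. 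That constant is within the claimed order $R_{\mathcal M}^3a_t^3/h(t)^3$ when $\mathcal M\subset B(0,R_{\mathcal M})$, as the paper's proof assumes. It is not within that order under a bare diameter bound with $\mathcal M$ far from the origin.
\item \emph{The constants and the region matter.} On a ball of radius $\rho\asymp R_{\mathcal M}$, your Taylor estimate gives $R_{\mathcal M}^5a_t^3/h(t)^3$, not $R_{\mathcal M}^3a_t^3/h(t)^3$. That ball also carries almost no mass of $p_t$ once $h(t)d\gg R_{\mathcal M}^2$. On the set where $p_t$ concentrates, $\|x-a_t z_0\|^2\approx h(t)d$, so the same estimate only yields order $\frac{a_t^3}{h(t)^2}R_{\mathcal M}^3\,d$.
\end{itemize}
Whatever Lemma~\ref{lem:phase1_gen} relies on must therefore be rederived from this weaker, dimension-dependent bound.
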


\begin{proof}
The marginal density satisfies
\begin{equation}
    p_t(x) = (2\pi h(t))^{-d/2} \exp\!\left(-\frac{\|x\|^2}{2h(t)}\right) \cdot \Phi_t(x), \qquad \Phi_t(x) := \mathbb{E}_{z \sim p_0}\!\left[\exp\!\left(\frac{a_t \langle x, z \rangle}{h(t)} - \frac{a_t^2 \|z\|^2}{2 h(t)}\right)\right].
\end{equation}
Taking gradients,
\begin{equation}
    s^*(x, t) = -\frac{x}{h(t)} + \nabla_x \log \Phi_t(x).
\end{equation}
Setting $\xi := a_t x / h(t)$, $\Phi_t$ is the moment generating function of the tilted distribution $p_0(z) \exp(-a_t^2 \|z\|^2 / (2 h(t)))$ evaluated at $\xi$. Cumulant expansion of $\log \Phi_t$ around $\xi = 0$ gives
\begin{equation}
    \log \Phi_t(\xi) = \langle \mu_t, \xi \rangle + \tfrac{1}{2} \xi^\top C_t \xi + O(\|\xi\|^3),
\end{equation}
with cubic remainder bounded by the third moment of $p_0$, which is in turn bounded by $R_{\mathcal{M}}^3$ since $p_0$ is supported on $\mathcal{M} \subset B(0, R_{\mathcal{M}})$. Differentiating in $x$ and substituting $\xi = a_t x / h(t)$ yields (\ref{eq:affine_decomp}) with the cubic remainder controlled as in (\ref{eq:affine_residual}).
\end{proof}

\paragraph{Phase~II generalization.} The kernel induced by (\ref{eq:hn_head}) factorizes as $K_t \otimes K_x$ due to the multiplicative structure. The spatial component $K_x$ inherits the eigenvalue decay of Lemma~\ref{lem:eigenvalue_decay} on $\mathcal{M}$ (or, in Phase~I, on a bounded ball in $\mathbb{R}^d$), while the temporal component $K_t$ is finite-dimensional ($L$ Fourier modes). The combination yields a hypothesis class whose Rademacher complexity is parametric in $n$ with a $d$-dependent norm scale that propagates linearly into the bound.

\begin{lemma}[Phase~II generalization]
\label{lem:phase1_gen}
Assume $\mathcal{M}$ has finite diameter $R_{\mathcal{M}}$, the activation $\sigma \in C^2$ satisfies $\|\sigma\|_\infty, \|\sigma'\|_\infty \leq C_\sigma$, and the Fourier basis $\{\phi_\ell\}_{\ell=0}^{L-1}$ satisfies $L \geq L_0$ for a constant $L_0$ depending only on the noise schedule. Let $\hat{U} = (\hat U_0, \dots, \hat U_{L-1})$ be the regularized empirical-risk minimizer
\begin{equation}
    \hat U \;=\; \argmin_{U} \frac{1}{n} \sum_{i=1}^n \big\| s^{\mathrm{HN}}_\theta(x_{t_i}, t_i) - s^*(x_{t_i}, t_i) \big\|^2 + \lambda \sum_{\ell} \|U_\ell\|_F^2,
\end{equation}
over $n$ i.i.d.\ samples $(x_{t_i}, t_i)$ with $t_i \sim \mathrm{Unif}([t_{\max}, T])$ and $x_{t_i} \sim p_{t_i}$. Then with probability at least $1 - \delta$,
\begin{equation}\label{eq:phase1_bound}
    \mathbb{E}_{t \in [t_{\max}, T]}\, \mathbb{E}_{x_t \sim p_t}
    \big\| s_{\hat{\theta}}^{\mathrm{HN}}(x_t, t) - s^*(x_t, t) \big\|^2
    \;\leq\;
    \frac{C_{\mathrm{HN}} \cdot d \cdot \log(1/\delta)}{n}
    + O(e^{-T}),
\end{equation}
where $C_{\mathrm{HN}}$ depends on $(R_{\mathcal{M}}, C_\sigma, L, \lambda)$ but not on the intrinsic dimension $k$.
\end{lemma}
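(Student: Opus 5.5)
The plan is to split the Phase~II error into an approximation part and an estimation part, as in the proof of Theorem~\ref{thm:stage2_generalization}, using Lemma~\ref{lem:affine_score} in place of the source condition.

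\textbf{Step 1: the target is affine up to a small remainder.} By Lemma~\ref{lem:affine_score}, on $h(t)\in[\tau^2,1)$ we can write
\[
s^*(x,t)=\beta_t+\Gamma_t x+R_t(x),
\qquad
\beta_t=\frac{a_t}{h(t)}\mu_t,\quad
\Gamma_t=-\frac{1}{h(t)}I_d+\frac{a_t^2}{h(t)^2}C_t .
\]
Since $h(t)\ge\tau^2$ on Phase~II, $\beta_t$ and $\Gamma_t$ are smooth in $t$. Their norms are bounded by $\mathrm{poly}(R_{\mathcal M},\tau^{-1})$, so they are expanded in the Fourier basis up to an error that is small once $L\ge L_0$. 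The remainder satisfies $\|R_t\|\lesssim a_t^3 R_{\mathcal M}^3/h(t)^3$. Because $a_t^2=1-h(t)$ decays like $e^{-t}$ under the VP schedule, integrating over $[t_{\max},T]$ makes the tail near $T$ exponentially damped. This tail, together with the mismatch between $p_T$ and $\mathcal N(0,I_d)$, is what produces the $O(e^{-T})$ term.

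\textbf{Step 2: approximation of the affine part.} The multiplicative head \eqref{eq:hn_head} is treated as a ridge regression with the feature vector $\Psi(x,t)=\bigl(\phi_\ell(t)\,m^{-1/2}\sigma(V_\ell^\top x)\bigr)_\ell$. Its kernel factorizes as $K_t\otimes K_x$, with $K_t$ of rank $L$. The spatial factor must represent the coordinate maps $x\mapsto x_i$ and constants on the support of $p_t$, which is effectively a ball of radius $O(\sqrt d)$. I would realize these via the linear Hermite component of $\sigma$: since $\sigma$ is non-polynomial, its first Hermite coefficient can be made nonzero after choosing the scaling of $V$, so $K_x$ contains a linear kernel with a constant-bounded weight.

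It follows that there is a $U^\star$ with $\sum_\ell\|U^\star_\ell\|_F^2\lesssim d\cdot\mathrm{poly}(R_{\mathcal M},C_\sigma,L)$. The factor $d$ comes from the $d$ output rows, each needing a representation of bounded norm. The approximation error is $O(m^{-1})$ plus the Fourier truncation error, both absorbed into the constant for the stated regime.

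\textbf{Step 3: estimation.} The argument mirrors Step~(i) of the proof of Theorem~\ref{thm:stage2_generalization}. We have $\|\Psi\|^2\le L C_\sigma^2\max_\ell\|\phi_\ell\|_\infty^2$, so the Rademacher complexity of $\{U:\|U\|_F\le B_U\}$ is at most $B_U\sqrt{L}\,C_\sigma/\sqrt n$. The ridge penalty $\lambda$ controls $B_U^2\lesssim d$, by comparison with $U^\star$ from Step~2.

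To get the fast rate $\log(1/\delta)/n$ rather than $1/\sqrt n$, I would apply a local-Rademacher or ridge fixed-point argument, using the finite effective dimension $Lm$ for each output. A cruder alternative is to accept $\sqrt{\cdot}$ and square at the end. Summing the $d$ coordinates then gives $C_{\mathrm{HN}}\,d\log(1/\delta)/n$.

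\textbf{Main obstacle.} The main obstacle is making the loss bounded or sub-Gaussian so that concentration applies: $s^*$ and $x$ are unbounded under $p_t$. I would truncate to $\|x\|\le C\sqrt{d\log n}$, using Gaussian tails, and control the truncated mass separately. The other delicate point is that the linear component of $K_x$ must have a weight independent of $d$. If that fails, one has to fall back on the constant being $(\mathcal M,\sigma,L)$-dependent through $\tau$.
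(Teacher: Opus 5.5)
Your outline follows the same skeleton as the paper's proof: an affine target from Lemma~\ref{lem:affine_score}, a Fourier expansion in $t$, a Rademacher bound with $d$-free feature norm, and an $e^{-T}$ residual. However, the steps meant to produce exactly the right-hand side of \eqref{eq:phase1_bound} fail.

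In Step~1 the damping runs the wrong way. The bound $a_t^3R_{\mathcal M}^3/h(t)^3$ is \emph{largest} at $t_{\max}$, where $h(t)\asymp\tau^2$ and $a_t\approx 1$, so the remainder is of size $R_{\mathcal M}^3/\tau^6$ there. The decay $a_t^2\approx e^{-t}$ only helps at large $t$, and averaging against $dt/(T-t_{\max})$ leaves a term of order $1/(T-t_{\max})$ (with a $\tau$-dependent constant), not $e^{-T}$. This is not an artifact of the bound: at noise level $h\asymp\tau^2$ the score $-x/h(t)+\tfrac{a_t}{h(t)}\mathbb{E}[x_0\mid x_t=x]$ is genuinely non-affine, since the posterior mean is a smoothed projection onto $\mathcal M$.

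More generally, four terms are independent of $n$: this remainder, the $O(m^{-1})$ random-feature error, the Fourier truncation error, and the ridge bias $\lambda\|U^\star\|_F^2$ (unavoidable with $\lambda$ fixed). None of them can be ``absorbed into the constant'' of a term that decays like $d\log(1/\delta)/n$. Any valid argument yields an oracle inequality $\inf_U\{\mathcal R(U)+\lambda\|U\|_F^2\}+(\text{estimation})$, and the first piece has no counterpart in \eqref{eq:phase1_bound}. The paper's Step~1 makes the same ``absorbed into $C_{\mathrm{HN}}$'' move, so following it does not repair this. You would need an explicit approximation/bias term, or $\lambda\to 0$ and $m,L\to\infty$ jointly with $n$.

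The point you flagged as delicate in Step~2 does fail. Let $\Sigma_t:=\mathbb{E}_{p_t}[xx^\top]\succeq h(t)I_d$. For any features with $\|\Psi\|\le C$ and any $U$, $|\mathbb{E}_{p_t}\langle U\Psi,x\rangle|\le\|U\|_F\,\|\mathbb{E}[x\Psi^\top]\|_F\le\|U\|_F\,C\,\lambda_{\max}(\Sigma_t)^{1/2}$, while $\mathbb{E}_{p_t}\|x/h(t)\|^2=\mathrm{tr}(\Sigma_t)/h(t)^2\ge d/h(t)$. Hence fitting the $-x/h(t)$ part of the score forces $\|U\|_F\gtrsim d$ once $d\gg R_{\mathcal M}^2/\tau^2$, i.e.\ $\|U^\star\|_F^2\gtrsim d^2$. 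Equivalently, the linear component of $K_x$ carries only $O(1/d)$ weight per direction on average, so a coordinate map $x_i$ typically has squared RKHS norm of order $d$, not $O(1)$. Your comparison with $U^\star$ therefore gives $B_U^2\lesssim d^2$ and a bias $\lambda\|U^\star\|_F^2\gtrsim\lambda d^2$, not the claimed $d$.

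The paper instead compares with $U=0$ to get $\|\hat U\|_F^2=O(d/\lambda)$. By the same inequality, that budget cannot fit $-x/h(t)$, so for large $d$ the population risk of $\hat U$ stays of order $d$ as $n,T\to\infty$. In other words, \eqref{eq:phase1_bound} cannot hold as written with $\lambda$ fixed.

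Finally, the ``cruder alternative'' of squaring a $1/\sqrt n$ Rademacher bound is invalid. That bound already controls the excess \emph{squared} loss, so squaring it does not bound the same quantity; the paper's Step~2 makes this same move. A genuine fast-rate argument, via local Rademacher complexity or a direct analysis of the finite-dimensional ridge problem with the truncation you describe, controls the risk only relative to $\inf_U\{\mathcal R(U)+\lambda\|U\|_F^2\}$. That brings back the missing $n$-independent term identified above.
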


\begin{proof}
The proof proceeds in three steps: approximation, estimation, and the residual at $T$.

\textbf{Step 1 (Approximation).} By Lemma~\ref{lem:affine_score}, the dominant terms of $s^*(x, t)$ in Phase~I form the affine target
\begin{equation}
    s^*_{\mathrm{aff}}(x, t) := -\frac{x}{h(t)} + \frac{a_t}{h(t)} \mu_t + \frac{a_t^2}{h(t)^2} C_t \, x = g_0(t) + g_1(t) \cdot x,
\end{equation}
with $g_0(t) = a_t \mu_t / h(t)$ and $g_1(t) = -I_d / h(t) + a_t^2 C_t / h(t)^2$. Both $g_0$ and $g_1$ are real-analytic in $t$ on $[t_{\max}, T]$ under standard noise schedules (linear or cosine $\beta_t$), so their Fourier expansions on $[t_{\max}, T]$ decay super-polynomially. Consequently, there exists $L_0 = L_0(\beta_t)$ such that for $L \geq L_0$,
\begin{equation}
    \min_{U} \sup_{(x, t)} \big\| s^{\mathrm{HN}}_\theta(x, t) - s^*_{\mathrm{aff}}(x, t) \big\| \;\leq\; O(L^{-r_{\mathrm{sched}}}),
\end{equation}
where $r_{\mathrm{sched}}$ is the analytic decay rate of the Fourier coefficients. Combining with the cubic remainder from Lemma~\ref{lem:affine_score}, which is uniformly bounded by $R_{\mathcal{M}}^3$ on $\mathrm{supp}(p_t)$ since $a_t^3 / h(t)^3 \leq 1$ in Phase~I, the total approximation error is absorbed into $C_{\mathrm{HN}}$.

\textbf{Step 2 (Estimation via Rademacher complexity).} The hypothesis class is
\begin{equation}
    \mathcal{F}_{\mathrm{HN}} := \left\{ s^{\mathrm{HN}}_\theta \;:\; \sum_\ell \|U_\ell\|_F^2 \leq B^2 \right\},
\end{equation}
with $B$ controlled by $\lambda$ via the standard ridge bound $B \leq \|s^*_{\mathrm{aff}}\|_{L^2} / \sqrt{\lambda}$. Since $\|\mu_t\| \leq R_{\mathcal{M}}$ and $\|C_t\|_F \leq \sqrt{d} \cdot R_{\mathcal{M}}^2$ (the latter using that $C_t$ has $d$ eigenvalues each bounded by $R_{\mathcal{M}}^2$),
\begin{equation}
    \mathbb{E}_{t, x_t} \|s^*_{\mathrm{aff}}(x_t, t)\|^2 \;\leq\; C \cdot d \cdot \mathrm{poly}(R_{\mathcal{M}}, h_{\min}^{-1}),
\end{equation}
where $h_{\min} := h(t_{\max}) \asymp \tau^2$ is bounded away from zero throughout Phase~I. By Lemma~\ref{lem:bounded-features} (adapted to the spatial input domain restricted to a ball of radius $O(\sqrt{d \cdot h_{\max}})$), the feature map has bounded $\ell^2$ norm independently of $d$ and $m$. The empirical Rademacher complexity is therefore
\begin{equation}
    \widehat{\mathfrak{R}}_n(\mathcal{F}_{\mathrm{HN}}) \;\leq\; \frac{B \cdot C_\sigma \sqrt{L}}{\sqrt{n}}.
\end{equation}
Standard Rademacher-to-generalization conversion yields excess risk $O(B^2 C_\sigma^2 L \log(1/\delta) / n)$. Substituting $B^2 = O(d / \lambda)$ from the affine target's $L^2$ norm gives the parametric bound
\begin{equation}
    \mathbb{E} \big\| s^{\mathrm{HN}}_{\hat\theta}(x_t, t) - s^*_{\mathrm{aff}}(x_t, t) \big\|^2 \;\leq\; \frac{C_{\mathrm{HN}} \cdot d \cdot \log(1/\delta)}{n}.
\end{equation}

\textbf{Step 3 (Residual at $T$).} Under the VP-SDE, the marginal $p_T$ converges to $\mathcal{N}(0, I_d)$ exponentially fast. Specifically,
\begin{equation}
    \mathrm{KL}(p_T \,\|\, \mathcal{N}(0, I_d)) \;\leq\; C_{\mathcal{M}}^{(2)} \cdot e^{-T},
\end{equation}
which translates by Pinsker's inequality and Lemma~\ref{lem:affine_score} into an $O(e^{-T})$ contribution in (\ref{eq:phase1_bound}).

Combining Steps 1--3 yields the stated bound.
\end{proof}

\paragraph{End-to-end argument.} Theorem~\ref{thm:end2end} now follows by decomposing the reverse-SDE Girsanov bound across the two phases and applying the KL-to-$W_2$ interpolation inequality.

\begin{proof}[Proof of Theorem~\ref{thm:end2end}]
We bound $\mathrm{KL}(p_{\mathrm{data}, t_{\min}} \,\|\, p_{\mathrm{gen}, t_{\min}})$ via Girsanov's theorem, then convert to $W_2$ through the standard interpolation inequality of~\cite{benton2023nearly}.

\textbf{Step 1 (Girsanov decomposition).} Let $(\overleftarrow{x}_t)_{t \in [t_{\min}, T]}$ denote the true reverse process and $(\overleftarrow{y}_t)$ the simulated process driven by $s^{\mathrm{full}}_{\hat\theta}$. Standard Girsanov estimates~\cite{de2022convergence} yield
\begin{equation}\label{eq:girsanov}
    \mathrm{KL}\bigl(p_{\mathrm{data}, t_{\min}} \,\|\, p_{\mathrm{gen}, t_{\min}}\bigr)
    \;\leq\;
    \frac{1}{2} \int_{t_{\min}}^{T} \mathbb{E}_{x_t \sim p_t} \big\| s^{\mathrm{full}}_{\hat\theta}(x_t, t) - s^*(x_t, t) \big\|^2 \, dt.
\end{equation}

\textbf{Step 2 (Phase decomposition).} Using $\alpha(t) = \mathbf{1}[h(t) \leq \tau^2]$, the gated architecture (\ref{eq:full_score}) coincides with $s^{\mathrm{SiLD}}_{\hat\theta}$ on $[t_{\min}, t_{\max}]$ and with $s^{\mathrm{HN}}_{\hat\theta}$ on $[t_{\max}, T]$. Splitting the integral in (\ref{eq:girsanov}) at $t_{\max}$,
\begin{align}
    \mathrm{KL}\bigl(p_{\mathrm{data}, t_{\min}} \,\|\, p_{\mathrm{gen}, t_{\min}}\bigr)
    &\leq \tfrac{1}{2} \int_{t_{\min}}^{t_{\max}} \mathbb{E} \big\| s^{\mathrm{SiLD}}_{\hat\theta} - s^* \big\|^2 dt + \tfrac{1}{2} \int_{t_{\max}}^{T} \mathbb{E} \big\| s^{\mathrm{HN}}_{\hat\theta} - s^* \big\|^2 dt \notag \\
    &\leq \underbrace{O\!\left(\frac{\mathrm{poly}(k) \log(1/\delta)}{\sqrt{n}}\right)}_{\text{Phase I, by Theorems~\ref{thm:dimensional_collapse} and~\ref{thm:stage2_generalization}}}
    + \underbrace{O\!\left(\frac{C_{\mathrm{HN}} \cdot d \cdot \log(1/\delta)}{n} + e^{-T}\right)}_{\text{Phase II, by Lemma~\ref{lem:phase1_gen}}}.
\end{align}
The Phase~II term carries the $e^{-T}$ damping because $p_t \to \mathcal{N}(0, I_d)$ exponentially as $t \to T$, so the score-error integrand on $[t_{\max}, T]$ is dominated by the contribution near $T$ where the affine approximation is exact up to $O(e^{-T})$.

\textbf{Step 3 (KL-to-$W_2$ conversion).} With $t_{\min} \propto 1/n$, the interpolation inequality~\cite[Lemma~3]{benton2023nearly} gives
\begin{equation}
    W_2(p_{\mathrm{data}}, p_{\mathrm{gen}, t_{\min}}) \;\lesssim\; \sqrt{\mathrm{KL}}\, \cdot n^{-1/4} + n^{-1/2}.
\end{equation}
Substituting Step~2 yields (\ref{eq:end2end_w2}), completing the proof.
\end{proof}

\section{Experimental Details and Additional Results}
\label{app:experiments}

\subsection{Toy experiment: full setup}
\label{app:toy_details}

We provide the complete specification of the synthetic experiment.

\paragraph{Data generation.}
We consider a manifold-plus-noise model in $\mathbb{R}^d$ with ambient dimension $d=100$ and intrinsic dimension $k=5$. Let $A \in \mathbb{R}^{d \times k}$ be a fixed orthonormal basis of the true manifold subspace, drawn uniformly from the Stiefel manifold once at the start of the experiment. Clean samples are generated as $x_0 = Az$, where the latent code $z \in \mathbb{R}^k$ follows a mixture of three Gaussians:
\begin{equation*}
    z \;\sim\; \sum_{c=1}^{3} \pi_c\, \mathcal{N}(\mu_c, 0.5^2\, I_k),
\end{equation*}
with uniform mixing weights $\pi_c = 1/3$ and means $\{\mu_c\}_{c=1}^3$ placed at equidistant points on a circle of radius $2$ in the first two coordinates of $\mathbb{R}^k$. Noisy observations are
\begin{equation*}
    x_t \;=\; x_0 + \sigma_t\, \varepsilon, \qquad \varepsilon \sim \mathcal{N}(0, I_d),
\end{equation*}
with $\sigma_t = 0.1$, a small-noise regime in which the score singularity is pronounced and Proposition~\ref{prop:score_decomp} applies.

\paragraph{Model.}
We train the two-stage score model of Section~\ref{sec:method} with hidden width $m=200$. Stage~1 uses the conservative-form network~\eqref{eq:stage1_net} with $\tanh$ activation. Stage~2 is a Random Feature network with frozen spatial features $V_x \in \mathbb{R}^{d \times m}$ drawn i.i.d.\ from $\mathcal{N}(0, I_d/d)$.

\paragraph{Optimization.}
Both stages are trained with Adam. Stage~1 uses learning rate $\eta_1 = 10^{-3}$; Stage~2 uses $\eta_2 = 5 \times 10^{-3}$. Batch size is $4096$ for both stages. Stage~1 is trained until the orthogonal-component error plateaus, after which $W$ is frozen and Stage~2 is trained on the ridge-regression objective of~\eqref{eq:ridge}.

\paragraph{Diagnostics.}
Because the data-generating process is analytically tractable, we decompose the total score estimation error $\mathbb{E}\|\hat{s}(x_t) - s^*(x_t)\|_2^2$ into its manifold and orthogonal components exactly, using the ground-truth projection $\Pi_{\mathcal{M}} = AA^\top$. Writing $x_t = AA^\top x_t + (I - AA^\top)x_t$, the orthogonal component of the target score is $-(1/h_t)(I - AA^\top)x_t$, and the manifold component is the MoG score evaluated at $A^\top x_0$. The red and green curves in Figure~\ref{fig:mog_manifold_two_stage} (left) track these two components independently throughout training, providing a direct empirical probe of the stage-by-stage learning dynamics predicted by Theorem~\ref{thm:dimensional_collapse}.

\subsection{Stacked MNIST qualitative samples}
\label{app:stacked_mnist_samples}

Figure~\ref{fig:stacked_mnist_samples} shows uncurated samples from both methods on Stacked MNIST, alongside real samples for reference. Both methods achieve full $1000/1000$ mode coverage (Table~\ref{tab:stacked_mnist}).

\begin{figure}[h]
\centering
\includegraphics[width=\linewidth]{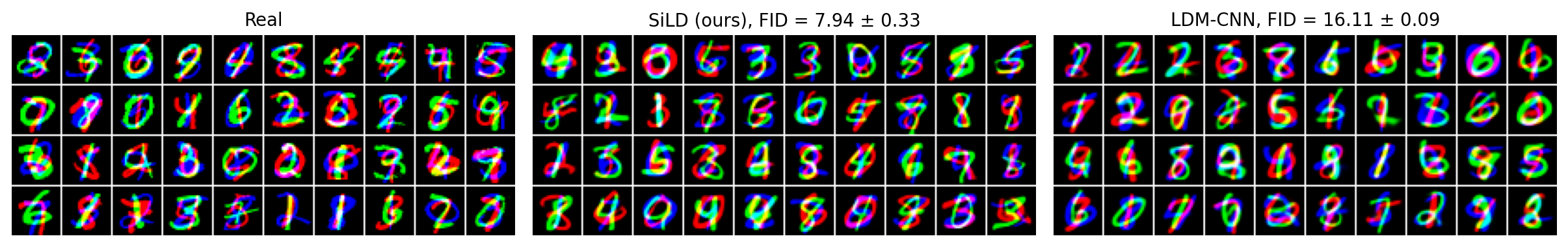}
\caption{Uncurated samples from Stacked MNIST. Each image is three random MNIST digits stacked into the R/G/B channels (1000 possible modes). \emph{Left:} real samples. \emph{Middle:} SiLD. \emph{Right:} LDM-CNN.}
\label{fig:stacked_mnist_samples}
\end{figure}

\subsection{CelebA denoising comparison}
\label{app:celeba_denoise}

Figure~\ref{fig:celeba_denoise} shows the learned encoder-decoder applied to Gaussian-corrupted CelebA inputs without any diffusion steps, illustrating the Stage-1 objective in isolation. The SiLD Stage-1 loss trains the encoder-decoder as a denoising operator on noisy inputs, while LDM's VAE objective optimizes reconstruction of clean inputs with KL regularization.

\begin{figure}[h]
\centering
\includegraphics[width=\linewidth]{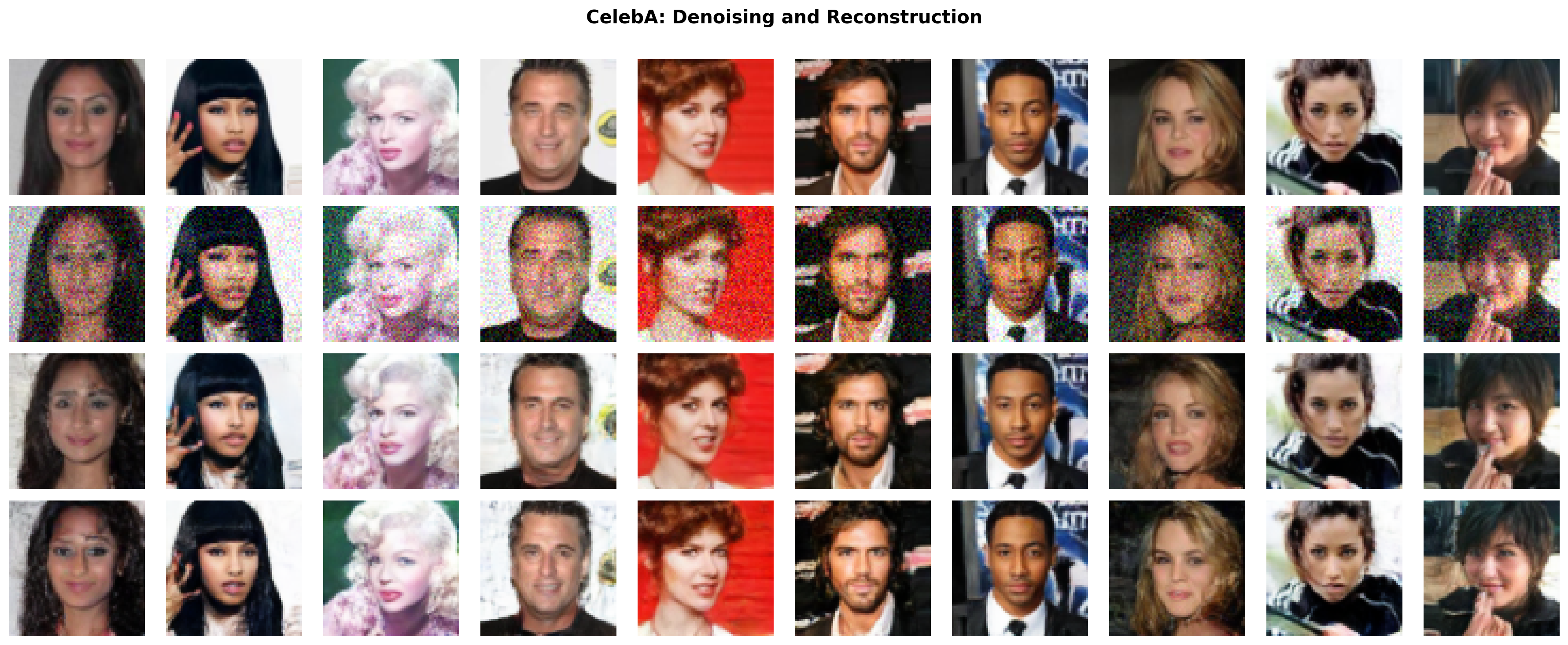}
\caption{Denoising and reconstruction on CelebA ($64\times64$). Encoder-decoder applied to Gaussian-corrupted inputs ($\sigma=0.1$), no diffusion steps. \emph{Row 1:} Original. \emph{Row 2:} Noisy input. \emph{Row 3:} LDM+GAN. \emph{Row 4:} SiLD+GAN.}
\label{fig:celeba_denoise}
\end{figure}

\subsection{CelebA-HQ results}
\label{app:celebahq}

We extend the CelebA comparison to CelebA-HQ ($64\times64\times3$, ambient dimension $d = 12{,}288$, estimated intrinsic dimension $k_{95} = 223$ via PCA at 95\% variance). All models use latent dimension 256. Table~\ref{tab:celebahq} reports results across two model sizes and two training budgets.

SiLD consistently outperforms LDM-CNN on reconstruction MSE across all settings, with the gap present at the smaller network ($0.00440$ vs.\ $0.00503$) and persisting at the larger network ($0.00345$ vs.\ $0.00396$). At $10\times$ training, both methods converge to near-identical reconstruction MSE ($0.00061$ vs.\ $0.00063$), suggesting that reconstruction capacity saturates at sufficient optimization; at this scale LDM-CNN achieves slightly better Eps Loss ($0.261$ vs.\ $0.296$). The overall pattern is consistent with CelebA: SiLD's Stage-1 objective produces a latent more faithful to the data manifold for reconstruction, while the generation metric depends additionally on the latent's amenability to diffusion.

\begin{table}[h]
\centering
\caption{CelebA-HQ results ($64\times64\times3$, $d = 12{,}288$, intrinsic dim $k_{95} = 223$). All models use latent dim 256.}
\label{tab:celebahq}
\begin{tabular}{lcccccc}
\toprule
\textbf{Model} & \textbf{Base Ch} & \textbf{AE Steps} & \textbf{Diff Steps} & \textbf{LPIPS} & \textbf{Eps Loss} & \textbf{Recon MSE} \\
\midrule
\multicolumn{7}{l}{\emph{Small network (33M params)}} \\
SiLD (ours)  & 64  & 10k  & 30k  & 0.5 & 0.380 & \textbf{0.00440} \\
LDM-CNN      & 64  & 10k  & 30k  & 0.5 & \textbf{0.305} & 0.00503 \\
\midrule
\multicolumn{7}{l}{\emph{Larger network (104M params)}} \\
SiLD (ours)  & 128 & 10k  & 30k  & 0.5 & 0.326 & \textbf{0.00345} \\
LDM-CNN      & 128 & 10k  & 30k  & 0.5 & \textbf{0.271} & 0.00396 \\
\midrule
\multicolumn{7}{l}{\emph{Larger network + 10$\times$ training}} \\
SiLD (ours)  & 128 & 100k & 300k & 0.5 & 0.296 & \textbf{0.00061} \\
LDM-CNN      & 128 & 100k & 300k & 0.5 & \textbf{0.261} & 0.00063 \\
\bottomrule
\end{tabular}
\end{table}

\subsection{Compute Resources}\label{app:compute}
All experiments were conducted on a single NVIDIA A100 (40\,GB) GPU;
times below are {wall-clock training time per configuration},
excluding scheduler queue waits. Stacked MNIST
(Table~\ref{tab:stacked_mnist}) takes $\approx 50$\,min. CelebA
($64{\times}64$, Table~\ref{tab:celeba}) ranges from $1$--$2$\,h for
the LDM/SiLD baselines, $\approx\!3.3$\,h for the MMD $\alpha$-sweep,
and $\approx 12.4$\,h for each GAN-regularized row, totalling
$\approx 22$\,GPU-hours. MoleculeNet
(Table~\ref{tab:main_results}, four datasets) takes $0.4$--$2.6$\,h per
(method, dataset) configuration, totalling $\approx 16.5$\,GPU-hours.
CelebA-HQ (Table~\ref{tab:celebahq}) base configurations run in
$1$--$2$\,h; the $10\times$ training row ($100$k AE\,+\,$300$k
diffusion steps, $104$M parameters) is the most expensive single
configuration at $\approx\!17$\,h per method, with the table totalling
$\approx 49$\,GPU-hours. The reported experiments thus account for
$\approx 90$ GPU-hours; including ablations, hyperparameter sweeps,
and preliminary architecture exploration, total measured wall time
across all jobs is approximately $320$ GPU-hours.

\end{document}